\documentclass[11pt]{article}

\usepackage[margin=1in]{geometry}
\usepackage{times}
\usepackage[numbers,sort&compress]{natbib}
\usepackage{hyphenat}
\usepackage[utf8]{inputenc}
\usepackage[T1]{fontenc}
\usepackage{hyperref}
\usepackage{url}
\usepackage{booktabs}
\usepackage{amsfonts}
\usepackage{nicefrac}
\usepackage{microtype}
\usepackage{xcolor}
\usepackage{multirow}
\usepackage{caption}
\usepackage{subcaption}
\usepackage{graphicx}
\usepackage{amsmath}
\usepackage{amssymb}
\usepackage{amsthm}
\usepackage{bm}
\usepackage{longtable}
\usepackage{pdflscape}
\usepackage{array}
\usepackage{tabularx}

\newtheorem{theorem}{Theorem}[section]
\newtheorem{proposition}[theorem]{Proposition}
\newtheorem{lemma}[theorem]{Lemma}

\theoremstyle{definition}
\newtheorem{definition}[theorem]{Definition}
\newtheorem{assumption}[theorem]{Assumption}

\theoremstyle{remark}

\usepackage{tikz}
\usetikzlibrary{positioning}
\usetikzlibrary{fit}

\title{Coupled Usage--Sense Processes: Temporal and Attributable Lexical Semantic Change}

\author{%
  Haruka Ezoe\\
  Graduate School of Information Science and Technology\\
  The University of Tokyo\\
  \and
  Ryohei Hisano \\
  Graduate School of Information Science and Technology\\
  The University of Tokyo\\
  The Canon Institute for Global Studies
}
\date{}

\begin{document}

\maketitle

%% FIN
\begin{abstract}
Lexical semantic change is usually summarized by a scalar distance between independently sampled period distributions. This measures how much a word changed, but does not reveal when it changed, which mechanisms and component movements carried the change, or which usages support the attribution. We introduce Coupled Usage--Sense Processes (CUSP), which derives these answers from a single marginal preserving temporal process. A hierarchical coupling relates contextual distributions through latent usage components, while Markov composition makes adjacent and longer span correspondences compatible. Displacement operators quantify change magnitude and timing, split variation exactly between movement of component centers and reorganization within components, and attribute it to transported component pairs. Word-local modes resolve distinct directions of change and their activity over time, while representative passages from attributed components ground the analysis in text. Under a Gaussian mixture specialization, we prove parametric recovery of the operators and squared distances. Synthetic experiments support the predicted rate. CUSP remains competitive on English and German DWUG and recovers controlled Janus profiles while maintaining compositionally coherent transport. A large corpus of US court opinions demonstrates transition, mode, and passage attribution in unlabeled natural text. CUSP thus makes magnitude, timing, mechanism, movement, modes, and textual evidence compatible views of one lexical history.
\end{abstract}

%% FIN
\section{Introduction}
\label{sec:introduction}

Computational lexical semantic change (LSC) studies how a word’s contextual usages evolve. Standard evaluations condense this history to a scalar distance between periods. A score cannot reveal when change occurred, whether it reflects movement between usage components or reorganization within them, which transitions carried it, or which usages support the attribution. A multi-period analysis should answer these questions as compatible views of one lexical history.

Prior work addresses individual parts of this problem. Aligned and dynamic embeddings track changes in a word's distributional representation across periods \citep{hamilton2016diachronic,bamler2017dynamic}. Contextual approaches compare individual usages directly or cluster them into usage types, then measure change with distances, divergences, or optimal transport \citep{giulianelli2020analysing,periti2024systematic,montariol2021transport}. Sense based methods track prevalence, cluster continuity, and contributing senses \citep{frermann2016bayesian,hu2019diachronic,periti2025studying,kolli2026wordcentered,kokosinskii2026granularity}, while diachronic similarity matrices characterize temporal patterns \citep{kiyama2025analyzing}. Unbalanced transport identifies gains and losses in usage mass \citep{kishino2025quantifying}, and embedding axes aid interpretation \citep{aida2025investigating,aida2025scdtour}. \citet{chen2026lexical} identify interpretability as a major unresolved challenge, noting that current models often cannot explain how or why meanings change and explicitly interpretable approaches remain at an early stage. Prior methods already track senses and attribute change (Table~\ref{tab:method_comparison_compact}). To our knowledge, no existing LSC method builds one process that preserves period marginals and makes correspondence consistent across time, exactly decomposes its change score by mechanism and transported component pair, and links its word-local modes and attributed movements to representative passages.

\begin{figure*}[t]
\centering
\definecolor{cuspnavy}{HTML}{17365D}
\definecolor{cuspblue}{HTML}{0072B2}
\definecolor{cuspsky}{HTML}{56B4E9}
\definecolor{cuspgreen}{HTML}{009E73}
\definecolor{cusporange}{HTML}{D98C00}
\definecolor{cusppurple}{HTML}{A95A91}
\begin{tikzpicture}[
>=latex,
node distance=0.28cm,
every node/.style={align=center},
stage/.style={rounded corners=3pt,text width=3.0cm,minimum height=1.05cm,font=\sffamily\small,line width=0.75pt},
readout/.style={rounded corners=3pt,text width=3.0cm,minimum height=1.42cm,font=\sffamily\footnotesize,line width=0.65pt},
flow/.style={->,draw=cuspnavy,line width=0.85pt},
bus/.style={draw=cuspnavy,line width=0.85pt}
]
\node[stage,draw=cuspblue,fill=cuspsky!14] (a)
{period specific\\usage distributions\\$\{\gamma_t^{(w)}\}_{t=1}^{T}$};

\node[stage,draw=cuspblue,fill=cuspblue!10,right=of a] (b)
{hierarchical coupling\\of components and usages};

\node[stage,draw=cuspgreen,fill=cuspgreen!10,right=of b] (c)
{coherent temporal process\\preserving every period marginal};

\node[stage,draw=cuspnavy,fill=cuspnavy,text=white,right=of c] (d)
{displacement operators\\$\mathbf M_{\bullet}^{(w)}(t,s)$};

\path (b.north) -- (c.north)
node[midway,above=0.18cm,font=\sffamily\bfseries\footnotesize,text=cuspnavy]
{CUSP: ONE COUPLED TEMPORAL PROCESS, FOUR COMPATIBLE LSC READOUTS};

\draw[flow] (a) -- (b);
\draw[flow] (b) -- (c);
\draw[flow] (c) -- (d);

\node[readout,draw=cuspblue,fill=cuspblue!6,below=0.78cm of a] (o1)
{{\color{cuspblue}\textbf{How much and when?}}\\
change magnitude and\\temporal profile};

\node[readout,draw=cusporange,fill=cusporange!7,below=0.78cm of b] (o2)
{{\color{cusporange}\textbf{Which mechanism?}}\\
movement between centers\\and variation within components};

\node[readout,draw=cuspgreen,fill=cuspgreen!6,below=0.78cm of c] (o3)
{{\color{cuspgreen}\textbf{Which component movement?}}\\
transported component pairs\\and supporting passages};

\node[readout,draw=cusppurple,fill=cusppurple!7,below=0.78cm of d] (o4)
{{\color{cusppurple}\textbf{Which directions?}}\\
word-local modes\\over time};

\path (d.south) -- (o4.north) coordinate[midway] (hub);
\draw[bus] (d.south) -- (hub);
\fill[cuspnavy] (hub) circle (1.15pt);
\draw[flow] (hub) -| (o1.north);
\draw[flow] (hub) -| (o2.north);
\draw[flow] (hub) -| (o3.north);
\draw[flow] (hub) -- (o4.north);
\end{tikzpicture}
\caption{CUSP turns independent period distributions into one marginal preserving temporal process. Its displacement operators yield compatible LSC readouts of magnitude and timing, mechanism, component and passage attribution, and word-local temporal modes.}
\label{fig:cusp_process_pipeline}
\end{figure*}

Diachronic corpora provide independent samples from each period's usage distribution, not correspondences between periods. A directly optimized $t{\to}t{+}2$ transport plan can disagree with the plan composed through $t{+}1$. Scores and transition explanations built from those separate plans can therefore describe different histories. What is needed is one process that preserves each period marginal and makes all readouts refer to the same temporal correspondence.

\noindent\textbf{We introduce Coupled Usage--Sense Processes (CUSP) to construct exactly this object (Figure~\ref{fig:cusp_process_pipeline}).} CUSP makes three technical contributions. (i) A hierarchical coupling matches contextual distributions within component pairs, transports their prevalence mass with the prescribed marginals, and composes adjacent plans into coherent longer span correspondence. (ii) Displacement operators quantify magnitude and timing and decompose change exactly by center movement, within component variation, and transported component pair. The summed mean operator yields word-local modes, while representative passages make the attributed movements inspectable. (iii) Building on the second-moment geometry of MENT \citep{ezoe2026multiscale}, we prove marginal preservation, temporal compatibility, exact decompositions, and parametric recovery of the operators and squared distances for the trace and retained modes under identification and optimization stability conditions. These are compatible readouts of one fitted temporal process.

Empirically, we test each level of this construction in increasing proximity to natural LSC. Synthetic data support the predicted recovery rate. DWUG establishes that the operator trace retains standard scalar validity while its exact decomposition distinguishes mechanisms. Janus tests temporal profile recovery and compositional coherence across several periods. A large corpus of U.S.\ court opinions shows that the complete construction connects an unsupervised temporal signal to component movements, word-local modes, and directly inspectable passages. Together, these settings show that CUSP preserves the conventional answer to \emph{how much} while adding compatible answers to \emph{when}, \emph{through which movement}, and \emph{with what textual evidence}.

\section{Coupled Usage--Sense Processes}
\label{sec:coupled_processes}

A fixed encoder places independently sampled usages from each period in a common embedding space, but shared coordinates do not determine temporal correspondence. CUSP represents each period distribution as a mixture of latent usage components and couples their prevalences and contextual distributions into a joint process whose period marginals reproduce the observed distributions.
\subsection{Usage distributions in each period}

Let \(V\) be the vocabulary and \([T]=\{1,\ldots,T\}\) the periods. For word \(w\in V\) and period \(t\in[T]\), let
\[
X_{t,i}^{(w)}\overset{\mathrm{i.i.d.}}{\sim}\gamma_t^{(w)},\qquad
\gamma_t^{(w)}=\sum_{k=1}^{K_t^{(w)}}\pi_{t,k}^{(w)}\nu_{t,k}^{(w)},\qquad
\nu_{t,k}^{(w)}\in\mathcal P_2(\mathbb R^d).
\]
Here, \(X_{t,i}^{(w)}\) is a contextual usage embedding, \(K_t^{(w)}\) counts components, \(\pi_{t,k}^{(w)}\) are positive prevalence weights summing to one, and \(\nu_{t,k}^{(w)}\) is the contextual distribution of component \(k\). Components are period local, so equal indices across periods do not imply correspondence. Mixtures are fitted independently by period, and their marginals do not identify temporal correspondence. After coupling, prevalence changes describe relative mass redistribution, while component changes describe movement or reshaping. 

%%%% BEFORE
%Each sample identifies its period marginal but not temporal correspondence. After coupling, changes in prevalence describe the redistribution of usage mass, while changes in component distributions describe movement or reshaping.
%Each period sample identifies its own marginal distribution, but no observation links a usage at \(t\) to one at \(s\). Changes in \(\pi_{t,k}^{(w)}\) represent redistribution among components, whereas changes in \(\nu_{t,k}^{(w)}\) represent movement or reshaping of their contextual distributions.
%%Here \(X_{t,i}^{(w)}\) is the contextual embedding of usage \(i\), \(\pi_{t,k}^{(w)}>0\) is the prevalence of Sense component \(k\), and \(\nu_{t,k}^{(w)}\) is its contextual distribution. We assume that the mixture is identifiable up to permutation of component labels.

\subsection{Hierarchical coupling and Markov composition}

We use optimal transport hierarchically: contextual distributions are coupled within each candidate component pair, and their induced costs define transport between component prevalences, following the general structure of mixture and hierarchical OT \citep{chen2019optimal,delon2020wasserstein,yurochkin2019hierarchical}. For each adjacent pair \((t,t+1)\) and possible component transition \(k\to\ell\), CUSP first selects a contextual coupling
\begin{equation}
\eta_{t,k,\ell}^{(w)}
=
\arg\min_{\eta\in\mathcal A_{t,k,\ell}^{(w)}}
\int c(x,y)\,d\eta(x,y),
\qquad
\mathcal A_{t,k,\ell}^{(w)}
\subseteq
\Pi\!\left(\nu_{t,k}^{(w)},\nu_{t+1,\ell}^{(w)}\right),
\label{eq:pairwise_distribution_coupling}
\end{equation}
where \(\mathcal A_{t,k,\ell}^{(w)}\) is the admissible class of contextual couplings for the component pair and equals \(\Pi(\nu_{t,k}^{(w)},\nu_{t+1,\ell}^{(w)})\) in the Gaussian specialization below. We denote the minimum cost by \(C_{t,k,\ell}^{(w)}\). CUSP then transports the component prevalences using
\begin{equation}
\begin{aligned}
Q_t^{(w)}
=
\arg\min_{Q}\quad&
\sum_{k,\ell}Q(k,\ell)C_{t,k,\ell}^{(w)}
\\
\text{s.t.}\quad&
\sum_{\ell}Q(k,\ell)=\pi_{t,k}^{(w)},\qquad
\sum_kQ(k,\ell)=\pi_{t+1,\ell}^{(w)},\qquad
Q(k,\ell)\geq0.
\end{aligned}
\label{eq:sense_transition_distribution}
\end{equation}
The two transport levels have distinct roles. \(C_{t,k,\ell}^{(w)}\) measures the contextual cost of a candidate component transition, while \(Q_t^{(w)}\) allocates prevalence mass among those candidates subject to the two observed period marginals.
%%Thus \(Q_t^{(w)}(k,\ell)\) is the mass assigned to transition \(k\to\ell\). We assume that the contextual and Sense-level optimization problems have unique minimizers.

\begin{assumption}[Uniqueness of the coupling construction]
\label{ass:coupling_construction_uniqueness}
For every adjacent period pair, each problem in \eqref{eq:pairwise_distribution_coupling} has a unique finite-cost minimizer, and \eqref{eq:sense_transition_distribution} has a unique minimizer.
\end{assumption}

%%%% BEFORE
%CUSP fixes the corresponding Markov composition so that all non-adjacent correspondences are induced by the same adjacent plans.

The adjacent plans share the prescribed intermediate marginals and can therefore be composed into a joint coupling \citep{peyre2019computational}. CUSP uses Markov composition to induce non-adjacent correspondences from adjacent plans.  Specifically, let \((\Omega,\mathcal F,\mathbb P)\) be a probability space on which the latent component process \(Z_{1:T}^{(w)}\) and the contextual state process \(X_{1:T}^{(w)}\) are defined. The adjacent plans define a Markov chain over usage components local to each period:
\[
\Pr(Z_1^{(w)}=k)=\pi_{1,k}^{(w)},\qquad
P_t^{(w)}(k,\ell)
:=
\Pr(Z_{t+1}^{(w)}=\ell\mid Z_t^{(w)}=k)
=
\frac{Q_t^{(w)}(k,\ell)}{\pi_{t,k}^{(w)}}.
\]
Disintegrating \(\eta_{t,k,\ell}^{(w)}(dx,dy)=\nu_{t,k}^{(w)}(dx)\mathcal K_{t,k,\ell}^{(w)}(x,dy)\) defines the corresponding contextual process conditional on the entire latent component sequence \(Z_{1:T}^{(w)}\):
\[
X_1^{(w)}\mid Z_{1:T}^{(w)}
\sim
\nu_{1,Z_1^{(w)}}^{(w)},
\qquad
X_{t+1}^{(w)}\mid (X_t^{(w)},Z_{1:T}^{(w)})
\sim \mathcal K_{t,Z_t^{(w)},Z_{t+1}^{(w)}}^{(w)}(X_t^{(w)},\cdot).
\]

The construction preserves both levels of the observed mixture representation.
\begin{proposition}[Marginal preservation and temporal compatibility]
\label{prop:marginal_preservation}
For every \(w\in V\), \(t\in[T]\), and \(k\in[K_t^{(w)}]\),
\[
\Pr(Z_t^{(w)}=k)=\pi_{t,k}^{(w)},\qquad
\mathcal L(X_t^{(w)}\mid Z_{1:T}^{(w)})=\nu_{t,Z_t^{(w)}}^{(w)}.
\]
Consequently,
\begin{equation}
\mathcal L(X_1^{(w)},\ldots,X_T^{(w)})
\in
\Pi(\gamma_1^{(w)},\ldots,\gamma_T^{(w)}).
\label{eq:joint_coupling_marginals}
\end{equation}
% For \(s>t\), its non-adjacent Sense plan is induced by the same process:
% \[
% R_{t,s}^{(w)}=P_t^{(w)}P_{t+1}^{(w)}\cdots P_{s-1}^{(w)},\qquad
% q^{(w)}(t,s;k,\ell)=\pi_{t,k}^{(w)}R_{t,s}^{(w)}(k,\ell).
% \]
\end{proposition}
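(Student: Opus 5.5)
The plan has three steps: prove the component marginal by induction on $t$, prove the contextual conditional law by induction along the chain, and then mix over $Z_t^{(w)}$ to get the period marginals. Throughout I drop the superscript $(w)$.

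\textbf{Component marginals.} By Assumption~\ref{ass:coupling_construction_uniqueness}, each $Q_t$ is a well-defined feasible plan for \eqref{eq:sense_transition_distribution}. Its row sums are $\pi_{t,k}>0$, so $P_t(k,\ell)=Q_t(k,\ell)/\pi_{t,k}$ is a genuine stochastic matrix. The base case $\Pr(Z_1=k)=\pi_{1,k}$ holds by construction. For the inductive step, assume $\Pr(Z_t=k)=\pi_{t,k}$. Then
\[
\Pr(Z_{t+1}=\ell)=\sum_k \pi_{t,k}P_t(k,\ell)=\sum_k Q_t(k,\ell)=\pi_{t+1,\ell},
\]
where the last equality is the column-marginal constraint of $Q_t$. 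This gives the first identity for every $t$.

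\textbf{Contextual conditionals.} I would show $\mathcal L(X_t\mid Z_{1:T})=\nu_{t,Z_t}$ by induction on $t$, conditioning on the whole latent path. The base case $t=1$ is the definition of the process. For the inductive step, suppose $X_t\mid Z_{1:T}\sim\nu_{t,Z_t}$, and write $k=Z_t$, $\ell=Z_{t+1}$. Given $Z_{1:T}$, the next state is drawn from $\mathcal K_{t,k,\ell}(X_t,\cdot)$. For a bounded measurable $f$, the tower property gives
\[
\mathbb E[f(X_{t+1})\mid Z_{1:T}]=\int\!\!\int f(y)\,\mathcal K_{t,k,\ell}(x,dy)\,\nu_{t,k}(dx)=\int f(y)\,\eta_{t,k,\ell}(dx,dy).
\]
Since $\eta_{t,k,\ell}\in\mathcal A_{t,k,\ell}\subseteq\Pi(\nu_{t,k},\nu_{t+1,\ell})$, its second marginal is $\nu_{t+1,\ell}$, so the right-hand side equals $\int f\,d\nu_{t+1,\ell}$. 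This closes the induction.

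\textbf{Joint coupling and the main obstacle.} I expect the main care to be needed in checking that the process is well defined. Disintegration requires $\mathbb R^d$ to be Polish, which it is, so the kernels $\mathcal K_{t,k,\ell}$ exist and are measurable. Ionescu-Tulcea's theorem then builds the joint law of $(Z_{1:T},X_{1:T})$ on $(\Omega,\mathcal F,\mathbb P)$. The kernels may only be defined $\nu_{t,k}$-a.e., so the induction must carry this along: the hypothesis $X_t\mid Z_{1:T}\sim\nu_{t,Z_t}$ guarantees that the kernel is evaluated on a set of full measure.

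Finally, combining the two identities, for measurable $A$,
\[
\Pr(X_t\in A)=\sum_k \Pr(Z_t=k)\,\nu_{t,k}(A)=\sum_k\pi_{t,k}\nu_{t,k}(A)=\gamma_t(A).
\]
So every one-dimensional marginal of $\mathcal L(X_1,\dots,X_T)$ equals $\gamma_t$, which is exactly \eqref{eq:joint_coupling_marginals}.
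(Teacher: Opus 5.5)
Your proposal is correct and follows the paper's proof essentially step for step. It uses induction on the prevalence marginals via the column constraint of $Q_t$, then induction on the path-conditional contextual laws via the second marginal of $\eta_{t,k,\ell}$, and finally mixes over $Z_t$ to recover $\gamma_t$; your remarks on disintegration, Ionescu--Tulcea, and the $\nu_{t,k}$-a.e.\ definition of the kernels only make explicit measure-theoretic details the paper leaves implicit.
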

% The proof and the dynamic programming evaluation of these quantities are given in Appendix.
The correspondence from \(t\) to \(s\) is therefore induced through the intervening adjacent plans, rather than being optimized independently. All period pairs belong to one process with the observed marginals, and there is no assertion of trajectories for individual utterances.

\subsection{Full, mean, and deviation processes}

Let \(m_{t,k}^{(w)}=\int x\,d\nu_{t,k}^{(w)}(x)\). The coupled contextual state can be separated into a component center process and a centered deviation process within components:
\[
\phi_{\mathrm{full}}^{(w)}(t)=X_t^{(w)},\qquad
\phi_{\mathrm{mean}}^{(w)}(t)=m_{t,Z_t^{(w)}}^{(w)},\qquad
\phi_{\mathrm{dev}}^{(w)}(t)=X_t^{(w)}-m_{t,Z_t^{(w)}}^{(w)}.
\]

These processes satisfy the following reconstruction and centering properties:
% By marginal preservation,
\begin{equation}
\phi_{\mathrm{full}}^{(w)}(t)
=
\phi_{\mathrm{mean}}^{(w)}(t)+\phi_{\mathrm{dev}}^{(w)}(t),
\qquad
\mathbb E[\phi_{\mathrm{dev}}^{(w)}(t)\mid Z_{1:T}^{(w)}]=0.
\label{eq:full_mean_dev_decomposition}
\end{equation}
A component may therefore move through the embedding space, or its usages may broaden, contract, or rotate around its center. The full process retains both mechanisms.
%%The full process represents total contextual evolution, the mean process movement among transported Sense centers, and the deviation process changes in within-Sense contextual variation.

\section{Quantifying and attributing semantic change}
\label{sec:change_operators}

The coupled process provides compatible answers to how much a word has changed, when it changed, and which mechanisms and component movements carried the change. Second-moment operators retain directional information, their traces yield scalar change scores and temporal profiles, and their exact decompositions attribute change to movement between components, variation within components, and transported component pairs.

\subsection{Displacement operators and temporal profiles}

For \(\bullet\in\{\mathrm{full},\mathrm{mean},\mathrm{dev}\}\), define
\[
\Delta\phi_{\bullet}^{(w)}(t,s)
:=
\phi_{\bullet}^{(w)}(t)-\phi_{\bullet}^{(w)}(s),\qquad
\mathbf M_{\bullet}^{(w)}(t,s)
:=
\mathbb E[\Delta\phi_{\bullet}^{(w)}(t,s)\Delta\phi_{\bullet}^{(w)}(t,s)^\top].
\]
The trace distance
\[
d_{\bullet,\mathrm{tr}}^{(w)}(t,s)
:=
\{\operatorname{tr}\mathbf M_{\bullet}^{(w)}(t,s)\}^{1/2}
\]
measures the total coupled displacement between two periods. Evaluating this distance on adjacent pairs yields a temporal change profile. The trace reflects how much change occurred, while the operator retains the embedding directions carrying that displacement. Adjacent evaluations locate the change in time, and non-adjacent evaluations inherit the same composed temporal correspondence. For any fixed word-local orthonormal basis \(\{\bm u_r^{(w)}\}_{r=1}^d\), define
\[
d_{\bullet,r}^{(w)}(t,s)
:=
\{(\bm u_r^{(w)})^\top\mathbf M_{\bullet}^{(w)}(t,s)\bm u_r^{(w)}\}^{1/2}.
\]

The trace summarizes the total semantic variation, while the mode-wise quantities provide its
orthogonal decomposition:
\begin{equation}
\left(d_{\bullet,\mathrm{tr}}^{(w)}(t,s)\right)^2
=
\sum_{r=1}^d\left(d_{\bullet,r}^{(w)}(t,s)\right)^2.
\label{eq:trace_mode_decomposition}
\end{equation}

%\textbf{Remark.} Since \(\nu_{t,k}^{(w)}\in P_2(\mathbb R^d)\), all \(\phi_{\bullet}^{(w)}(t)\) belong to \(L^2(\Omega;\mathbb R^d)\), so \(\mathbf M_{\bullet}^{(w)}(t,s)\) and the resulting distances are finite. Moreover, the trace and mode-wise semantic distances define finite pseudo-metrics on \([T]\), with the zero-distance conditions given by

\textbf{Remark.} Because \(\nu_{t,k}^{(w)}\in\mathcal P_2(\mathbb R^d)\), all \(\phi_{\bullet}^{(w)}(t)\) belong to \(L^2(\Omega;\mathbb R^d)\). Hence \(\mathbf M_{\bullet}^{(w)}(t,s)\) and the resulting distances are finite. Moreover, the trace and mode-wise semantic distances define finite pseudometrics on \([T]\), with the zero-distance conditions given by \(
\phi_{\bullet}^{(w)}(t)=\phi_{\bullet}^{(w)}(s)
\quad\text{almost surely}
\),
and
\(
(\bm u_r^{(w)})^\top\phi_{\bullet}^{(w)}(t)
=
(\bm u_r^{(w)})^\top\phi_{\bullet}^{(w)}(s)
\quad\text{almost surely}
\),
respectively. These properties are the same as those of the corresponding distances defined in \cite{ezoe2026multiscale}. Appendix~\ref{subsec:semantic_distance_properties} gives the details.

\subsection{Exact mechanism and component transition attribution}

Conditional centering removes the mean--deviation cross term.
\begin{theorem}[Exact mean--deviation decomposition]
\label{thm:exact_mean_deviation_decomposition}
For every \(w\in V\) and \(t,s\in[T]\),
\begin{equation}
\mathbf M_{\mathrm{full}}^{(w)}(t,s)
=
\mathbf M_{\mathrm{mean}}^{(w)}(t,s)
+
\mathbf M_{\mathrm{dev}}^{(w)}(t,s).
\label{eq:mean_dev_operator_decomposition}
\end{equation}
Consequently, for \(\rho\in\{\mathrm{tr},1,\ldots,d\}\),
\begin{equation}
\left(d_{\mathrm{full},\rho}^{(w)}(t,s)\right)^2
=
\left(d_{\mathrm{mean},\rho}^{(w)}(t,s)\right)^2
+
\left(d_{\mathrm{dev},\rho}^{(w)}(t,s)\right)^2.
\label{eq:mean_dev_distance_decomposition}
\end{equation}
\end{theorem}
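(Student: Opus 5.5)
The plan is to expand the full displacement using the pointwise identity \eqref{eq:full_mean_dev_decomposition}. With \(A:=\Delta\phi_{\mathrm{mean}}^{(w)}(t,s)\) and \(B:=\Delta\phi_{\mathrm{dev}}^{(w)}(t,s)\), we have \(\Delta\phi_{\mathrm{full}}^{(w)}(t,s)=A+B\). Hence
\[
\mathbf M_{\mathrm{full}}^{(w)}(t,s)=\mathbb E[AA^\top]+\mathbb E[BB^\top]+\mathbb E[AB^\top]+\mathbb E[BA^\top].
\]
All four expectations are finite. By the Remark, every \(\phi_\bullet^{(w)}(t)\) lies in \(L^2(\Omega;\mathbb R^d)\), since each \(\nu_{t,k}^{(w)}\in\mathcal P_2\) and there are finitely many components. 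The Cauchy--Schwarz inequality then makes the cross moments integrable. It therefore suffices to show that the cross term \(\mathbb E[AB^\top]\) vanishes, since \(\mathbb E[BA^\top]\) is its transpose.

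The key step is conditioning on the latent path \(Z_{1:T}^{(w)}\). The vector \(A=m^{(w)}_{t,Z_t^{(w)}}-m^{(w)}_{s,Z_s^{(w)}}\) is a measurable function of \(Z_{1:T}^{(w)}\) and takes finitely many values. By Proposition~\ref{prop:marginal_preservation}, \(\mathcal L(X_u^{(w)}\mid Z_{1:T}^{(w)})=\nu^{(w)}_{u,Z_u^{(w)}}\) for each \(u\). Hence \(\mathbb E[\phi_{\mathrm{dev}}^{(w)}(u)\mid Z_{1:T}^{(w)}]=0\) for \(u=t\) and \(u=s\), which gives \(\mathbb E[B\mid Z_{1:T}^{(w)}]=0\). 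By the tower property and by pulling out what is known,
\[
\mathbb E[AB^\top]=\mathbb E\bigl[A\,\mathbb E[B\mid Z_{1:T}^{(w)}]^\top\bigr]=0.
\]
Because \(Z_{1:T}^{(w)}\) takes finitely many values, we can also write this as a finite sum over paths \(z\) with positive probability, \(\sum_z \Pr(Z_{1:T}^{(w)}=z)\,a_z\,\mathbb E[B\mid Z_{1:T}^{(w)}=z]^\top\). This form avoids any subtlety about conditional expectations. This yields \eqref{eq:mean_dev_operator_decomposition}.

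The distance identities \eqref{eq:mean_dev_distance_decomposition} then follow by linearity. Taking the trace of \eqref{eq:mean_dev_operator_decomposition} handles \(\rho=\mathrm{tr}\). Applying the quadratic form \(\bm u_r^\top(\cdot)\bm u_r\) handles each \(\rho=r\), and it uses the same fixed word-local basis for all three processes. Each squared distance is by definition exactly that linear functional of its operator.

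The only step that needs care is the conditional centering for both time indices at once, given the whole path rather than only \(Z_t^{(w)}\). Proposition~\ref{prop:marginal_preservation} supplies exactly the conditional law of \(X_u^{(w)}\) given the full path, so this step is not an obstacle beyond invoking it. The case \(t=s\) is trivial, since then all operators are zero.
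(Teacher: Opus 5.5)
Your proof is correct and follows the paper's argument essentially step for step: you expand $\mathbf M_{\mathrm{full}}^{(w)}(t,s)$ via the reconstruction identity, kill the cross terms by conditioning on $Z_{1:T}^{(w)}$ using the conditional centering in \eqref{eq:full_mean_dev_decomposition}, and then apply the trace and the quadratic forms $\bm u_r^\top(\cdot)\bm u_r$. Your added remarks on $L^2$ integrability, the finite-sum form of the conditioning, and the trivial case $t=s$ are harmless refinements that the paper leaves implicit.
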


Further conditioning on \((Z_t^{(w)},Z_s^{(w)})\) attributes each operator to transported component pairs. Define
\[
\begin{aligned}
\mathbf C_{\mathrm{mean}}^{(w)}(t,s;k,\ell)
&:=
(m_{t,k}^{(w)}-m_{s,\ell}^{(w)})(m_{t,k}^{(w)}-m_{s,\ell}^{(w)})^\top,\\
\mathbf C_{\mathrm{dev}}^{(w)}(t,s;k,\ell)
&:=
\operatorname{Cov}(X_t^{(w)}-X_s^{(w)}\mid Z_t^{(w)}=k,Z_s^{(w)}=\ell),\\
q^{(w)}(t,s;k,\ell)
&:=
\Pr
\left(
Z_t^{(w)}=k,\,
Z_s^{(w)}=\ell
\right).
\end{aligned}
\]
The component-pair covariance \(\mathbf C_{\mathrm{dev}}^{(w)}(t,s;k,\ell)\) is defined only when \(q^{(w)}(t,s;k,\ell)>0\). Zero-mass pairs contribute nothing and are omitted from the weighted sums.
% \mathbf C_{\mathrm{full}}^{(w)}(t,s;k,\ell) &:= \mathbf C_{\mathrm{mean}}^{(w)}(t,s;k,\ell)+\mathbf C_{\mathrm{dev}}^{(w)}(t,s;k,\ell).

\begin{theorem}[Exact component transition attribution]
\label{thm:exact_sense_transition_attribution}
For \(\bullet\in\{\mathrm{mean},\mathrm{dev}\}\),
% For \(\bullet\in\{\mathrm{full},\mathrm{mean},\mathrm{dev}\}\),
\begin{equation}
\mathbf M_{\bullet}^{(w)}(t,s)
=
\sum_{k=1}^{K_t^{(w)}}\sum_{\ell=1}^{K_s^{(w)}}
q^{(w)}(t,s;k,\ell)\mathbf C_{\bullet}^{(w)}(t,s;k,\ell).
\label{eq:sense_transition_operator_decomposition}
\end{equation}
Consequently,
\begin{align}
\left(d_{\bullet,\mathrm{tr}}^{(w)}(t,s)\right)^2
&=
\sum_{k,\ell}q^{(w)}(t,s;k,\ell)\operatorname{tr}\mathbf C_{\bullet}^{(w)}(t,s;k,\ell),
\label{eq:trace_sense_transition}\\
\left(d_{\bullet,r}^{(w)}(t,s)\right)^2
&=
\sum_{k,\ell}q^{(w)}(t,s;k,\ell)
(\bm u_r^{(w)})^\top\mathbf C_{\bullet}^{(w)}(t,s;k,\ell)\bm u_r^{(w)}.
\label{eq:mode_sense_transition}
\end{align}
\end{theorem}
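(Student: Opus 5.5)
The identity \eqref{eq:sense_transition_operator_decomposition} follows from the law of total expectation, conditioning on the finite-valued pair \((Z_t^{(w)},Z_s^{(w)})\). Since \(\phi_\bullet^{(w)}(t)\in L^2(\Omega;\mathbb R^d)\) (Remark), the outer product \(\Delta\phi_\bullet\Delta\phi_\bullet^\top\) is integrable. Partitioning \(\Omega\) into the events \(\{Z_t^{(w)}=k,Z_s^{(w)}=\ell\}\) gives
\[
\mathbf M_\bullet^{(w)}(t,s)=\sum_{k,\ell}q^{(w)}(t,s;k,\ell)\,\mathbb E\bigl[\Delta\phi_\bullet\Delta\phi_\bullet^\top\mid Z_t^{(w)}=k,Z_s^{(w)}=\ell\bigr].
\]
Events with \(q^{(w)}(t,s;k,\ell)=0\) contribute nothing and are dropped. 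It remains to identify each conditional second moment with \(\mathbf C_\bullet^{(w)}(t,s;k,\ell)\).

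\emph{Mean case.} On the event \(\{Z_t^{(w)}=k,Z_s^{(w)}=\ell\}\), the increment \(\Delta\phi_{\mathrm{mean}}^{(w)}(t,s)=m_{t,k}^{(w)}-m_{s,\ell}^{(w)}\) is deterministic. Its conditional second moment is therefore exactly \(\mathbf C_{\mathrm{mean}}^{(w)}(t,s;k,\ell)\).

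\emph{Deviation case.} On the same event,
\[
\Delta\phi_{\mathrm{dev}}^{(w)}(t,s)=(X_t^{(w)}-X_s^{(w)})-(m_{t,k}^{(w)}-m_{s,\ell}^{(w)}).
\]
This gives the conditional covariance provided
\[
\mathbb E[X_t^{(w)}-X_s^{(w)}\mid Z_t^{(w)}=k,Z_s^{(w)}=\ell]=m_{t,k}^{(w)}-m_{s,\ell}^{(w)}.
\]
To show this, apply the tower property through \(Z_{1:T}^{(w)}\). The second identity in \eqref{eq:full_mean_dev_decomposition}, which rests on Proposition~\ref{prop:marginal_preservation}, gives \(\mathbb E[X_t^{(w)}\mid Z_{1:T}^{(w)}]=m_{t,Z_t^{(w)}}^{(w)}\). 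This quantity is measurable with respect to \(Z_t^{(w)}\), so conditioning further down to \((Z_t^{(w)},Z_s^{(w)})\) leaves it equal to \(m_{t,k}^{(w)}\) on the event. The same argument applied to \(X_s^{(w)}\) gives \(m_{s,\ell}^{(w)}\). Hence the conditional mean of \(X_t-X_s\) matches the centering, and the conditional second moment of the deviation increment equals \(\operatorname{Cov}(X_t^{(w)}-X_s^{(w)}\mid Z_t^{(w)}=k,Z_s^{(w)}=\ell)=\mathbf C_{\mathrm{dev}}^{(w)}(t,s;k,\ell)\).

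This step is the main obstacle. One must make sure that conditioning on the full latent path, as in Proposition~\ref{prop:marginal_preservation}, legitimately reduces to conditioning on the pair alone. This holds because the conditional mean depends only on \(Z_t^{(w)}\) (respectively \(Z_s^{(w)}\)). The argument works for arbitrary \(t,s\), including non-adjacent and reversed orderings, since only the marginal-preservation statement is used and not the Markov kernels themselves.

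Finally, \eqref{eq:trace_sense_transition} and \eqref{eq:mode_sense_transition} follow by applying the linear functionals \(\operatorname{tr}(\cdot)\) and \((\bm u_r^{(w)})^\top(\cdot)\,\bm u_r^{(w)}\) to both sides of the operator identity. By definition these functionals give \((d_{\bullet,\mathrm{tr}}^{(w)})^2\) and \((d_{\bullet,r}^{(w)})^2\).
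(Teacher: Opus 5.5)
Your proposal is correct and follows essentially the same route as the paper's proof. You condition on the endpoint pair $(Z_t^{(w)},Z_s^{(w)})$, observe that the mean increment is deterministic on each event, and use conditional marginal preservation to get $\mathbb E[X_t^{(w)}-X_s^{(w)}\mid Z_t^{(w)}=k,Z_s^{(w)}=\ell]=m_{t,k}^{(w)}-m_{s,\ell}^{(w)}$, so the deviation second moment is the conditional covariance; then you apply the trace and quadratic forms. The only difference is cosmetic: the paper reduces from the full latent path to the pair by an explicit sum over paths $z_{1:T}$, whereas you use the tower property and $\sigma(Z_t^{(w)})$-measurability.
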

% These are exact identities for the fitted CUSP process: they connect a total or mode-wise score to its between-Sense, within-Sense, and transported-transition contributions.
%%These are exact identities for the fitted CUSP process: they connect a trace or mode-wise score to its between-Sense and within-Sense components, weighted by transported-transition contributions.
Together, Theorems~\ref{thm:exact_mean_deviation_decomposition} and~\ref{thm:exact_sense_transition_attribution} form an exact accounting of change. Every unit of the squared trace or mode-wise variation is assigned to between center or within component change and to transported component pairs. This attribution decomposes the score itself rather than fitting a separate explanatory model. In the empirical analysis, passages assigned to the selected source and target components make these numerical attributions inspectable.

\subsection{Word-local temporal modes}

To adapt the basis to the history of word \(w\), we use the aggregated mean operator
\[
\mathcal M_{\mathrm{mean}}^{(w)}
:=
\sum_{t=1}^{T-1}\mathbf M_{\mathrm{mean}}^{(w)}(t,t+1).
\]
We take the orthonormal eigenvectors of \(\mathcal M_{\mathrm{mean}}^{(w)}\) as the word specific basis \(\{\bm u_r^{(w)}\}_{r=1}^d\), with eigenvalues ordered decreasingly:
\[
\mathcal M_{\mathrm{mean}}^{(w)}\bm u_r^{(w)}
=
\lambda_r^{(w)}\bm u_r^{(w)}.
\]

Let \(R^{(w)}\in[d]\) denote the number of modes retained for individual analysis. We require their eigenvalues to be non-degenerate\footnote{For \(r>R^{(w)}\), the eigenvectors need not be uniquely determined in the presence of eigenvalue degeneracy, but this freedom is immaterial because these modes are not used for individual analysis.}.

\begin{assumption}[Non-degenerate retained modes]
\label{assump:separated_eigenvalues}
There exists a constant \(\delta>0\) such that, for every \(r\in[R^{(w)}]\),
%%%% BEFORE %%%%
%\[
%\lambda_{r-1}^{(w)}-\lambda_r^{(w)}\ge \delta,
%\]
\[
\lambda_r^{(w)}-\lambda_{r+1}^{(w)}\ge \delta,
\]
where we define \(\lambda_{d+1}^{(w)}=-\infty\).
\end{assumption}
%\begin{assumption}[Separated retained modes]
%\label{assump:separated_eigenvalues}
%There exists \(\delta>0\) such that every retained eigenvalue is separated from the rest of the spectrum:
%\[
%\min_{j\in[d]\setminus\{r\}}
%\left|\lambda_r^{(w)}-\lambda_j^{(w)}\right|
%\geq\delta
%\qquad\text{for every }r\in[R^{(w)}].
%\]
%\end{assumption}

The eigenpairs admit the following spectral characterization.
\begin{proposition}[Spectral characterization of word-local modes]
\label{prop:word_local_modes}
For every \(r\in[d]\),
\begin{equation}
\lambda_r^{(w)}
=
\sum_{t=1}^{T-1}
\left(d_{\mathrm{mean},r}^{(w)}(t,t+1)\right)^2,
\label{eq:eigenvalue_accounting}
\end{equation}
and
\begin{equation}
\bm u_r^{(w)}
\in
\arg\max_{\substack{\|\bm u\|=1\\
\bm u\perp\bm u_1^{(w)},\ldots,\bm u_{r-1}^{(w)}}}
\sum_{t=1}^{T-1}
\bm u^\top\mathbf M_{\mathrm{mean}}^{(w)}(t,t+1)\bm u.
\label{eq:word_local_variational_characterization}
\end{equation}
\end{proposition}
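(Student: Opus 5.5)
The plan is to reduce both claims to linear algebra for the symmetric positive semidefinite matrix \(\mathcal M_{\mathrm{mean}}^{(w)}\). First, each \(\mathbf M_{\mathrm{mean}}^{(w)}(t,t+1)\) is a finite second-moment matrix of the \(L^2\) random vector \(\Delta\phi_{\mathrm{mean}}^{(w)}(t,t+1)\), by the Remark after \eqref{eq:trace_mode_decomposition}. It is therefore symmetric and positive semidefinite, and so is their finite sum. The spectral theorem then gives an orthonormal eigenbasis with real eigenvalues ordered decreasingly, so the basis \(\{\bm u_r^{(w)}\}\) is well defined. Any choice among degenerate eigenvectors for \(r>R^{(w)}\) is harmless, since both displayed identities will hold for every orthonormal eigenbasis.

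For \eqref{eq:eigenvalue_accounting}, we use orthonormality, \(\|\bm u_r^{(w)}\|=1\), to write \(\lambda_r^{(w)}=(\bm u_r^{(w)})^\top\mathcal M_{\mathrm{mean}}^{(w)}\bm u_r^{(w)}\). Expanding the sum by linearity of the quadratic form gives \(\sum_{t=1}^{T-1}(\bm u_r^{(w)})^\top\mathbf M_{\mathrm{mean}}^{(w)}(t,t+1)\bm u_r^{(w)}\). Each summand is by definition \((d_{\mathrm{mean},r}^{(w)}(t,t+1))^2\). The definition applies because it was stated for any fixed word-local orthonormal basis, and the eigenbasis is one such basis. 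Note also that each summand is nonnegative, so the square root in the definition is legitimate.

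For \eqref{eq:word_local_variational_characterization}, the objective equals \(\bm u^\top\mathcal M_{\mathrm{mean}}^{(w)}\bm u\), again by linearity. This is the Courant--Fischer (Rayleigh quotient) characterization. Let \(\bm u\) be a unit vector orthogonal to \(\bm u_1^{(w)},\dots,\bm u_{r-1}^{(w)}\). Expanded in the eigenbasis, it reads \(\bm u=\sum_{j\ge r}c_j\bm u_j^{(w)}\) with \(\sum_j c_j^2=1\). Hence \(\bm u^\top\mathcal M_{\mathrm{mean}}^{(w)}\bm u=\sum_{j\ge r}\lambda_j^{(w)}c_j^2\le\lambda_r^{(w)}\), because the eigenvalues are ordered decreasingly. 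Equality holds at \(\bm u=\bm u_r^{(w)}\), by the computation of the previous paragraph. Thus \(\bm u_r^{(w)}\) attains the maximum and belongs to the argmax set.

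The only point needing care is that the claim is set membership rather than equality, since the argmax need not be a singleton when eigenvalues repeat. Assumption~\ref{assump:separated_eigenvalues} is not required for the statement itself. It would only upgrade the membership to uniqueness up to sign for \(r\le R^{(w)}\), and that holds only if the preceding vectors are themselves fixed. I therefore expect no substantive obstacle; the argument is bookkeeping around linearity and the spectral theorem.
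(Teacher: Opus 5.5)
Your proposal is correct and follows essentially the same route as the paper: \(\lambda_r^{(w)}\) is written as the Rayleigh quotient at \(\bm u_r^{(w)}\) and the path sum is split by linearity, and the argmax claim is the Courant--Fischer characterization for the symmetric matrix \(\mathcal M_{\mathrm{mean}}^{(w)}\), which the paper cites and you verify directly by expanding \(\bm u\) in the eigenbasis. Your remarks on positive semidefiniteness (which justifies the square roots) and on set membership under degenerate eigenvalues are accurate, and the paper's proof does not need them.
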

The path sum identifies the directions that organize the word's complete history, while the adjacent distances locate each direction in time. Because the basis is word local, its modes describe distinct developments within a word rather than vocabulary wide axes.
% More generally, the same construction may use \(\mathcal M_{\bullet}^{(w)}=\sum_t\mathbf M_{\bullet}^{(w)}(t,t+1)\) for \(\bullet\in\{\mathrm{full},\mathrm{mean},\mathrm{dev}\}\); the spectral identity and recovery argument are unchanged.
We use the mean basis to isolate center movement while keeping the full, mean, and deviation contributions exactly comparable.

\subsection{Gaussian implementation and statistical recovery}

For a tractable estimator, we specialize to Gaussian usage components, quadratic cost, and unrestricted component couplings using the established Gaussian and Gaussian-mixture Wasserstein geometry \citep{chen2019optimal,delon2020wasserstein}:
\[
\nu_{t,k}^{(w)}=\mathcal N(\mu_{t,k}^{(w)},\Sigma_{t,k}^{(w)}),\qquad
\mathcal A_{t,k,\ell}^{(w)}=\Pi(\nu_{t,k}^{(w)},\nu_{t+1,\ell}^{(w)}),\qquad
c(x,y)=\|x-y\|^2.
\]

For completeness, the resulting component-level optimal coupling has the standard Gaussian form.
\begin{proposition}[Gaussian optimal coupling]
\label{prop:gaussian_optimal_coupling}
If \(\Sigma_{t,k}^{(w)}\succ0\), the optimizer in \eqref{eq:pairwise_distribution_coupling} is unique and equals
\begin{equation}
\eta_{t,k,\ell}^{(w)}
=
(\mathrm{id},T_{t,k,\ell}^{(w)})_\#\nu_{t,k}^{(w)},\qquad
T_{t,k,\ell}^{(w)}(x)
=
\mu_{t+1,\ell}^{(w)}
+
A_{t,k,\ell}^{(w)}(x-\mu_{t,k}^{(w)}),
\label{eq:gaussian_optimal_coupling}
\end{equation}
where
\[
A_{t,k,\ell}^{(w)}
=
\left(\Sigma_{t,k}^{(w)}\right)^{-1/2}
\Bigl[
\left(\Sigma_{t,k}^{(w)}\right)^{1/2}
\Sigma_{t+1,\ell}^{(w)}
\left(\Sigma_{t,k}^{(w)}\right)^{1/2}
\Bigr]^{1/2}
\left(\Sigma_{t,k}^{(w)}\right)^{-1/2}.
\]
\end{proposition}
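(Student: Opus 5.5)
The plan is to reduce the quadratic-cost coupling problem between two Gaussians to a problem about cross-covariances only. Then I will verify that the linear map $T_{t,k,\ell}^{(w)}$ is optimal and unique via Brenier's theorem. Write $\mu=\mu_{t,k}^{(w)}$, $\Sigma_0=\Sigma_{t,k}^{(w)}\succ0$, $\mu'=\mu_{t+1,\ell}^{(w)}$, $\Sigma_1=\Sigma_{t+1,\ell}^{(w)}\succeq0$.

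\textbf{Step 1: reduction to the cross-covariance.} For any $\eta\in\Pi(\nu_{t,k}^{(w)},\nu_{t+1,\ell}^{(w)})$ with $(X,Y)\sim\eta$,
\[
\int\|x-y\|^2\,d\eta=\|\mu-\mu'\|^2+\operatorname{tr}\Sigma_0+\operatorname{tr}\Sigma_1-2\operatorname{tr}\operatorname{Cov}(X,Y).
\]
Every term except the last is fixed by the marginals. Minimizing the cost is therefore equivalent to maximizing $\operatorname{tr}\operatorname{Cov}(X,Y)$, and the minimum is finite because both marginals lie in $\mathcal P_2$.

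\textbf{Step 2: the candidate map is admissible and optimal.} The matrix $A=A_{t,k,\ell}^{(w)}$ is symmetric and positive semidefinite, since it is a congruence of a PSD square root by the symmetric matrix $\Sigma_0^{-1/2}$. It also satisfies $A\Sigma_0A=\Sigma_0^{-1/2}(\Sigma_0^{1/2}\Sigma_1\Sigma_0^{1/2})\Sigma_0^{-1/2}=\Sigma_1$. Hence $T_\#\nu_{t,k}^{(w)}=\mathcal N(\mu',\Sigma_1)$, so $(\mathrm{id},T)_\#\nu_{t,k}^{(w)}$ is a feasible coupling. Moreover, $T$ is the gradient of the convex function $x\mapsto\langle\mu',x\rangle+\tfrac12(x-\mu)^\top A(x-\mu)$. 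The source measure $\nu_{t,k}^{(w)}$ is absolutely continuous because $\Sigma_0\succ0$. Brenier's theorem therefore gives three conclusions: the quadratic-cost optimal coupling exists, it is unique, and it is the unique coupling supported on the graph of the gradient of a convex function. It follows that $(\mathrm{id},T)_\#\nu_{t,k}^{(w)}$ is the unique minimizer.

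\textbf{Step 3: elementary uniqueness check (optional).} As a self-contained alternative to Brenier, I would maximize $\operatorname{tr}C$ over cross-covariances $C$ for which the joint block matrix is PSD. Write $C=\Sigma_0^{1/2}K\Sigma_1^{1/2}$ with $\|K\|_{\mathrm{op}}\le1$ and use a von Neumann trace inequality. The obstacle here is handling singular $\Sigma_1$, and proving that a non-Gaussian coupling with the optimal covariance must be deterministic. For that reason I would rely on Brenier's theorem for the uniqueness claim and use the covariance computation only to confirm the optimal value
\[
\|\mu-\mu'\|^2+\operatorname{tr}\Sigma_0+\operatorname{tr}\Sigma_1-2\operatorname{tr}\bigl(\Sigma_0^{1/2}\Sigma_1\Sigma_0^{1/2}\bigr)^{1/2}.
\]
This follows because $\operatorname{tr}(A\Sigma_0)=\operatorname{tr}(\Sigma_0^{1/2}\Sigma_1\Sigma_0^{1/2})^{1/2}$.

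The main obstacle is the uniqueness assertion; citing Brenier's theorem under $\Sigma_0\succ0$ resolves it cleanly.
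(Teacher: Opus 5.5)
Your proof is correct and takes the same route as the paper, which gives no derivation of its own and instead cites the standard Gaussian $W_2$ results (Brenier's theorem for an absolutely continuous source, as in Santambrogio's text) for existence, uniqueness, and the affine form of the optimizer. Your checks that $A$ is symmetric positive semidefinite with $A\Sigma_0A=\Sigma_1$, so that $T$ is the gradient of a convex quadratic pushing $\nu_{t,k}^{(w)}$ to $\nu_{t+1,\ell}^{(w)}$, supply exactly the verification those references rest on, and your identity $\operatorname{tr}(A\Sigma_0)=\operatorname{tr}(\Sigma_0^{1/2}\Sigma_1\Sigma_0^{1/2})^{1/2}$ reproduces the closed-form cost the paper records in its Gaussian appendix (Step~3 is not needed).
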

Under this specialization, the component transition terms and induced distances admit closed form expressions and can be computed by dynamic programming for every period pair \((t,s)\). Details are deferred to Appendices~\ref{app:gaussian_expressions} and~\ref{app:dynamic_programming_transition}.

% Let \(n^{(w)}=\min_{t}n_t^{(w)}\to\infty\)
Finally, we have the statistical recovery results. Let hats denote estimated quantities.
% and define \[E_{M,\bullet}^{(w)}:=\max_{t,s}\|\widehat{\mathbf M}_{\bullet}^{(w)}(t,s)-\mathbf M_{\bullet}^{(w)}(t,s)\|_2.\]

%\begin{assumption}[Estimation and optimization conditions]
%\label{ass:gaussian_estimation_regularity}
%The Gaussian mixtures are identifiable, their component numbers are known, their covariances are positive definite, and their parameters are estimated by penalized maximum likelihood procedure of\cite{chen2009inference}. Each optimal component plan is unique and has support size \(K_t^{(w)}+K_{t+1}^{(w)}-1\).
%\end{assumption}
\begin{assumption}[Estimation and optimization conditions]
\label{ass:gaussian_estimation_regularity}
The Gaussian mixtures have known, correctly specified component counts, positive weights, positive definite covariances and pairwise distinct component parameters (see Definition~\ref{def:gmm_class}). Their parameters are estimated by a penalized maximum likelihood estimator satisfying the conditions of \citet{chen2009inference}. Each adjacent optimal component plan is unique and has support size \(K_t^{(w)}+K_{t+1}^{(w)}-1\).
\end{assumption}

%Because \(n^{(w)}:=\min_t n_t^{(w)}\to\infty\), for
\begin{theorem}[Statistical recovery]
\label{thm:distance_recovery}
Under the Gaussian specialization and Assumptions~\ref{assump:separated_eigenvalues} and~\ref{ass:gaussian_estimation_regularity}, the following holds using the word-local basis induced by \(\mathcal M_{\mathrm{mean}}^{(w)}\). As \(n^{(w)}:=\min_t n_t^{(w)}\to\infty\), for \(\bullet\in\{\mathrm{full},\mathrm{mean},\mathrm{dev}\}\) and \(\rho\in\{\mathrm{tr},1,\ldots,R^{(w)}\}\),
\[
\max_{t,s}
\left\|
\widehat{\mathbf M}_{\bullet}^{(w)}(t,s)
-
\mathbf M_{\bullet}^{(w)}(t,s)
\right\|_2
=
O_p\!\left((n^{(w)})^{-1/2}\right),
\]
and
\[
\max_{t,s}\left|
\widehat d_{\bullet,\rho}^{(w)}(t,s)^2
-
d_{\bullet,\rho}^{(w)}(t,s)^2
\right|
=
O_p\!\left((n^{(w)})^{-1/2}\right).
\]
\end{theorem}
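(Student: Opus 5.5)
The plan is a delta-method / continuous-mapping argument. We view every operator $\mathbf M_{\bullet}^{(w)}(t,s)$ as a deterministic function of the finite-dimensional parameter vector $\theta=\{(\pi_{t,k}^{(w)},\mu_{t,k}^{(w)},\Sigma_{t,k}^{(w)})\}_{t,k}$, and show that this map is locally Lipschitz at the true $\theta$. The first input is parameter consistency at the root-$n$ rate. Under Assumption~\ref{ass:gaussian_estimation_regularity}, the penalized MLE of \citet{chen2009inference} is consistent and asymptotically normal, after resolving label permutations. Since $T$ and the $K_t^{(w)}$ are fixed, this gives $\max_t\|\widehat\theta_t-\theta_t\|=O_p((n^{(w)})^{-1/2})$.

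\textbf{Step 1 (component-level maps).} Since $\Sigma_{t,k}^{(w)}\succ0$, the matrix square root is analytic on the positive definite cone. Hence the Gaussian transport matrices $A_{t,k,\ell}^{(w)}$ of Proposition~\ref{prop:gaussian_optimal_coupling} and the costs $C_{t,k,\ell}^{(w)}=\|\mu_{t,k}-\mu_{t+1,\ell}\|^2+\operatorname{tr}(\Sigma_{t,k}+\Sigma_{t+1,\ell}-2(\Sigma_{t,k}^{1/2}\Sigma_{t+1,\ell}\Sigma_{t,k}^{1/2})^{1/2})$ are smooth in $\theta$.

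\textbf{Step 2 (the prevalence plan; main obstacle).} The LP solution $Q_t^{(w)}$ in \eqref{eq:sense_transition_distribution} is generally only piecewise linear in $(\pi,C)$. The support-size condition is what fixes this: a unique plan with support $K_t+K_{t+1}-1$ is a nondegenerate basic feasible solution. For such a solution, the optimal basis is locally constant under small perturbations of the costs, because the reduced costs of nonbasic variables are strictly positive by uniqueness. I will show this via a standard LP sensitivity argument. On the event that the basis is unchanged, which has probability tending to one, $Q_t$ is the solution of a fixed nonsingular linear system in $(\pi_t,\pi_{t+1})$. It is therefore linear in the marginals and Lipschitz.

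\textbf{Step 3 (composition and moments).} With $Q_t$, $P_t=Q_t/\pi_t$ and $A_{t,k,\ell}$ smooth on that event, the Markov composition gives $q^{(w)}(t,s;k,\ell)$ as a polynomial in the $P$'s. The closed-form expressions of Appendix~\ref{app:gaussian_expressions} give the $\mathbf C_\bullet$ terms. These are sums over latent paths of products of the $A$'s and covariances: the conditional covariance of $X_t-X_s$ is an affine-Gaussian computation along each path. Theorem~\ref{thm:exact_sense_transition_attribution} and Theorem~\ref{thm:exact_mean_deviation_decomposition} then express each $\mathbf M_\bullet$ as a smooth function of $\theta$ near the truth. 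The first claim follows from the mean value theorem, taking a maximum over the finitely many $(t,s)$.

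\textbf{Step 4 (distances).} For $\rho=\mathrm{tr}$, we have $|\operatorname{tr}\widehat{\mathbf M}-\operatorname{tr}\mathbf M|\le d\|\widehat{\mathbf M}-\mathbf M\|_2$. For retained modes, $\widehat d_r^2-d_r^2=\widehat{\bm u}_r^\top\widehat{\mathbf M}\widehat{\bm u}_r-\bm u_r^\top\mathbf M\bm u_r$, which I split into the operator error plus $2\|\mathbf M\|_2\|\widehat{\bm u}_r-\bm u_r\|$. The aggregated operator $\widehat{\mathcal M}_{\mathrm{mean}}$ is within $O_p(n^{-1/2})$ of $\mathcal M_{\mathrm{mean}}$ in spectral norm by Step 3. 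Davis--Kahan, with the gap $\delta$ of Assumption~\ref{assump:separated_eigenvalues} applied inductively over $r\le R^{(w)}$, then gives $\|\widehat{\bm u}_r-\bm u_r\|\le 2^{3/2}\|\widehat{\mathcal M}-\mathcal M\|_2/\delta=O_p(n^{-1/2})$, after a sign alignment that does not affect quadratic forms. This yields the second claim.
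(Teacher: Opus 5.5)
Your proposal is correct and follows essentially the same route as the paper: root-$n$ recovery of the Gaussian mixture parameters, Lipschitz propagation through the Gaussian maps $A_{t,k,\ell}^{(w)}$ and costs $C_{t,k,\ell}^{(w)}$, LP basis stability for $Q_t^{(w)}$ under the support-size condition, operator perturbation via the closed-form attribution and mean--deviation decompositions, and finally the trace bound $d\,E_{M,\bullet}^{(w)}$ together with the Davis--Kahan bound $2^{3/2}\|\widehat{\mathcal M}_{\mathrm{mean}}^{(w)}-\mathcal M_{\mathrm{mean}}^{(w)}\|_2/\delta$ for the retained modes. The differences are cosmetic. You phrase the propagation as local smoothness plus the mean value inequality (for matrix-valued maps you need the inequality, not the equality form), whereas the paper uses explicit stagewise perturbation lemmas. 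Also, no induction over $r$ is needed in the Davis--Kahan step, because the assumption already makes each retained eigenvalue $\delta$-separated from the rest of the spectrum.
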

Thus, the operators, squared trace and mode distances used below stabilize jointly as the period sample sizes increase. The guarantee covers the CUSP quantities derived from the fitted process, as well as the underlying Gaussian mixture parameters.

% \begin{theorem}[Statistical recovery]
% \label{thm:distance_recovery}
% Under the Gaussian specialization and Assumptions~\ref{assump:separated_eigenvalues} and \ref{ass:gaussian_estimation_regularity}, for \(\bullet\in\{\mathrm{full},\mathrm{mean},\mathrm{dev}\}\) and \(\rho\in\{\mathrm{tr},1,\ldots,d\}\),
% \[
% E_{M,\bullet}^{(w)}=O_p(n^{-1/2}),\qquad \left|\widehat d_{\bullet,\rho}^{(w)}(t,s)^2-d_{\bullet,\rho}^{(w)}(t,s)^2\right|=O_p(n^{-1/2}).
% \]
% Moreover,
% \[
% \|\widehat{\mathcal M}_{\mathrm{mean}}^{(w)}-\mathcal M_{\mathrm{mean}}^{(w)}\|_2=O_p(n^{-1/2}).
% \]
% If \(\lambda_r^{(w)}\) is separated from the remaining spectrum, then, after sign alignment,
% \[
% \|\widehat{\bm u}_r^{(w)}-\bm u_r^{(w)}\|_2=O_p(n^{-1/2}),\qquad
% \left|\widehat d_{\bullet,r}^{(w)}(t,s)^2-d_{\bullet,r}^{(w)}(t,s)^2\right|=O_p(n^{-1/2}).
% \]
% \end{theorem}

\section{Experiments}
\label{sec:experiments}

The experiments follow Figure~\ref{fig:cusp_process_pipeline}: synthetic data test operator and distance recovery, DWUG \citep{schlechtweg2021dwug} tests scalar validity and decomposition, Janus \citep{cassotti2025sense} tests multi-period recovery and coherence, and Court opinions test passage-grounded attribution. The synthetic experiment uses Gaussian samples. The lexical experiments use target-token XL-LEXEME embeddings \citep{cassotti2023xllexeme} and Gaussian CUSP, with component couplings given by Proposition~\ref{prop:gaussian_optimal_coupling}. The dominant directions shared across contextual embeddings can distort the distances \citep{timkey2021rogue}. For each dataset, we therefore center its pooled usage panel, remove the leading principal direction ($k=1$), and apply $\ell_2$-normalization \citep{mu2018all}. The transform is fixed across words and periods within each dataset. DWUG uses separate English and German panels, Janus uses its original released pool before schedule construction, and Court uses its balanced analysis panel.

%The synthetic experiment uses Gaussian samples. The lexical experiments use target-token XL-LEXEME embeddings \citep{cassotti2023xllexeme} and Gaussian CUSP, with component couplings given by Proposition~\ref{prop:gaussian_optimal_coupling}. The dominant directions shared across contextual embeddings can distort the distances \citep{timkey2021rogue}. For each dataset, we therefore center its pooled usage panel, remove the leading principal direction ($k=1$), and apply $\ell_2$-normalization \citep{mu2018all}. The transform is fixed across words and periods within each dataset.  DWUG uses separate English and German panels, Janus uses its original released pool before schedule construction, and Court uses its balanced analysis panel. 

%%%% BEFORE
%DWUG uses separate English and German panels, Janus uses its original released pool before schedule construction, and Court uses its balanced analysis panel.

\subsection{Synthetic mixture processes: finite-sample recovery}
\label{sec:experiments_synthetic}

We test Theorem~\ref{thm:distance_recovery} on a three-period Gaussian process in $d=2$ with $K=2$, where the component prevalences, centers, and covariances all change. For $50$ repetitions at $n\in\{100,200,400,800,1600\}$ usages per period, we fit independent full-covariance Gaussian mixture models (GMMs) with $K$ fixed and withhold the planted component labels.

\begin{figure*}[t]
\centering
\includegraphics[width=0.315\textwidth]{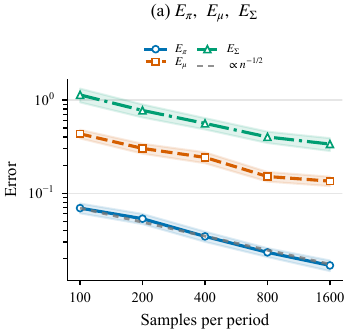}\hfill
\includegraphics[width=0.315\textwidth]{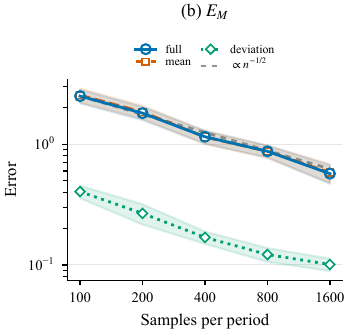}\hfill
\includegraphics[width=0.315\textwidth]{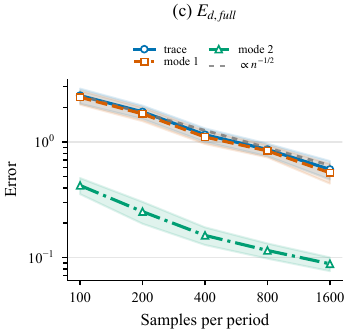}
\caption{Finite-sample recovery of (a) component parameters, (b) full, mean, and deviation operators, and (c) squared full-trace and word-local mode distances. Curves show means and $95\%$ confidence intervals over $50$ repetitions. Gray dashed lines show the $n^{-1/2}$ rate.}
\label{fig:synthetic_recovery}
\end{figure*}
Figure~\ref{fig:synthetic_recovery} shows that the parameter errors decrease with increasing sample size. The full operator and squared full-trace errors both have a log--log slope of $-0.532$, while the two squared mode distance errors have slopes of $-0.540$ and $-0.561$. Their confidence intervals contain $-1/2$, agreeing with the rate predicted by Theorem~\ref{thm:distance_recovery}. Appendix~\ref{app:synthetic_details} reports the mean and deviation distances and pure-mechanism checks.

\subsection{DWUG: scalar validity and decomposition}
\label{sec:experiments_dwug}

The English and German DWUG datasets contain $46$ and $50$ two-period targets with human graded-change scores \citep{schlechtweg2021dwug}. We compare the CUSP full trace with APD, prototype distance, and AP-JSD \citep{giulianelli2020analysing,periti2024systematic}, balanced usage-level OT \citep{montariol2021transport}, and fSUS \citep{kishino2025quantifying}. We also evaluate the exact mean--deviation split (Theorem~\ref{thm:exact_mean_deviation_decomposition}). Tables~\ref{tab:dwug_results} and~\ref{tab:app_dwug_decomposition} report the full/mean and complete results, respectively.

\begin{figure*}[t]
\begin{minipage}[t]{0.55\textwidth}
\vspace{0pt}
\centering
\captionof{table}{Spearman correlation with DWUG judgments under pooled panel $k=1$. Brackets give $95\%$ paired word-bootstrap intervals.}
\label{tab:dwug_results}
\setlength{\tabcolsep}{2.5pt}
\resizebox{\linewidth}{!}{%
\begin{tabular}{lcc}
\toprule
Method & English $\rho$ [95\% CI] & German $\rho$ [95\% CI] \\
\midrule
\textbf{CUSP full trace} & \textbf{$0.746$ [$0.536,0.883$]} & $0.815$ [$0.664,0.903$] \\
CUSP mean trace & $0.742$ [$0.531,0.881$] & $0.809$ [$0.649,0.902$] \\
APD & $0.706$ [$0.482,0.853$] & $0.815$ [$0.669,0.894$] \\
Prototype & $0.636$ [$0.375,0.817$] & $0.801$ [$0.641,0.896$] \\
AP-JSD & $0.531$ [$0.242,0.737$] & $0.578$ [$0.344,0.761$] \\
Usage-level OT & $0.718$ [$0.501,0.865$] & \textbf{$0.827$ [$0.675,0.915$]} \\
fSUS & $0.726$ [$0.516,0.866$] & $0.782$ [$0.621,0.881$] \\
\bottomrule
\end{tabular}%
}
\end{minipage}\hfill
\begin{minipage}[t]{0.43\textwidth}
\vspace{0pt}
\centering
\includegraphics[width=\linewidth]{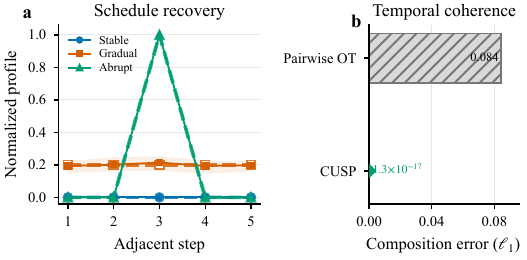}
\captionsetup{font=footnotesize,skip=3pt}
%%%% BEFORE
%\captionof{figure}{Janus at pooled panel $k=1$. (a) Planted and recovered profiles. Bands span the $2.5$--$97.5$ percentiles across $16$ lemmas per schedule. (b) Mean $\ell_1$ composition error for pairwise OT and CUSP.}
\captionof{figure}{Janus at $k=1$. (a) Planted and recovered profiles. Bands span the $2.5$--$97.5$ percentiles across $16$ lemmas per schedule. (b) Mean $\ell_1$ composition error. CUSP is consistent by construction.}
\label{fig:janus_recovery}
\end{minipage}
\end{figure*}

Table~\ref{tab:dwug_results} places CUSP first in English and within $0.012$ of the strongest German result. The overlapping intervals support competitive scalar validity rather than uniform superiority. Sensitivity at $k=0$ and $k=3$ preserves the same conclusion, as reported in Appendix~\ref{app:dwug_details}. Center movement supplies median shares of $0.868$ and $0.906$ of the full-trace variation in English and German. The deviation trace remains correlated with human judgments at $0.681$ and $0.674$. The deviation distance is the second largest in each dataset for English \emph{bar} ($0.323$) and German \emph{Rezeption} ($0.326$), both split across multiple DWUG usage clusters. The decomposition therefore preserves a useful within-component reorganization signal even though center displacement drives most graded change.

\subsection{Janus: controlled multi-period recovery}
\label{sec:experiments_janus}

Beyond two-period DWUG, Janus \citep{cassotti2025sense} provides controlled six-period histories: $48$ lemmas, each with four ten-sentence pools resampled into $40$ usages per period. Stable, gradual, and abrupt prevalence schedules each receive $16$ lemmas. Released sense tags define schedules but are withheld during fitting. CUSP receives no cross-period usage links. A BIC-selected diagonal GMM is fitted independently to every lemma and period. Figure~\ref{fig:janus_recovery}(a) shows that CUSP achieves exact recovery of the stable profile, a TV error of $0.044$ for the gradual profile, and the correct boundary for all $16$ abrupt lemmas. The recovered and planted adjacent changes correlate at $\rho=0.954$. The path length separates the changed and the stable schedules with an AUROC value of $1.000$. Pairwise OT fits balanced usage-level transport independently for each period pair. Its direct and composed plans have mean $\ell_1$ disagreement $0.084$, concentrated in gradual histories (Figure~\ref{fig:janus_recovery}(b)). CUSP composes non-adjacent plans by construction, giving numerical zero disagreement while preserving endpoint marginals (Proposition~\ref{prop:marginal_preservation}). Appendix~\ref{app:janus_details} gives the protocol and schedule-specific results.

\subsection{Court opinions: passage-grounded attribution}
\label{sec:experiments_court}

%%%% BEFORE
%Court opinions represent a natural, unlabeled setting. We use the May 6, 2024 CourtListener bulk release maintained by the Free Law Project \citep{free_law_project2024courtlistener}. The analysis contains $258{,}480$ balanced usages of $60$ legal lemmas from $100{,}473$ opinions covering eight decade bins from $1950$ through $2020$. The lemmas were selected from law textbook tables of contents and fixed before fitting. CourtListener opinion IDs make every excerpt traceable. A fixed LLM procedure applied identically to every lemma first retains occurrences in which the court interprets or applies the term, then extracts the supporting passage verbatim. Within each lemma, every decade is subsampled to the same count, preventing differences in period sample size alone from driving the change signal. BIC selects a period-local diagonal GMM with $K\leq5$. Appendix~\ref{app:court_data} gives the corpus construction, counts, and fitting details.

Court opinions provide a natural, unlabeled setting: $258{,}480$ balanced usages of $60$ legal lemmas from $100{,}473$ opinions in eight decade bins ($1950$--$2020$), drawn from the May 6, 2024 CourtListener release \citep{free_law_project2024courtlistener}. The vocabulary is fixed before fitting. A fixed LLM procedure retains legal-reasoning usages and extracts verbatim passages. Each excerpt remains traceable through its CourtListener opinion ID. Counts are equalized across decades within each lemma, and BIC selects period-local diagonal GMMs with $K\leq5$. Appendix~\ref{app:court_data} details construction and fitting.

%%%% BEFORE
%The Court dataset provides no gold change points or sense labels, so selection must be numerical and completed before any passage is read. We first locate changes relative to each lemma's own history. Let $d_t$ denote one of its seven adjacent full-trace distances. We standardize the sequence as $z_t=(d_t-\bar d)/\operatorname{sd}(d)$ and select its maximum. This removes differences in absolute scale across lemmas. This provides a descriptive temporal screen rather than a significance test. Requiring $\max_t z_t\geq1.2$ results in $53$ lemmas being retained. At that selected transition, Theorems~\ref{thm:exact_mean_deviation_decomposition} and~\ref{thm:exact_sense_transition_attribution} assign the squared change to transported component pairs through
Without gold change points or sense labels, we select transitions numerically before passage inspection. For each lemma's seven adjacent full-trace distances $d_t$, we compute $z_t=(d_t-\bar d)/\operatorname{sd}(d)$ and select the maximum. The descriptive threshold $\max_t z_t\geq1.2$ retains $53$ lemmas. At the selected transition, Theorems~\ref{thm:exact_mean_deviation_decomposition} and~\ref{thm:exact_sense_transition_attribution} give
\[
a_{k\ell}=q_{k\ell}\operatorname{tr}\!\left(\mathbf C_{\mathrm{mean}}(k,\ell)+\mathbf C_{\mathrm{dev}}(k,\ell)\right),
\qquad d_t^2=\sum_{k,\ell}a_{k\ell}.
\]
With $\alpha_{k\ell}=a_{k\ell}/d_t^2$, the ratio $\alpha_{k\ell}/q_{k\ell}$ compares the share of change in the pair with its share of transported mass. We report $E_{\max}$, the largest such ratio at the selected transition, to identify a movement that is unusually consequential for its size. Component labels are local to each period, so their numerical indices have no semantic meaning. Proposition~\ref{prop:word_local_modes} supplies the complementary temporal analysis. Its word-local mean modes distinguish directions that peak together from directions active in different decades. Appendix~\ref{app:court_audits} presents the fixed thresholds, the top ten transition and temporal-separation rankings, and the passage analyses.

\begin{figure*}[t]
\centering
\includegraphics[width=0.98\textwidth]{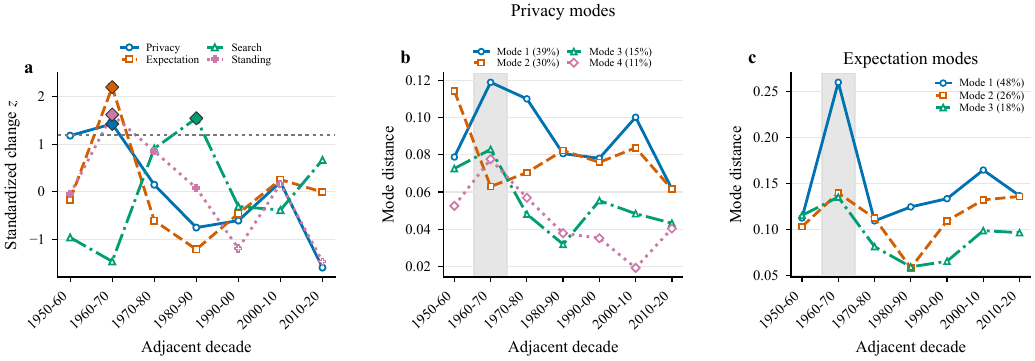}
\caption{Court localization and word-local modes. (a) Standardized adjacent change profiles for the three highest-enrichment lemmas and \emph{expectation}. Diamonds mark maxima and the line is $z=1.2$. (b) \emph{Privacy} modes peak in different decades. (c) The three leading \emph{expectation} modes peak together. Legends give the retained mean-path energy shares. Markers and line styles distinguish series.}
\label{fig:court_temporal_modes}
\end{figure*}
%%%% BEFORE
%Passage retrieval follows the numerical analysis. After the lemma, periods, and attributed component movement or mode are fixed, each usage is assigned to the component with the highest posterior probability. Within the selected source and target components, a fixed score favors usages near the component center with a clear match to the target lemma. Because CUSP models the correspondence between period distributions rather than observed sentence pairs, the source and target examples are retrieved independently as representative endpoints of the attributed movement. Figure~\ref{fig:court_temporal_modes}(a) displays \emph{privacy}, \emph{search}, and \emph{standing}, the three highest $E_{\max}$ lemmas, together with \emph{expectation} as a synchronized contrast. The decisions cited below are historical anchors for the retrieved language, rather than the claimed causes of a corpus-level peak.
After numerical selection, usages are assigned by maximum posterior probability. A fixed retrieval score favors proximity to component centers and clear target-lemma matches (Appendix~\ref{app:court_audits}). Source and target passages are representative endpoints, not observed sentence pairs. Figure~\ref{fig:court_temporal_modes}(a) shows \emph{privacy}, \emph{search}, and \emph{standing}, the three lemmas with the highest $E_{\max}$ scores, alongside \emph{expectation}, whose modes peak together. Cited decisions situate the attributed usage movements in their doctrinal context.

\textbf{Privacy.} \emph{Privacy} ranks first in both Court rankings. Its standardized profile peaks at $1960{\rightarrow}1970$ ($z=1.43$), where center displacement supplies $93.3\%$ of the variation. The selected pair carries $2.46\%$ of the transported mass, but $37.1\%$ of the contribution ($E_{\max}=15.10$). Its source invokes Prosser's ``four categories'' (opinion~\texttt{1179312}) and its target says that warrantless inspection poses ``only a minimal threat to justifiable expectations of privacy'' (opinion~\texttt{1602980}). At the preceding step, the leading pair moves from privacy as ``a direct wrong of a personal character'' (opinion~\texttt{2604478}) toward the same Prosser component. The successive pairs place Prosser's tort classification \citep{prosser1960privacy} between an earlier privacy tort rationale and post-\emph{Katz} expectation-of-privacy language \citep{katz1967unitedstates}. The modes of this lemma separately distinguish tort, constitutional, informational, search, and family privacy across several decades, consistent with legal scholarship treating privacy as context-dependent \citep{solove2006taxonomy}. The underlying passages and case anchors appear in Appendix~\ref{app:court_audits}.

\textbf{Expectation.} Its standardized profile peaks at $1960{\rightarrow}1970$ ($z=2.20$), where center displacement supplies $86.8\%$ of the variation. Sources concern ``an expectation of pay or compensation'' and contributions made ``without expectation of services or direct benefits'' (opinions~\texttt{2226138}, \texttt{1636296}). Targets ask whether the accused showed ``a reasonable expectation of privacy infringed by the search and seizure'' and describe an area ``protected by an expectation of privacy'' (opinions~\texttt{1365600}, \texttt{1311660}). The transition moves from ordinary anticipation toward the post-\emph{Katz} Fourth Amendment standard \citep{katz1967unitedstates,rakas1978illinois}. The three leading modes of this lemma, carrying $48\%$, $26\%$, and $18\%$ of retained mean-path energy, all peak at this transition. Unlike \emph{privacy}, the change is synchronized rather than staggered.  The Court dataset thus demonstrates the complete empirical chain from independently sampled period distributions to a coupled process, exact contributions, temporal modes, and passages that can be checked independently.

\section{Conclusion}
\label{sec:conclusion}
Diachronic corpora provide period specific usage distributions without observed trajectories. CUSP turns them into one marginal preserving process whose adjacent couplings determine longer span correspondence. Its central contribution is a common process yielding magnitude and timing, exact accounting by mechanism and transported component pair, word-local modes, and representative passages. Gaussian specialization makes the operators and squared distances tractable and recoverable. Synthetic, DWUG, Janus, and Court experiments support recovery, scalar validity, multi-period coherence, and passage-grounded attribution. CUSP thus makes scalar measurement, coherent temporal structure, and inspectable explanation compatible views of one lexical history.

\section{Acknowledgements}
We thank Ryoma Kondo, Hiroaki Yamada, Tilmann Altwicker and Zhivko Taushanov for helpful discussions. R.H. is supported by JST FOREST Program (JPMJFR216Q), JST PRESTO Program (JPMJPR2469), Grant-in-Aid for Scientific Research (KAKENHI, JP24K03043), JSPS International Joint Research Program (JRPs with SNSF: 20251501), and the UTEC-UTokyo FSI Research Grant Program.

\bibliographystyle{unsrtnat}
\bibliography{references}
\appendix

\section{Appendix guide and notation}
\label{appx:notation}

The appendix follows the construction in the main text. Appendix~\ref{subsec:sense_mixture_identifiability} formalizes identifiability of the period specific mixture representation. Appendix~\ref{app:basic_process_properties} proves marginal preservation and the basic properties of the coupled process. Appendices~\ref{app:decomposition_properties} and \ref{appx:geometric_characterizations} establish the exact mean--deviation and sense-transition decompositions and the trace and mode-wise geometry, respectively. Appendix~\ref{app:gaussian_expressions} gives explicit Gaussian formulas, and Appendix~\ref{app:dynamic_programming_transition} shows how to compute non-adjacent transition quantities without enumerating latent paths. Appendix~\ref{appx:statistical_recovery} proves the finite-sample recovery result. Appendix~\ref{app:experiments} then provides the complete experimental constructions, robustness checks, rankings, and passage analyses.

Throughout the paper, \(w\) denotes a target word, \(t,s\) denote periods, \(k,\ell\) denote period local components, and \(r\) denotes a word-local mode. Component numbers are local to each period and do not identify the same sense across time. Their relation across periods is determined by the fitted coupling. We use \(\bullet\in\{\mathrm{full},\mathrm{mean},\mathrm{dev}\}\) for statements that apply to any of the three processes and \(\rho\in\{\mathrm{tr},1,\ldots,d\}\) for either the trace or a mode-wise readout. Table~\ref{tab:notation} collects the notation used throughout the paper and appendix.

\begin{table*}[!htbp]
\centering
\caption{Notation used in the CUSP construction, theoretical results, and experiments.}
\label{tab:notation}
\footnotesize
\setlength{\tabcolsep}{5pt}
\renewcommand{\arraystretch}{1.06}
\begin{tabular}{@{}p{0.235\textwidth}p{0.725\textwidth}@{}}
\toprule
\textbf{Notation} & \textbf{Meaning} \\
\midrule
\multicolumn{2}{@{}l}{\textit{Indices, spaces, and period distributions}} \\
\(V,[T],d\) & Vocabulary, period index set \([T]=\{1,\ldots,T\}\), and contextual-embedding dimension. \\
\(w,t,s,k,\ell,r\) & Word, periods, period local source and target components, and word-local mode. \\
\(\mathcal P_2(\mathbb R^d),\Pi(\mu,\nu)\) & Probability measures with finite second moment, and couplings with marginals \(\mu\) and \(\nu\). \\
\(X_{t,i}^{(w)}\sim\gamma_t^{(w)}\) & Contextual embedding of usage \(i\), sampled independently from the distribution of word \(w\) in period \(t\). \\
\(\gamma_t^{(w)}=\sum_k\pi_{t,k}^{(w)}\nu_{t,k}^{(w)}\) & period specific usage distribution represented as a finite mixture. \\
\(K_t^{(w)},\pi_{t,k}^{(w)},\nu_{t,k}^{(w)}\) & Number, prevalence, and contextual distribution of the period local sense components. \\
\(m_{t,k}^{(w)}\) & Center \(\int x\,d\nu_{t,k}^{(w)}(x)\) of a general sense component. \\
\midrule
\multicolumn{2}{@{}l}{\textit{Hierarchical coupling and temporal process}} \\
\(c,\mathcal A_{t,k,\ell}^{(w)}\) & Contextual transport cost and admissible couplings for the candidate transition \(k\to\ell\). \\
\(\eta_{t,k,\ell}^{(w)},C_{t,k,\ell}^{(w)}\) & Optimal adjacent contextual coupling and its scalar transport cost. \\
\(Q_t^{(w)}(k,\ell)\) & Adjacent sense-level transport mass from component \(k\) at \(t\) to component \(\ell\) at \(t+1\). \\
\(P_t^{(w)}(k,\ell)\) & Sense transition probability \(Q_t^{(w)}(k,\ell)/\pi_{t,k}^{(w)}\). \\
\(\mathcal K_{t,k,\ell}^{(w)}(x,dy)\) & Contextual transition kernel obtained by disintegrating \(\eta_{t,k,\ell}^{(w)}\). \\
\(Z_{1:T}^{(w)},X_{1:T}^{(w)}\) & Coupled latent sense process and contextual state process. \\
\(R_{t,s}^{(w)}(k,\ell)\) & Non-adjacent conditional probability \(\Pr(Z_s^{(w)}=\ell\mid Z_t^{(w)}=k)\). \\
\(q^{(w)}(t,s;k,\ell)\) & Joint transported mass \(\Pr(Z_t^{(w)}=k,Z_s^{(w)}=\ell)=\pi_{t,k}^{(w)}R_{t,s}^{(w)}(k,\ell)\). \\
\midrule
\multicolumn{2}{@{}l}{\textit{Processes, operators, decompositions, and modes}} \\
\(\phi_{\mathrm{full}}^{(w)},\phi_{\mathrm{mean}}^{(w)},\phi_{\mathrm{dev}}^{(w)}\) & Contextual state, transported sense center, and centered within-sense deviation processes. \\
\(\Delta\phi_{\bullet}^{(w)}(t,s)\) & Coupled displacement \(\phi_{\bullet}^{(w)}(t)-\phi_{\bullet}^{(w)}(s)\). \\
\(\mathbf M_{\bullet}^{(w)}(t,s)\) & Second-moment operator of \(\Delta\phi_{\bullet}^{(w)}(t,s)\). \\
\(\mathbf C_{\bullet}^{(w)}\) &
For \(\bullet\in\{\mathrm{mean},\mathrm{dev}\}\), \(\mathbf C_{\bullet}^{(w)}(t,s;k,\ell)\) is the operator contribution conditional on transported component pair \((k,\ell)\). \\
\(\bullet\in\{\mathrm{full},\mathrm{mean},\mathrm{dev}\}\) & Choice of the full contextual, between center, or within component process. \\
\(\rho\in\{\mathrm{tr},1,\ldots,d\}\) & Trace readout or word-local mode index. \\
\(d_{\bullet,\mathrm{tr}}^{(w)}(t,s),d_{\bullet,r}^{(w)}(t,s)\) & Trace distance and distance along word-local mode \(r\). \\
\(\mathcal M_{\mathrm{mean}}^{(w)}\) & Word-local path operator \(\sum_{t=1}^{T-1}\mathbf M_{\mathrm{mean}}^{(w)}(t,t+1)\). \\
\((\lambda_r^{(w)},\bm u_r^{(w)})\) & Eigenvalue and eigenvector of \(\mathcal M_{\mathrm{mean}}^{(w)}\). The eigenvector defines word-local mode \(r\). \\
\midrule
\multicolumn{2}{@{}l}{\textit{Gaussian specialization and recovery}} \\
\(\mu_{t,k}^{(w)},\Sigma_{t,k}^{(w)}\) & Mean and covariance of Gaussian sense component \(\nu_{t,k}^{(w)}\). \\
\(A_{t,k,\ell}^{(w)},T_{t,k,\ell}^{(w)}\) & Linear part and affine map of the adjacent optimal Gaussian coupling. \\
\(M_{t,s}^{(w)}\) & Product of adjacent Gaussian linear maps along a latent sense path from \(t\) to \(s\), transposed according to Appendix~\ref{app:gaussian_expressions}. \\
\(\Xi^{(w)}(t,s;k,\ell)\) & Endpoint conditioned cross-period covariance used in \(\mathbf C_{\mathrm{dev}}^{(w)}(t,s;k,\ell)\). \\
\(n_t^{(w)},n^{(w)}\) & Period sample size and minimum sample size \(\min_t n_t^{(w)}\) used in the recovery theorem. \\
\(\widehat{\theta}\) & Estimated version of a population quantity \(\theta\). \\
\midrule
\multicolumn{2}{@{}l}{\textit{Court readouts}} \\
\(d_t,z_t\) & Adjacent full-trace distance and its within-lemma standardized value. \\
\(a_{k\ell},\alpha_{k\ell}\) & Squared full-trace contribution of pair \((k,\ell)\) at the selected peak and its share of total peak variation. \\
\(E_{\max}^{(w)}\) & Maximum transition enrichment \(\max_{q_{k\ell}>0}\alpha_{k\ell}/q_{k\ell}\). \\
\(e_r,\tau_r\) & Share of mean-path energy carried by mode \(r\) and the adjacent period at which that mode peaks. \\
\(S_2\) & Secondary-mode temporal-separation score used when the two leading retained modes peak in different periods. \\
\bottomrule
\end{tabular}
\end{table*}

\subsection{Structural comparison with related approaches}

Table~\ref{tab:method_comparison_compact} compares the structures provided by closely related approaches. A checkmark denotes a directly provided capability, $\triangle$ a related but non-equivalent object, and a dash a capability that is not directly provided. The comparison concerns the joint set of capabilities rather than any single column. Compositional coupling means that adjacent and non-adjacent usage--component correspondences arise from one multi-period process rather than independent fits. An exact mechanism split and transition attribution decompose the same change quantity by mechanism and transported component pair, respectively. Temporal modes are word-local directions with period local activity profiles. For SynFlow, $\triangle$ denotes related filler cluster continuity and value contributions rather than transported usage--component accounting.

\begin{table*}[t]
\centering
\caption{Structural comparison with closely related approaches.}
\label{tab:method_comparison_compact}
\small
\setlength{\tabcolsep}{2.5pt}
\renewcommand{\arraystretch}{1.15}
\begin{tabularx}{\textwidth}{@{}>{\raggedright\arraybackslash}X*{6}{c}@{}}
\toprule
& \multicolumn{3}{c}{\textbf{Temporal representation}}
& \multicolumn{3}{c}{\textbf{Attribution from that representation}} \\
\cmidrule(lr){2-4}
\cmidrule(lr){5-7}
\textbf{Approach}
& \shortstack{\textbf{Multi-}\\\textbf{period}}
& \shortstack{\textbf{Usage/component}\\\textbf{correspondence}}
& \shortstack{\textbf{Compositional}\\\textbf{coupling}}
& \shortstack{\textbf{Exact mechanism}\\\textbf{split}}
& \shortstack{\textbf{Transition}\\\textbf{attribution}}
& \shortstack{\textbf{Temporal}\\\textbf{modes}} \\
\midrule
Dynamic and evolving sense models
\citep{frermann2016bayesian,hu2019diachronic}
& \checkmark & \checkmark & -- & $\triangle$ & -- & -- \\

WiDiD \citep{periti2022done,periti2025studying}
& \checkmark & \checkmark & -- & -- & $\triangle$ & -- \\

Multi-period similarity analysis \citep{kiyama2025analyzing}
& \checkmark & -- & -- & -- & -- & -- \\

SynFlow \citep{phantat2026synflow}
& \checkmark & $\triangle$ & -- & $\triangle$ & $\triangle$ & -- \\

Pairwise OT \citep{montariol2021transport}
& $\triangle$ & \checkmark & -- & -- & -- & -- \\

ConShift \citep{arrington2025conshift}
& $\triangle$ & \checkmark & -- & $\triangle$ & $\triangle$ & -- \\

UOT / SUS \citep{kishino2025quantifying}
& $\triangle$ & \checkmark & -- & -- & $\triangle$ & -- \\

Directional methods
\citep{aida2025investigating,aida2025scdtour}
& -- & -- & -- & -- & -- & $\triangle$ \\

MENT \citep{ezoe2026multiscale}
& \checkmark & -- & -- & $\triangle$ & -- & \checkmark \\

\midrule
\textbf{CUSP (ours)}
& \checkmark & \checkmark & \checkmark
& \checkmark & \checkmark & \checkmark \\
\bottomrule
\end{tabularx}
\end{table*}

\section{Identifiability of the sense mixture representation}
\label{subsec:sense_mixture_identifiability}

The mixture representation of a period specific word distribution is not unique in general, since different parameterizations may induce the same distribution. The following assumption formalizes the identifiability condition stated in the main text: the mixture representation is uniquely determined up to permutation
of the sense labels.

\begin{assumption}[Identifiability up to sense-label permutation]
\label{ass:identifiability}

Suppose that two mixture representations
\[
\sum_{k=1}^{K_t^{(w)}}
\pi_{t,k}^{(w)}
\nu_{t,k}^{(w)}
\qquad\text{and}\qquad
\sum_{k=1}^{\tilde K_t^{(w)}}
\tilde{\pi}_{t,k}^{(w)}
\tilde{\nu}_{t,k}^{(w)}
\]
represent the same period specific word distribution. Then
\[
K_t^{(w)}=\tilde K_t^{(w)},
\]
and there exists a permutation
\(\sigma\) of
\(\{1,\ldots,K_t^{(w)}\}\)
such that
\[
\tilde{\pi}_{t,k}^{(w)}
=
\pi_{t,\sigma(k)}^{(w)},
\qquad
\tilde{\nu}_{t,k}^{(w)}
=
\nu_{t,\sigma(k)}^{(w)},
\qquad
k=1,\ldots,K_t^{(w)}.
\]
\end{assumption}

Consequently, all label-dependent quantities are defined up to the corresponding label permutation. In the statistical recovery analysis, whenever population and estimated quantities are compared, we implicitly choose the permutation that aligns the corresponding sense labels.

The Gaussian-mixture specialization used in the statistical recovery analysis provides a concrete admissible class of mixture distributions and their parameterizations satisfying the identifiability condition above. The restrictions defining this class correspond to the first part of Assumption~\ref{ass:gaussian_estimation_regularity}.

\begin{definition}[Admissible Gaussian mixture class]
\label{def:gmm_class}

The admissible Gaussian mixture class consists of Gaussian mixture
distributions
\[
\gamma
=
\sum_{k=1}^{K}
\pi_k\nu_k,
\qquad
\nu_k=\mathcal N(m_k,\Sigma_k),
\]
together with parameterizations satisfying
\begin{itemize}
\item
\(
\pi_k>0,
\qquad
\sum_{k=1}^{K}\pi_k=1;
\)
\item
\(
\Sigma_k\succ 0,
\qquad k=1,\ldots,K;
\)
\item the component parameters are pairwise distinct:
\[
(m_k,\Sigma_k)\neq(m_\ell,\Sigma_\ell),
\qquad
k\neq\ell.
\]
\end{itemize}
\end{definition}

\section{Proofs of basic properties of the coupled usage-sense processes}
\label{app:basic_process_properties}

We prove Proposition~\ref{prop:marginal_preservation}, which establishes preservation of the prescribed sense prevalences and contextual component distributions at each period, and consequently of the observed usage distributions as the period-wise marginals of the resulting joint process.

\begin{proof}
Fix \(w\in V\), and suppress the superscript \((w)\) throughout the proof.
We first prove the preservation of the sense marginals by induction.
By construction,
\[
\Pr(Z_1=k)=\pi_{1,k}
\]
for every \(k\in[K_1]\), so the claim holds at \(t=1\).
If \(\Pr(Z_t=k)=\pi_{t,k}\) for every \(k\in[K_t]\), then
\[
\begin{aligned}
\Pr(Z_{t+1}=\ell)
&=
\sum_{k\in[K_t]} \Pr(Z_t=k)P_t(k,\ell)\\
&=
\sum_{k\in[K_t]} \pi_{t,k}
\frac{Q_t(k,\ell)}{\pi_{t,k}}
=
\sum_{k\in[K_t]} Q_t(k,\ell)
=
\pi_{t+1,\ell},
\end{aligned}
\]
where \(P_t(k,\ell)=Q_t(k,\ell)/\pi_{t,k}\) is the transition probability, and the last equality follows from the second marginal constraint in
\ref{eq:sense_transition_distribution}, namely
\(\sum_{k\in[K_t]} Q_t(k,\ell)=\pi_{t+1,\ell}\).
Thus, by induction,
\[
\Pr(Z_t=k)=\pi_{t,k}
\]
for all \(t\in[T]\) and \(k\in[K_t]\).

We next prove the preservation of the conditional contextual
distributions by induction. Fix a sense path \(z_{1:T}\) with positive probability.
By construction,
\[
\mathcal L(X_1\mid Z_{1:T}=z_{1:T})=\nu_{1,z_1},
\]
so the claim holds at \(t=1\). Suppose that
\[
\mathcal L(X_t\mid Z_{1:T}=z_{1:T})=\nu_{t,z_t}.
\]
Then, for any measurable set \(A\subseteq\mathbb R^d\),
\[
\begin{aligned}
\Pr(X_{t+1}\in A\mid Z_{1:T}=z_{1:T})
&=
\int
\mathcal K_{t,z_t,z_{t+1}}(x,A)\,
\nu_{t,z_t}(dx)\\
&=\int
\mathbf{1}_A(y)\,
\eta_{t,z_t,z_{t+1}}(dx,dy)\\
&=
\nu_{t+1,z_{t+1}}(A),
\end{aligned}
\]
where we used the transition kernel defined by
\[
\eta_{t,z_t,z_{t+1}}(dx,dy)
=
\nu_{t,z_t}(dx)
\mathcal K_{t,z_t,z_{t+1}}(x,dy),
\]
and the last equality follows from the marginal constraint in \ref{eq:pairwise_distribution_coupling}, namely \(\eta_{t,z_t,z_{t+1}}\in \Pi(\nu_{t,z_t},\nu_{t+1,z_{t+1}})\).
Thus, by induction,
\[
\mathcal L(X_t\mid Z_{1:T}=z_{1:T})=\nu_{t,z_t}
\]
for all \(t\in[T]\), establishing the desired preservation of the conditional contextual distributions at each period.

Finally, using the sense marginal preservation and the conditional
distribution established above, the marginal law of \(X_t\) is
\[
\begin{aligned}
\mathcal L(X_t)
&=
\sum_{z_{1:T}}
\Pr(Z_{1:T}=z_{1:T})\,
\mathcal L(X_t\mid Z_{1:T}=z_{1:T})\\
&=
\sum_k
\left(
\sum_{z_{1:T}:z_t=k}
\Pr(Z_{1:T}=z_{1:T})
\right)
\nu_{t,k}\\
&=
\sum_{k\in[K_t]}
\pi_{t,k}\nu_{t,k}
=
\gamma_t.
\end{aligned}
\]
Consequently,
\[
\mathcal L(X_1,\ldots,X_T)
\in
\Pi(\gamma_1,\ldots,\gamma_T),
\]
which proves the proposition.
\end{proof}

We prove the reconstruction and conditional centering properties stated in
\eqref{eq:full_mean_dev_decomposition}. The reconstruction identity
follows directly from the definitions, while the conditional centering property follows from the preservation
of the conditional marginals established in
Proposition~\ref{prop:marginal_preservation}.

\begin{proof}
By definition,
\[
\phi_{\mathrm{full}}^{(w)}(t)
=
\phi_{\mathrm{mean}}^{(w)}(t)
+
\phi_{\mathrm{dev}}^{(w)}(t),
\]
so the reconstruction identity holds.

Moreover, by Proposition~\ref{prop:marginal_preservation},
for any sense path \(z_{1:T}\) with positive probability,
\[
\begin{aligned}
\mathbb E\left[
X_t^{(w)}
\mid
Z_{1:T}^{(w)}=z_{1:T}
\right]
=
\int x\,d\nu_{t,z_t}^{(w)}(x)
=
m_{t,z_t}^{(w)}.
\end{aligned}
\]
Therefore,
\[
\begin{aligned}
\mathbb E\left[
\phi_{\mathrm{dev}}^{(w)}(t)
\mid
Z_{1:T}^{(w)}
\right]
&=
\mathbb E\left[
X_t^{(w)}
-
m_{t,Z_t^{(w)}}^{(w)}
\mid
Z_{1:T}^{(w)}
\right]\\
&=
m_{t,Z_t^{(w)}}^{(w)}
-
m_{t,Z_t^{(w)}}^{(w)}
=
0.
\end{aligned}
\]
Thus, both identities in
\eqref{eq:full_mean_dev_decomposition} hold.
\end{proof}

\section{Proofs of exact mechanistic and sense-transition attribution}
\label{app:decomposition_properties}

We first prove Theorem~\ref{thm:exact_mean_deviation_decomposition}. The key
step is that the deviation process is conditionally centered given the
full latent sense sequence, so the cross terms between the mean and
deviation processes vanish.

\begin{proof}
Fix \(w\in V\) and \(t,s\in[T]\). By the reconstruction identity in
\eqref{eq:full_mean_dev_decomposition},
\[
\begin{aligned}
\Delta\phi_{\mathrm{full}}^{(w)}(t,s)
&=
\Delta\phi_{\mathrm{mean}}^{(w)}(t,s)
+
\Delta\phi_{\mathrm{dev}}^{(w)}(t,s).
\end{aligned}
\]
Therefore, expanding the second-moment operator gives
\[
\begin{aligned}
\mathbf M_{\mathrm{full}}^{(w)}(t,s)
&=
\mathbf M_{\mathrm{mean}}^{(w)}(t,s)
+
\mathbf M_{\mathrm{dev}}^{(w)}(t,s)
\\
&\quad+
\mathbb E\left[
\Delta\phi_{\mathrm{mean}}^{(w)}(t,s)
\Delta\phi_{\mathrm{dev}}^{(w)}(t,s)^\top
\right]
\\
&\quad+
\mathbb E\left[
\Delta\phi_{\mathrm{dev}}^{(w)}(t,s)
\Delta\phi_{\mathrm{mean}}^{(w)}(t,s)^\top
\right].
\end{aligned}
\]
We first consider the first cross term.
\[
\begin{aligned}
&
\mathbb E\left[
\Delta\phi_{\mathrm{mean}}^{(w)}(t,s)
\Delta\phi_{\mathrm{dev}}^{(w)}(t,s)^\top
\right]
\\
&=
\mathbb E\left[
\left(
m_{t,Z_t^{(w)}}^{(w)}
-
m_{s,Z_s^{(w)}}^{(w)}
\right)
\mathbb E\left[
\Delta\phi_{\mathrm{dev}}^{(w)}(t,s)^\top
\mid
Z_{1:T}^{(w)}
\right]
\right]
\\
&=0,
\end{aligned}
\]
where the last equality follows from \eqref{eq:full_mean_dev_decomposition}.
The second cross term vanishes
analogously. Consequently,
\[
\mathbf M_{\mathrm{full}}^{(w)}(t,s)
=
\mathbf M_{\mathrm{mean}}^{(w)}(t,s)
+
\mathbf M_{\mathrm{dev}}^{(w)}(t,s),
\]
which proves \eqref{eq:mean_dev_operator_decomposition}.

Applying the trace and the quadratic form
\(\bm u_r^\top(\cdot)\bm u_r\) gives
\eqref{eq:mean_dev_distance_decomposition} for every
$\rho\in\{\mathrm{tr},1,\ldots,d\}$.
\end{proof}

We next prove Theorem~\ref{thm:exact_sense_transition_attribution}. We first
decompose each second-moment operator by conditioning on the latent sense
path and then group the resulting terms according to the pair
$(Z_t^{(w)},Z_s^{(w)})$.

\begin{proof}
Fix $w\in V$, $t,s\in[T]$, and
$\bullet\in\{\mathrm{mean},\mathrm{dev}\}$.

For the mean process,
\[
\begin{aligned}
\mathbf M_{\mathrm{mean}}^{(w)}(t,s)
&=
\sum_{z_{1:T}}
\Pr(Z_{1:T}^{(w)}=z_{1:T})
\left(
m_{t,z_t}^{(w)}-m_{s,z_s}^{(w)}
\right)
\left(
m_{t,z_t}^{(w)}-m_{s,z_s}^{(w)}
\right)^\top
\\
&=
\sum_{k=1}^{K_t^{(w)}}
\sum_{\ell=1}^{K_s^{(w)}}
\left(
\sum_{z_{1:T}:z_t=k,\;z_s=\ell}
\Pr(Z_{1:T}^{(w)}=z_{1:T})
\right)
\\
&\qquad\qquad\cdot
\left(
m_{t,k}^{(w)}-m_{s,\ell}^{(w)}
\right)
\left(
m_{t,k}^{(w)}-m_{s,\ell}^{(w)}
\right)^\top
\\
&=
\sum_{k=1}^{K_t^{(w)}}
\sum_{\ell=1}^{K_s^{(w)}}
q^{(w)}(t,s;k,\ell)
\mathbf C_{\mathrm{mean}}^{(w)}(t,s;k,\ell).
\end{aligned}
\]

For the deviation process, first note that, for any
$k\in[K_t^{(w)}]$ and $\ell\in[K_s^{(w)}]$,
\[
\begin{aligned}
&\mathbb E\left[
X_t^{(w)}-X_s^{(w)}
\mid
Z_t^{(w)}=k,Z_s^{(w)}=\ell
\right]
\\
&=
\sum_{z_{1:T}:z_t=k,\;z_s=\ell}
\Pr\left(
Z_{1:T}^{(w)}=z_{1:T}
\mid
Z_t^{(w)}=k,Z_s^{(w)}=\ell
\right)
\\
&\qquad\qquad\cdot
\mathbb E\left[
X_t^{(w)}-X_s^{(w)}
\mid
Z_{1:T}^{(w)}=z_{1:T}
\right]
\\
&=
\sum_{z_{1:T}:z_t=k,\;z_s=\ell}
\Pr\left(
Z_{1:T}^{(w)}=z_{1:T}
\mid
Z_t^{(w)}=k,Z_s^{(w)}=\ell
\right)
\\
&\qquad\qquad\cdot
\left(
m_{t,z_t}^{(w)}-m_{s,z_s}^{(w)}
\right)
\\
&=
m_{t,k}^{(w)}-m_{s,\ell}^{(w)},
\end{aligned}
\]
where the second equality follows from the conditional marginal
preservation in Proposition~\ref{prop:marginal_preservation}. Hence,
\[
\begin{aligned}
&
\mathbb E\left[
\Delta\phi_{\mathrm{dev}}^{(w)}(t,s)
\Delta\phi_{\mathrm{dev}}^{(w)}(t,s)^\top
\mid
Z_t^{(w)}=k,Z_s^{(w)}=\ell
\right]
\\
&=
\mathbb E\left[
\left\{
X_t^{(w)}-X_s^{(w)}
-
\left(
m_{t,k}^{(w)}-m_{s,\ell}^{(w)}
\right)
\right\}
\left\{
X_t^{(w)}-X_s^{(w)}
-
\left(
m_{t,k}^{(w)}-m_{s,\ell}^{(w)}
\right)
\right\}^{\top}
\middle|
Z_t^{(w)}=k,Z_s^{(w)}=\ell
\right]
\\
&=
\operatorname{Cov}\left(
X_t^{(w)}-X_s^{(w)}
\mid
Z_t^{(w)}=k,Z_s^{(w)}=\ell
\right)
\\
&=
\mathbf C_{\mathrm{dev}}^{(w)}(t,s;k,\ell).
\end{aligned}
\]
Therefore, conditioning on the pair $(Z_t^{(w)},Z_s^{(w)})$ gives
\[
\begin{aligned}
\mathbf M_{\mathrm{dev}}^{(w)}(t,s)
&=
\sum_{k=1}^{K_t^{(w)}}
\sum_{\ell=1}^{K_s^{(w)}}
q^{(w)}(t,s;k,\ell)
\\
&\qquad\qquad\cdot
\mathbb E\left[
\Delta\phi_{\mathrm{dev}}^{(w)}(t,s)
\Delta\phi_{\mathrm{dev}}^{(w)}(t,s)^\top
\mid
Z_t^{(w)}=k,Z_s^{(w)}=\ell
\right]
\\
&=
\sum_{k=1}^{K_t^{(w)}}
\sum_{\ell=1}^{K_s^{(w)}}
q^{(w)}(t,s;k,\ell)
\mathbf C_{\mathrm{dev}}^{(w)}(t,s;k,\ell).
\end{aligned}
\]

Thus, \eqref{eq:sense_transition_operator_decomposition} follows for both
$\bullet\in\{\mathrm{mean},\mathrm{dev}\}$.

Applying the trace and the quadratic form
$\bm u_r^{(w)\top}(\cdot)\bm u_r^{(w)}$ gives
\eqref{eq:trace_sense_transition}
and~\eqref{eq:mode_sense_transition}.
\end{proof}

\section{Proofs of Properties of Trace and Mode-Wise Geometry}
\label{appx:geometric_characterizations}

\subsection{Validity and properties of the induced semantic distances}
\label{subsec:semantic_distance_properties}

The semantic distances introduced in the main text define the geometry of
semantic change on the period index set. This subsection establishes the pseudometric properties of the semantic distances induced by the corresponding \(L^2\) random-vector representations.

%This subsection establishes two basic properties of these distances. First, since they are induced by \(L^2\) distances between the corresponding random-vector representations, they satisfy the pseudo-metric properties. Second, they are invariant under global changes of coordinates in the embedding space, provided that the coupled process construction is equivariant under the corresponding transformations.

The first property concerns the pseudo-metric structure induced by the
random-vector representations.
\begin{proposition}[Pseudo-metric properties of trace and mode-wise semantic distances]
\label{prop:semantic_distance_pseudometric}

For every \(w\in V\) and
\(\bullet\in\{\mathrm{full},\mathrm{mean},\mathrm{dev}\}\),
the quantities
\[
d_{\bullet,\mathrm{tr}}^{(w)}(t,s),
\qquad
d_{\bullet,r}^{(w)}(t,s),\quad r\in[d],
\]
define finite pseudo-metrics on \([T]\).

Moreover, the zero-distance conditions are given by
\[
d_{\bullet,\mathrm{tr}}^{(w)}(t,s)=0
\iff
\phi_{\bullet}^{(w)}(t)
=
\phi_{\bullet}^{(w)}(s)
\quad\text{almost surely},
\]
and
\[
d_{\bullet,r}^{(w)}(t,s)=0
\iff
\bm u_r^\top
\phi_{\bullet}^{(w)}(t)
=
\bm u_r^\top
\phi_{\bullet}^{(w)}(s)
\quad\text{almost surely}.
\]
\end{proposition}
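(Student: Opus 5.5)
The plan is to identify each distance with an \(L^2(\Omega;\mathbb R^d)\) seminorm of a difference of random vectors, and then read off the pseudometric axioms from the properties of that seminorm. Fix \(w\) and \(\bullet\), and suppress the superscript.

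\emph{Finiteness.} Each component \(\nu_{t,k}\in\mathcal P_2(\mathbb R^d)\), and Proposition~\ref{prop:marginal_preservation} gives \(\mathcal L(X_t)=\gamma_t\), so \(\mathbb E\|X_t\|^2=\sum_k\pi_{t,k}\int\|x\|^2d\nu_{t,k}<\infty\). Since \(K_t\) is finite, the centers \(m_{t,k}\) are finite vectors, so \(\phi_{\mathrm{mean}}(t)=m_{t,Z_t}\) is bounded. It follows that \(\phi_{\mathrm{dev}}(t)=X_t-\phi_{\mathrm{mean}}(t)\) is also in \(L^2\).

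\emph{Norm identity.} By linearity of trace and expectation,
\[
\operatorname{tr}\mathbf M_\bullet(t,s)=\mathbb E\|\phi_\bullet(t)-\phi_\bullet(s)\|^2,
\qquad
\bm u_r^\top\mathbf M_\bullet(t,s)\bm u_r=\mathbb E\bigl(\bm u_r^\top\phi_\bullet(t)-\bm u_r^\top\phi_\bullet(s)\bigr)^2 .
\]
Hence \(d_{\bullet,\mathrm{tr}}(t,s)=\|\phi_\bullet(t)-\phi_\bullet(s)\|_{L^2}\) and \(d_{\bullet,r}(t,s)=\|\bm u_r^\top\phi_\bullet(t)-\bm u_r^\top\phi_\bullet(s)\|_{L^2}\). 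An important point is that all \(\phi_\bullet(t)\), \(t\in[T]\), live on the single probability space \((\Omega,\mathcal F,\mathbb P)\) carrying the one coupled process. Without this, the triangle inequality could fail, because the composed correspondences \(t\to u\to s\) would not coincide with the \(t\to s\) correspondence.

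\emph{Pseudometric axioms.} Nonnegativity and \(d(t,t)=0\) are immediate. Symmetry holds because \(\Delta\phi_\bullet(s,t)=-\Delta\phi_\bullet(t,s)\), so \(\mathbf M_\bullet\) is symmetric in \((t,s)\). For the triangle inequality, write
\[
\phi_\bullet(t)-\phi_\bullet(s)=\bigl(\phi_\bullet(t)-\phi_\bullet(u)\bigr)+\bigl(\phi_\bullet(u)-\phi_\bullet(s)\bigr)
\]
and apply Minkowski's inequality in \(L^2(\Omega;\mathbb R^d)\). The same argument works for the scalar projections \(\bm u_r^\top\phi_\bullet\).

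\emph{Zero-distance characterization.} We have \(\mathbb E\|Y\|^2=0\) if and only if \(Y=0\) almost surely. Applying this to \(Y=\Delta\phi_\bullet(t,s)\) handles the trace distance, and applying it to the scalar projection \(\bm u_r^\top\Delta\phi_\bullet(t,s)\) handles the mode-wise distance.

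No step is a genuine obstacle. The only point needing care is the remark above, that the triangle inequality relies on the single joint coupling built in Proposition~\ref{prop:marginal_preservation}, together with the \(L^2\) membership of the mean and deviation processes.
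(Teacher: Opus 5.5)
Your proposal is correct and follows essentially the same route as the paper: establish \(L^2(\Omega;\mathbb R^d)\) membership of the full, mean, and deviation processes via marginal preservation and the finiteness of the center set, then identify each distance with the \(L^2\) norm of a (projected) displacement. Nonnegativity and symmetry, the triangle inequality via Minkowski, and the zero-distance conditions then follow from the norm, and your explicit remark that the triangle inequality rests on all periods living on the single coupled probability space is a point the paper's proof uses only implicitly.
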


\textbf{Remark.}
Zero distance implies equality only in the corresponding random-vector representation and does not necessarily imply \(t=s\). For the mode-wise distances, the zero-distance condition only concerns the projection onto the corresponding direction \(\bm u_r\).

\begin{proof}
Since
\(\nu_{t,k}^{(w)}\in\mathcal P_2(\mathbb R^d)\),
the mixture distribution satisfies
\(\gamma_t^{(w)}\in\mathcal P_2(\mathbb R^d)\).
By the marginal consistency of the full process in \eqref{eq:joint_coupling_marginals},
\(\phi_{\mathrm{full}}^{(w)}(t)\in L^2(\Omega;\mathbb R^d)\).
Moreover,
\(\phi_{\mathrm{mean}}^{(w)}(t)\in L^2(\Omega;\mathbb R^d)\),
since it takes values in the finite set of sense centers
\(\{m_{t,k}^{(w)}\}_{k=1}^{K_t^{(w)}}\).
Therefore,
\[
\phi_{\mathrm{dev}}^{(w)}(t)
=
\phi_{\mathrm{full}}^{(w)}(t)
-
\phi_{\mathrm{mean}}^{(w)}(t)
\in L^2(\Omega;\mathbb R^d) .
\]
Hence, all the following \(L^2\) distances are well-defined.

For the trace distance, the definition gives
\[
d_{\bullet,\mathrm{tr}}^{(w)}(t,s)
=
\left(
\mathbb E
\left[
\left\|
\phi_{\bullet}^{(w)}(t)
-
\phi_{\bullet}^{(w)}(s)
\right\|^2
\right]
\right)^{1/2}
=
\left\|
\phi_{\bullet}^{(w)}(t)
-
\phi_{\bullet}^{(w)}(s)
\right\|_{L^2(\Omega;\mathbb R^d)}.
\]
Hence,
\(d_{\bullet,\mathrm{tr}}^{(w)}\) is the \(L^2\) distance between the
random vectors
\(\phi_{\bullet}^{(w)}(t)\) and
\(\phi_{\bullet}^{(w)}(s)\).
Nonnegativity and symmetry follow directly from the properties of the norm,
and the triangle inequality follows from Minkowski's inequality.

Similarly,
\[
d_{\bullet,r}^{(w)}(t,s)
=
\left\|
\bm u_r^\top
\phi_{\bullet}^{(w)}(t)
-
\bm u_r^\top
\phi_{\bullet}^{(w)}(s)
\right\|_{L^2(\Omega;\mathbb R)},
\]
which is the \(L^2\) distance between the scalar random variables obtained by
projecting the process onto the direction \(\bm u_r\). Therefore, the same
arguments establish nonnegativity, symmetry, and the triangle inequality for
\(d_{\bullet,r}^{(w)}\).

Finally, an \(L^2\) norm is zero if and only if the corresponding random elements agree almost surely. Applying this property to the vector-valued and projected representations gives the stated zero-distance conditions.
\end{proof}

\subsection{Proofs of the decomposition and  characterization of the semantic geometry}
\label{subsec:semantic_geometry_proofs}

This section provides proofs of the characterization and decomposition
results for the semantic geometry introduced in the main text. The
proofs follow the same arguments as those in \cite{ezoe2026multiscale}.

The following proof establishes the decomposition of the total semantic
change into orthogonal mode-wise contributions.

\begin{proof}[Proof of \eqref{eq:trace_mode_decomposition}]
Since \(\{\bm u_r\}_{r=1}^d\) is an orthonormal basis,
\[
\operatorname{tr}\mathbf M_{\bullet}^{(w)}(t,s)
=
\sum_{r=1}^d
(\bm u_r^{(w)})^\top
\mathbf M_{\bullet}^{(w)}(t,s)
\bm u_r^{(w)}.
\]
Using the definitions of
\(d_{\bullet,\mathrm{tr}}^{(w)}(t,s)\) and
\(d_{\bullet,r}^{(w)}(t,s)\) yields the claim.
\end{proof}

%The following proof establishes the spectral characterization of the shared semantic modes induced by the aggregate second-moment operator.
The following proof establishes the spectral characterization of word-local modes induced by each word's aggregated mean operator.

\begin{proof}[Proof of Proposition~\ref{prop:word_local_modes}]
By definition,
\[
\mathcal M_{\mathrm{mean}}^{(w)}
=
\sum_{t=1}^{T-1}
\mathbf M_{\mathrm{mean}}^{(w)}(t,t+1).
\]
Since
\[
\mathcal M_{\mathrm{mean}}^{(w)}\bm u_r^{(w)}
=
\lambda_r^{(w)}\bm u_r^{(w)}
\quad\text{and}\quad
\|\bm u_r^{(w)}\|=1,
\]
we have
\[
\begin{aligned}
\lambda_r^{(w)}
&=
(\bm u_r^{(w)})^\top
\mathcal M_{\mathrm{mean}}^{(w)}
\bm u_r^{(w)}\\
&=
\sum_{t=1}^{T-1}
(\bm u_r^{(w)})^\top
\mathbf M_{\mathrm{mean}}^{(w)}(t,t+1)
\bm u_r^{(w)}\\
&=
\sum_{t=1}^{T-1}
\left(
d_{\mathrm{mean},r}^{(w)}(t,t+1)
\right)^2,
\end{aligned}
\]
which proves \eqref{eq:eigenvalue_accounting}.

Moreover, since
\(\mathcal M_{\mathrm{mean}}^{(w)}\) is symmetric,
the Courant--Fischer variational characterization gives
\[
\bm u_r^{(w)}
\in
\arg\max_{\substack{\|\bm u\|=1\\
\bm u\perp\bm u_1^{(w)},\ldots,\bm u_{r-1}^{(w)}}}
\bm u^\top
\mathcal M_{\mathrm{mean}}^{(w)}
\bm u.
\]
Substituting the definition of
\(\mathcal M_{\mathrm{mean}}^{(w)}\) yields
\[
\bm u_r^{(w)}
\in
\arg\max_{\substack{\|\bm u\|=1\\
\bm u\perp\bm u_1^{(w)},\ldots,\bm u_{r-1}^{(w)}}}
\sum_{t=1}^{T-1}
\bm u^\top
\mathbf M_{\mathrm{mean}}^{(w)}(t,t+1)
\bm u,
\]
which proves \eqref{eq:word_local_variational_characterization}.
\end{proof}

\section{Explicit expressions under the Gaussian specialization}
\label{app:gaussian_expressions}

Under the Gaussian and quadratic-cost specialization in
Section~\ref{sec:change_operators}, both the contextual coupling cost and
the sense-pair second-moment components admit explicit expressions.

First, the contextual coupling problem in
\eqref{eq:pairwise_distribution_coupling} reduces to the quadratic
\(2\)-Wasserstein problem between Gaussian measures. As established, e.g.,
in \cite{okano2024distribution,santambrogio2017euclidean}, the optimal
coupling exists and is unique under positive-definite covariance matrices,
and is induced by the affine optimal transport map given in
Proposition~\ref{prop:gaussian_optimal_coupling}. Its minimum cost is
therefore
\begin{align}
C_{t,k,\ell}^{(w)}
&=
W_2^2\!\left(
\nu_{t,k}^{(w)},\nu_{t+1,\ell}^{(w)}
\right)
\nonumber\\
&=
\left\|
\mu_{t,k}^{(w)}-\mu_{t+1,\ell}^{(w)}
\right\|^2
+
\operatorname{Tr}\!\left(
\Sigma_{t,k}^{(w)}
+
\Sigma_{t+1,\ell}^{(w)}
-
2\left[
(\Sigma_{t,k}^{(w)})^{1/2}
\Sigma_{t+1,\ell}^{(w)}
(\Sigma_{t,k}^{(w)})^{1/2}
\right]^{1/2}
\right).
%\Sigma_{t,k}^{(w)}
%+
%\Sigma_{t+1,\ell}^{(w)}
%-
%2
%\left[
%(\Sigma_{t,k}^{(w)})^{1/2}
%\Sigma_{s,\ell}^{(w)}
%(\Sigma_{t,k}^{(w)})^{1/2}
%\right]^{1/2}
%\right).
\label{eq:gaussian_sense_transition_cost}
\end{align}

We next give explicit expressions for the sense-pair second-moment
components used in
Theorem~\ref{thm:exact_sense_transition_attribution}.

\begin{proposition}[Explicit expressions for sense-pair second moments]
\label{prop:explicit_sense_components}
%Under the Gaussian specialization, for every \(w\in V\), \(t<s\), and
%\(k\in[K_t^{(w)}]\), \(\ell\in[K_s^{(w)}]\), the mean component is
Under the Gaussian specialization, for every \(w\in V\), \(t<s\), and
pair \((k,\ell)\) with \(q^{(w)}(t,s;k,\ell)>0\), the mean component is
\begin{equation}
\mathbf C_{\mathrm{mean}}^{(w)}(t,s;k,\ell)
=
\left(
\mu_{t,k}^{(w)}-\mu_{s,\ell}^{(w)}
\right)
\left(
\mu_{t,k}^{(w)}-\mu_{s,\ell}^{(w)}
\right)^\top,
\label{eq:explicit_C_mean}
\end{equation}
and the deviation component is
\begin{equation}
\mathbf C_{\mathrm{dev}}^{(w)}(t,s;k,\ell)
=
\Sigma_{t,k}^{(w)}
+
\Sigma_{s,\ell}^{(w)}
-
\Xi^{(w)}(t,s;k,\ell)
-
\Xi^{(w)}(t,s;k,\ell)^\top,
\label{eq:explicit_C_dev}
\end{equation}
where
\begin{align}
M_{t,s}^{(w)}
&:=
\left(
A_{s-1,Z_{s-1}^{(w)},Z_s^{(w)}}^{(w)}
\cdots
A_{t,Z_t^{(w)},Z_{t+1}^{(w)}}^{(w)}
\right)^\top,
\label{eq:app_def_M_ts}
\\
\Xi^{(w)}(t,s;k,\ell)
&:=
\Sigma_{t,k}^{(w)}
\mathbb E\!\left[
M_{t,s}^{(w)}
\middle|
Z_t^{(w)}=k,Z_s^{(w)}=\ell
\right]
\label{eq:app_def_Xi}
\end{align}
\end{proposition}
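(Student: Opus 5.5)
The plan is to reduce both formulas to conditioning on the full latent path $Z_{1:T}^{(w)}$, and then average over the paths that are compatible with the endpoint event $\{Z_t^{(w)}=k,Z_s^{(w)}=\ell\}$. This event has positive probability $q^{(w)}(t,s;k,\ell)$. Fix $w$ and suppress the superscript.

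\emph{Mean component.} Under the Gaussian specialization, $m_{t,k}=\int x\,d\mathcal N(\mu_{t,k},\Sigma_{t,k})(x)=\mu_{t,k}$. Substituting this into the definition of $\mathbf C_{\mathrm{mean}}(t,s;k,\ell)$ gives \eqref{eq:explicit_C_mean} immediately.

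\emph{Deviation component: pathwise dynamics.} By Proposition~\ref{prop:gaussian_optimal_coupling}, each adjacent coupling is $(\mathrm{id},T_{u,k',\ell'})_\#\nu_{u,k'}$. Its disintegration kernel is therefore the Dirac kernel $\mathcal K_{u,k',\ell'}(x,\cdot)=\delta_{T_{u,k',\ell'}(x)}$ for $\nu_{u,k'}$-a.e.\ $x$. Hence, conditionally on $Z_{1:T}=z_{1:T}$, we have almost surely $X_{u+1}-\mu_{u+1,z_{u+1}}=A_{u,z_u,z_{u+1}}(X_u-\mu_{u,z_u})$. Iterating from $u=t$ to $s-1$ gives
\[
X_s-\mu_{s,z_s}=\bigl(A_{s-1,z_{s-1},z_s}\cdots A_{t,z_t,z_{t+1}}\bigr)(X_t-\mu_{t,z_t}),
\]
that is, $X_s-\mu_{s,z_s}=M_{t,s}^\top(X_t-\mu_{t,z_t})$ with $M_{t,s}$ evaluated along the path.

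\emph{Deviation component: endpoint conditioning.} I will first use the computation in the proof of Theorem~\ref{thm:exact_sense_transition_attribution}: $\mathbb E[X_t-X_s\mid Z_t=k,Z_s=\ell]=\mu_{t,k}-\mu_{s,\ell}$. Together with Proposition~\ref{prop:marginal_preservation}, this lets me expand
\[
\mathbf C_{\mathrm{dev}}(t,s;k,\ell)=\mathbb E[D_tD_t^\top\mid\cdot]+\mathbb E[D_sD_s^\top\mid\cdot]-\mathbb E[D_tD_s^\top\mid\cdot]-\mathbb E[D_sD_t^\top\mid\cdot],
\]
where $D_t=X_t-\mu_{t,k}$, $D_s=X_s-\mu_{s,\ell}$, and $\cdot$ denotes the event $\{Z_t=k,Z_s=\ell\}$. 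The two diagonal terms are handled by conditioning on each compatible full path $z_{1:T}$. There $X_t\sim\nu_{t,k}$ and $X_s\sim\nu_{s,\ell}$ by Proposition~\ref{prop:marginal_preservation}, so the diagonal terms equal $\Sigma_{t,k}$ and $\Sigma_{s,\ell}$ on every path, and hence after averaging.

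For the cross term, the pathwise identity and the tower property give
\[
\mathbb E[D_tD_s^\top\mid Z_{1:T}=z_{1:T}]=\mathbb E[D_tD_t^\top\mid Z_{1:T}=z_{1:T}]\,M_{t,s}=\Sigma_{t,k}M_{t,s}.
\]
Here I use that $M_{t,s}$ is $Z_{1:T}$-measurable. Averaging over compatible paths, with weights $\Pr(Z_{1:T}=z_{1:T}\mid Z_t=k,Z_s=\ell)$, yields $\Xi(t,s;k,\ell)$, and the transposed cross term yields $\Xi^\top$. This gives \eqref{eq:explicit_C_dev}.

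\emph{Main obstacle.} I expect the only delicate step to be justifying the pathwise deterministic recursion. The kernel is Dirac only $\nu_{u,z_u}$-almost everywhere. The resulting identity must also be propagated across several steps using the already-established conditional law of $X_u$ given the path, and the inductive structure of Proposition~\ref{prop:marginal_preservation} handles this. Beyond that, I need to track the transpose convention in \eqref{eq:app_def_M_ts} carefully, so that $\Sigma_{t,k}M_{t,s}$ rather than $M_{t,s}\Sigma_{t,k}$ appears in \eqref{eq:app_def_Xi}.
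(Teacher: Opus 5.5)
Your proposal is correct and follows essentially the same route as the paper: condition on the full latent path, use the composed affine Gaussian maps to obtain the pathwise cross-covariance $\Sigma_{t,k}M_{t,s}$, and average over the paths compatible with $\{Z_t=k,Z_s=\ell\}$. The paper does the averaging through the law of total covariance, whose between-path term vanishes; this is equivalent to your direct expansion around the endpoint means. Your explicit justification of the pathwise deterministic map via the Dirac disintegration, holding $\nu$-a.e., fills in a step the paper leaves implicit.
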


\begin{proof}
The expression for
\(\mathbf C_{\mathrm{mean}}^{(w)}(t,s;k,\ell)\) follows directly from the means of the Gaussian sense components. 

For the deviation component, apply the law of total covariance by
conditioning further on the complete latent sense sequence:
\begin{align}
\mathbf C_{\mathrm{dev}}^{(w)}(t,s;k,\ell)
&=
\mathbb E\!\left[
\operatorname{Cov}\!\left(
X_t^{(w)}-X_s^{(w)}
\mid
Z_{1:T}^{(w)}
\right)
\middle|
Z_t^{(w)}=k,Z_s^{(w)}=\ell
\right]
\nonumber\\
&\quad+
\operatorname{Cov}\!\left(
\mathbb E\!\left[
X_t^{(w)}-X_s^{(w)}
\mid
Z_{1:T}^{(w)}
\right]
\middle|
Z_t^{(w)}=k,Z_s^{(w)}=\ell
\right).
\label{eq:app_dev_cov_total_covariance}
\end{align}
By the conditional marginals established in Proposition~\ref{prop:marginal_preservation},
\begin{align}
\operatorname{Cov}\!\left(
\mathbb E\!\left[
X_t^{(w)}-X_s^{(w)}
\mid
Z_{1:T}^{(w)}
\right]
\middle|
Z_t^{(w)}=k,Z_s^{(w)}=\ell
\right)\\
=
\operatorname{Cov}\!\left(
\mu_{t,k}^{(w)}-\mu_{s,\ell}^{(w)}
\middle|
Z_t^{(w)}=k,Z_s^{(w)}=\ell
\right)
=
0,
\end{align}
thus the second term vanishes.

Now fix a latent sense sequence \(z_{1:T}\) satisfying
\(z_t=k\) and \(z_s=\ell\). The conditional covariance of the displacement
decomposes as
\begin{align}
&
\operatorname{Cov}\!\left(
X_t^{(w)}-X_s^{(w)}
\mid
Z_{1:T}^{(w)}=z_{1:T}
\right)
\nonumber\\
&=
\operatorname{Var}\!\left(
X_t^{(w)}
\mid
Z_{1:T}^{(w)}=z_{1:T}
\right)
+
\operatorname{Var}\!\left(
X_s^{(w)}
\mid
Z_{1:T}^{(w)}=z_{1:T}
\right)
\nonumber\\
&\quad
-
\operatorname{Cov}\!\left(
X_t^{(w)},X_s^{(w)}
\mid
Z_{1:T}^{(w)}=z_{1:T}
\right)
-
\operatorname{Cov}\!\left(
X_s^{(w)},X_t^{(w)}
\mid
Z_{1:T}^{(w)}=z_{1:T}
\right).
\label{eq:app_conditional_dev_decomposition}
\end{align}
By Proposition~\ref{prop:marginal_preservation},
\begin{align}
\operatorname{Var}\!\left(
X_t^{(w)}
\mid
Z_{1:T}^{(w)}=z_{1:T}
\right)
=
\Sigma_{t,k}^{(w)},
\qquad
\operatorname{Var}\!\left(
X_s^{(w)}
\mid
Z_{1:T}^{(w)}=z_{1:T}
\right)
&=
\Sigma_{s,\ell}^{(w)}.
\end{align}

Moreover, Proposition~\ref{prop:gaussian_optimal_coupling} gives the
affine optimal transport map for each adjacent sense transition.
Consequently, the Markov composition along the fixed sense path gives
\begin{align}
X_s^{(w)}
&=
\mu_{s,\ell}^{(w)}
+
A_{s-1,z_{s-1},z_s}^{(w)}
\cdots
A_{t,z_t,z_{t+1}}^{(w)}
\left(
X_t^{(w)}-\mu_{t,k}^{(w)}
\right).
\label{eq:app_composed_gaussian_map}
\end{align}
Taking the conditional covariance gives
\begin{align}
&
\operatorname{Cov}\!\left(
X_t^{(w)},X_s^{(w)}
\mid
Z_{1:T}^{(w)}=z_{1:T}
\right) = \operatorname{Cov}\!\left(
X_s^{(w)},X_t^{(w)}
\mid
Z_{1:T}^{(w)}=z_{1:T}
\right)^\top
\nonumber\\
&\qquad=
\Sigma_{t,k}^{(w)}
\left(
A_{s-1,z_{s-1},z_s}^{(w)}
\cdots
A_{t,z_t,z_{t+1}}^{(w)}
\right)^\top,
\label{eq:app_cross_cov_ts}
\end{align}

Substituting these expressions into
\eqref{eq:app_conditional_dev_decomposition}, we obtain
\begin{align}
&
\operatorname{Cov}\!\left(
X_t^{(w)}-X_s^{(w)}
\mid
Z_{1:T}^{(w)}=z_{1:T}
\right)
\nonumber\\
&=
\Sigma_{t,k}^{(w)}
+
\Sigma_{s,\ell}^{(w)}
-
\Sigma_{t,k}^{(w)}
\left(
A_{s-1,z_{s-1},z_s}^{(w)}
\cdots
A_{t,z_t,z_{t+1}}^{(w)}
\right)^\top
\nonumber\\
&\quad
-
\left[
\Sigma_{t,k}^{(w)}
\left(
A_{s-1,z_{s-1},z_s}^{(w)}
\cdots
A_{t,z_t,z_{t+1}}^{(w)}
\right)^\top
\right]^\top.
\label{eq:app_conditional_dev_cov_path}
\end{align}

Combining the above expressions with
\eqref{eq:app_dev_cov_total_covariance}, we obtain
\begin{align}
\mathbf C_{\mathrm{dev}}^{(w)}(t,s;k,\ell)
&=
\Sigma_{t,k}^{(w)}
+
\Sigma_{s,\ell}^{(w)}
-
\Sigma_{t,k}^{(w)}
\mathbb E\!\left[
M_{t,s}^{(w)}
\middle|
Z_t^{(w)}=k,Z_s^{(w)}=\ell
\right]
\nonumber\\
&\quad
-
\mathbb E\!\left[
M_{t,s}^{(w)\top}
\middle|
Z_t^{(w)}=k,Z_s^{(w)}=\ell
\right]
\Sigma_{t,k}^{(w)}\\
&=\Sigma_{t,k}^{(w)}
+
\Sigma_{s,\ell}^{(w)}
-
\Xi^{(w)}(t,s;k,\ell)
-
\Xi^{(w)}(t,s;k,\ell)^\top,
\end{align}
which proves the stated expression for
\(\mathbf C_{\mathrm{dev}}^{(w)}(t,s;k,\ell)\).
\end{proof}

\section{Dynamic programming for transition quantities}
\label{app:dynamic_programming_transition}

%%%% BEFORE
%We describe how to compute the sense-transition quantities without enumerating
%all possible intermediate Sense paths. In particular, the sense-pair
%probabilities \(q^{(w)}(t,s;k,\ell)\) and the cross-period covariance
%quantities \(\Xi^{(w)}(t,s;k,\ell)\), and hence the deviation components
%\(\mathbf C_{\mathrm{dev}}^{(w)}(t,s;k,\ell)\), can be computed by dynamic
%programming.

We describe how to compute the sense-transition quantities without enumerating all possible intermediate sense paths. In particular, the sense-pair probabilities \(q^{(w)}(t,s;k,\ell)\) and the cross-period covariance quantities \(\Xi^{(w)}(t,s;k,\ell)\), and hence the deviation components \(\mathbf C_{\mathrm{dev}}^{(w)}(t,s;k,\ell)\), can be computed by dynamic
programming.

\textbf{Computational cost.}
We analyze the computational cost of computing non-adjacent quantities induced by the Markov chain after GMM fitting and adjacent period OT solving have already been completed, and hence the corresponding transport maps \(A_{t,k,\ell}^{(w)}\) and transition probabilities \(P_t^{(w)}(k,\ell)\) are available.
A direct pathwise computation requires enumerating latent component paths across periods, leading to an exponential increase in computational cost with the number of periods \(T\). We therefore exploit dynamic programming to aggregate contributions from intermediate latent components without explicit path enumeration.
For one word, let \(K=\max_t K_t^{(w)}\), and let \(d\) denote the embedding dimension. The memory costs below refer only to additional memory required during computation and exclude storage of the precomputed mixture parameters, adjacent period transport maps and transition probabilities, as well as the final outputs.

Considering the computation of all quantities indexed by \((t,s,k,\ell)\), the joint component probabilities \(q(t,s;k,\ell)\) can be obtained by either explicit enumeration of latent component paths or dynamic programming. A naive enumeration of all latent component paths yields the upper bound \(O(T^3K^T)\), whereas the dynamic programming recursion requires only \(O(T^2K^3)\). In terms of memory, naive path enumeration can be implemented with \(O(T)\) memory by generating one path at a time, while the dynamic programming recursion requires \(O(K^2)\) memory for the current transition-probability matrix.

For the computation of \(\Xi^{(w)}(t,s;k,\ell)\), naive enumeration of all latent component paths yields the upper bound \(O(T^3K^Td^3)\) in time and \(O(T+d^2)\) in memory. In contrast, the dynamic programming recursion requires \(O(T^2K^3d^3)\) time and \(O(K^2d^2)\) memory. The conditional weights \(r_{t,s}(j\mid k,\ell)\) can be computed in \(O(1)\) time for each \((t,s,k,\ell,j)\) without additional memory, provided that the recursion for \(R_{t,s}\) used to compute \(q(t,s;k,\ell)\) is carried out alongside the \(\Xi\) recursion.

Once \(\Xi^{(w)}(t,s;k,\ell)\) has been computed, the deviation component
\[
\mathbf C_{\mathrm{dev}}^{(w)}(t,s;k,\ell)
=
\Sigma_{t,k}^{(w)}
+
\Sigma_{s,\ell}^{(w)}
-
\Xi^{(w)}(t,s;k,\ell)
-
\Xi^{(w)}(t,s;k,\ell)^\top
\]
requires only \(O(d^2)\) time and \(O(d^2)\) memory for each \((t,s,k,\ell)\), yielding \(O(T^2K^2d^2)\) time overall. This is lower-order than the \(O(T^2K^3d^3)\) cost of computing \(\Xi\), and therefore the additional computational cost of obtaining \(\mathbf C_{\mathrm{dev}}\) is negligible in comparison. 

% A dense full-covariance GMM candidate with \(I\) EM iterations has a standard cost of \(O(I(nKd^2+Kd^3))\) per period. Grid search repeats this cost for its candidate configurations.
% Given the (T) fitted GMMs, for each adjacent period pair \((t,t+1)\), there are at most \(K^2\) Gaussian component pairs. Computing the Gaussian transport maps \(A_{t,k,\ell}^{(w)}\) and the corresponding transport costs \(C_{t,k,\ell}^{(w)}\) requires dense matrix square-root operations and therefore costs \(O(d^3)\) per component pair, yielding \(O(TK^2d^3)\) overall. The component-level coupling \(Q_t^{(w)}\) is then obtained by solving a \(K\times K\) linear program with at most \(K^2\) decision variables. Thus, the total cost of the adjacent period OT stage is \(O(TK^2d^3)\), in addition to the chosen discrete-transport solver's cost for \(T-1\) problems with at most \(K^2\) variables each.
% Constructing the mean basis from \(C_{\mathrm{mean}}^{(w)}\) requires \(O(TK^2d^2)\) operations for aggregating the component-level matrices over adjacent period pairs, followed by \(O(d^3)\) for the eigendecomposition of the resulting \(d\times d\) matrix. Thus, the overall cost is \(O(TK^2d^2+d^3)\).
% For each pair \((t,s)\), computing \(d_{\mathrm{tr}}(t,s)\) by summing \(q(t,s;k,\ell)\operatorname{Tr} C^{(w)}(t,s;k,\ell)\) over component pairs requires \(O(K^2d)\) operations. For each mode \(r\), computing \(d_r(t,s)\) by summing \(q(t,s;k,\ell)u_r^\top C^{(w)}(t,s;k,\ell)u_r\) over component pairs requires \(O(K^2d^2)\) operations.

\begin{proposition}[Dynamic programming recursions]
\label{prop:dynamic_programming_transition}
For \(t\le s\), let
\[
R_{t,s}^{(w)}(k,\ell)
:=
\Pr\!\left(
Z_s^{(w)}=\ell
\mid
Z_t^{(w)}=k
\right).
\]
Then
\begin{equation}
R_{t,t}^{(w)}=I_{K_t^{(w)}},
\qquad
R_{t,r+1}^{(w)}
=
R_{t,r}^{(w)}P_r^{(w)},
\quad r=t,\ldots,s-1,
\label{eq:dp_R_recursion}
\end{equation}
and hence
\begin{equation}
q^{(w)}(t,s;k,\ell)
=
\pi_{t,k}^{(w)}R_{t,s}^{(w)}(k,\ell).
\label{eq:dp_q_recursion}
\end{equation}

For \(t<s\), define the backward smoothing probability
\[
r_{t,s}^{(w)}(j\mid k,\ell)
:=
\Pr\!\left(
Z_{s-1}^{(w)}=j
\mid
Z_t^{(w)}=k,Z_s^{(w)}=\ell
\right).
\]
Then, whenever \(R_{t,s}^{(w)}(k,\ell)>0\),
\begin{equation}
r_{t,s}^{(w)}(j\mid k,\ell)
=
\frac{
R_{t,s-1}^{(w)}(k,j)P_{s-1}^{(w)}(j,\ell)
}{
R_{t,s}^{(w)}(k,\ell)
}.
\label{eq:dp_backward_smoothing}
\end{equation}
The recursion for \(\Xi^{(w)}\) is initialized by
\begin{equation}
\Xi^{(w)}(t,t;k,\ell)
=
\mathbf 1_{\{k=\ell\}}\Sigma_{t,k}^{(w)},
\label{eq:dp_Xi_initial}
\end{equation}
and, for \(s>t\),
\begin{equation}
\Xi^{(w)}(t,s;k,\ell)
=
\sum_{j=1}^{K_{s-1}^{(w)}}
r_{t,s}^{(w)}(j\mid k,\ell)
\,
\Xi^{(w)}(t,s-1;k,j)
\left(
A_{s-1,j,\ell}^{(w)}
\right)^\top.
\label{eq:dp_Xi_recursion}
\end{equation}
This recursion is evaluated for positive-mass endpoint pairs. Zero-weight terms in the sum are omitted, and we set \(\Xi^{(w)}(t,s;k,\ell)=0\) for zero-mass endpoint pairs.

\end{proposition}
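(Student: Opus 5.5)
The recursions follow from the Markov property of $Z^{(w)}_{1:T}$ and from the fact that, given the full latent path, the Gaussian contextual process is an affine function of its starting point. We treat the discrete part first, then the covariance recursion.

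\textbf{Discrete recursions.} For \eqref{eq:dp_R_recursion}, $R_{t,t}=I$ holds trivially. For $r\ge t$, condition on $Z_r^{(w)}$ and apply the Markov property, $\Pr(Z_{r+1}=\ell\mid Z_r=j,Z_t=k)=P_r(j,\ell)$. This gives the Chapman--Kolmogorov identity
\[
R_{t,r+1}(k,\ell)=\sum_j R_{t,r}(k,j)P_r(j,\ell).
\]
Equation \eqref{eq:dp_q_recursion} is then the product rule combined with $\Pr(Z_t=k)=\pi_{t,k}$ from Proposition~\ref{prop:marginal_preservation}. For \eqref{eq:dp_backward_smoothing}, Bayes' rule gives
\[
r_{t,s}(j\mid k,\ell)=\frac{\Pr(Z_{s-1}=j,Z_s=\ell\mid Z_t=k)}{R_{t,s}(k,\ell)}.
\]
The Markov property factors the numerator as $R_{t,s-1}(k,j)P_{s-1}(j,\ell)$. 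The denominator is positive by hypothesis.

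\textbf{Covariance recursion.} By \eqref{eq:app_def_Xi}, $\Xi(t,s;k,\ell)=\Sigma_{t,k}\mathbb E[M_{t,s}\mid Z_t=k,Z_s=\ell]$. At $s=t$ the product in \eqref{eq:app_def_M_ts} is empty, so $M_{t,t}=I$. Conditioning on $Z_t=k,Z_t=\ell$ has positive probability only when $k=\ell$, so we read \eqref{eq:dp_Xi_initial} with the indicator convention for the zero-mass case. For $s>t$, the transpose in \eqref{eq:app_def_M_ts} gives the factorization
\[
M_{t,s}=M_{t,s-1}\,A_{s-1,Z_{s-1},Z_s}^\top .
\]
Apply the tower property by conditioning additionally on $Z_{s-1}=j$:
\[
\mathbb E[M_{t,s}\mid Z_t=k,Z_s=\ell]=\sum_j r_{t,s}(j\mid k,\ell)\,\mathbb E[M_{t,s-1}\mid Z_t=k,Z_{s-1}=j,Z_s=\ell]\,A_{s-1,j,\ell}^\top .
\]

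\textbf{Main step.} The key point is to remove the conditioning on $Z_s=\ell$ in the inner expectation. $M_{t,s-1}$ is a function of $Z_{t:s-1}$ only. By the Markov property, given $Z_{s-1}$ the future $Z_s$ is conditionally independent of $Z_{t:s-2}$, and this remains true after also conditioning on $Z_t$. Hence
\[
\mathbb E[M_{t,s-1}\mid Z_t=k,Z_{s-1}=j,Z_s=\ell]=\mathbb E[M_{t,s-1}\mid Z_t=k,Z_{s-1}=j].
\]
This requires $\Pr(Z_t=k,Z_{s-1}=j,Z_s=\ell)>0$, which is exactly the condition $r_{t,s}(j\mid k,\ell)>0$. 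Terms with zero weight are dropped, as the statement specifies. Multiplying on the left by $\Sigma_{t,k}$ and using the definition of $\Xi(t,s-1;k,j)$ yields \eqref{eq:dp_Xi_recursion}. For zero-mass endpoint pairs the convention $\Xi=0$ is a definition, so nothing further needs to be proved.
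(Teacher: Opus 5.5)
Your proposal is correct and follows the paper's proof essentially step for step: Chapman--Kolmogorov for $R_{t,s}$, the product rule for $q$, and Bayes' rule plus the Markov property for $r_{t,s}$; for $\Xi$, you use the factorization $M_{t,s}=M_{t,s-1}A_{s-1,Z_{s-1},Z_s}^\top$, condition on $Z_{s-1}$, drop $Z_s$ via the Markov property, and left-multiply by $\Sigma_{t,k}$, exactly as the paper does. Your explicit check of which conditioning events have positive probability, which justifies dropping the zero-weight terms, tidies the same argument rather than taking a different route.
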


\begin{proof}
The recursion for \(R_{t,s}^{(w)}\) follows from the Markov property. For
\(r=t,\ldots,s-1\),
\begin{align}
R_{t,r+1}^{(w)}(k,\ell)
&=
\Pr\!\left(
Z_{r+1}^{(w)}=\ell
\mid
Z_t^{(w)}=k
\right)
\nonumber\\
&=
\sum_{j=1}^{K_r^{(w)}}
\Pr\!\left(
Z_r^{(w)}=j
\mid
Z_t^{(w)}=k
\right)
\Pr\!\left(
Z_{r+1}^{(w)}=\ell
\mid
Z_r^{(w)}=j
\right)
\nonumber\\
&=
\sum_{j=1}^{K_r^{(w)}}
R_{t,r}^{(w)}(k,j)P_r^{(w)}(j,\ell),
\label{eq:dp_R_proof}
\end{align}
which gives \eqref{eq:dp_R_recursion}. Moreover,
\begin{align}
q^{(w)}(t,s;k,\ell)
&=
\Pr\!\left(
Z_t^{(w)}=k
\right)
\Pr\!\left(
Z_s^{(w)}=\ell
\mid
Z_t^{(w)}=k
\right)
\nonumber\\
&=
\pi_{t,k}^{(w)}R_{t,s}^{(w)}(k,\ell),
\label{eq:dp_q_proof}
\end{align}
which proves \eqref{eq:dp_q_recursion}.

Next, by the definition of conditional probability,
\begin{align}
r_{t,s}^{(w)}(j\mid k,\ell)
&=
\frac{
\Pr\!\left(
Z_{s-1}^{(w)}=j
\mid
Z_t^{(w)}=k
\right)
\Pr\!\left(
Z_s^{(w)}=\ell
\mid
Z_{s-1}^{(w)}=j,Z_t^{(w)}=k
\right)
}{
\Pr\!\left(
Z_s^{(w)}=\ell
\mid
Z_t^{(w)}=k
\right)
}
\nonumber\\
&=
\frac{
R_{t,s-1}^{(w)}(k,j)
P_{s-1}^{(w)}(j,\ell)
}{
R_{t,s}^{(w)}(k,\ell)
},
\label{eq:dp_backward_smoothing_proof}
\end{align}
where the final equality follows from the Markov property.

It remains to establish the recursion for \(\Xi^{(w)}\). From the definition
of \(M_{t,s}^{(w)}\) in \eqref{eq:app_def_M_ts}, for \(s>t\),
\begin{equation}
M_{t,s}^{(w)}
=
M_{t,s-1}^{(w)}
\left(
A_{s-1,Z_{s-1}^{(w)},Z_s^{(w)}}^{(w)}
\right)^\top.
\label{eq:dp_M_recursion}
\end{equation}

For the initial case, we use the empty-product convention
\(M_{t,t}^{(w)}:=I_d\), so that
\[
\Xi^{(w)}(t,t;k,k)
=
\Sigma_{t,k}^{(w)}.
\]

For \(s>t\), conditioning on \(Z_{s-1}^{(w)}\) gives
\begin{align}
&
\mathbb E\!\left[
M_{t,s}^{(w)}
\mid
Z_t^{(w)}=k,Z_s^{(w)}=\ell
\right]
\nonumber\\
&=
\sum_{j=1}^{K_{s-1}^{(w)}}
r_{t,s}^{(w)}(j\mid k,\ell)
\mathbb E\!\left[
M_{t,s-1}^{(w)}
\middle|
Z_t^{(w)}=k,Z_{s-1}^{(w)}=j,Z_s^{(w)}=\ell
\right]
\left(
A_{s-1,j,\ell}^{(w)}
\right)^\top.
\label{eq:dp_M_conditioning}
\end{align}
Since \(M_{t,s-1}^{(w)}\) depends only on
\(Z_t^{(w)},\ldots,Z_{s-1}^{(w)}\), the Markov property gives
\begin{align}
&
\mathbb E\!\left[
M_{t,s-1}^{(w)}
\mid
Z_t^{(w)}=k,Z_{s-1}^{(w)}=j,Z_s^{(w)}=\ell
\right]
\nonumber\\
&\qquad=
\mathbb E\!\left[
M_{t,s-1}^{(w)}
\mid
Z_t^{(w)}=k,Z_{s-1}^{(w)}=j
\right].
\label{eq:dp_M_markov}
\end{align}
Multiplying \eqref{eq:dp_M_conditioning} by
\(\Sigma_{t,k}^{(w)}\) yields
\begin{align}
\Xi^{(w)}(t,s;k,\ell)
&=
\sum_{j=1}^{K_{s-1}^{(w)}}
r_{t,s}^{(w)}(j\mid k,\ell)
\,
\Xi^{(w)}(t,s-1;k,j)
\left(
A_{s-1,j,\ell}^{(w)}
\right)^\top,
\label{eq:dp_Xi_proof}
\end{align}
which proves \eqref{eq:dp_Xi_recursion}.
\end{proof}

\section{Statistical recovery}
\label{appx:statistical_recovery}

This appendix studies how estimation errors propagate through the construction of the semantic geometry and establishes the error rates for the second-moment operators and semantic distances under the Gaussian specialization given in Theorem~\ref{thm:distance_recovery}.
%%%% BEFORE
%This appendix studies how estimation errors propagate through the construction of the semantic geometry, and establish the error rates for the second-moment operators and semantic distances under the Gaussian specialization, which are given in Theorem~\ref{thm:distance_recovery}.
Figure~\ref{fig:geometry_construction_pipeline} illustrates the construction pipeline from estimated sense distributions and prevalences to the semantic distances.
Figure~\ref{fig:geometry_error_propagation} summarizes the error propagation from the estimated Gaussian mixture parameters to the estimated semantic geometry. We first establish the estimation rate for the Gaussian mixture parameters and then propagate these errors through each step of the construction to obtain the convergence rate of the second-moment operators and the induced distances.

\begin{figure}[t]
\centering
\begin{tikzpicture}[
    node distance=1.0cm,
    >=latex,
    every node/.style={align=center}
]

\node (a)
{\(\bigl(\widehat\pi_{t,k}^{(w)},\widehat\nu_{t,k}^{(w)}\bigr)\)};

\node (b) [right=1.2cm of a]
{\(\bigl(\hat\eta_{t,k,l}^{(w)},\hat C_{t,k,l}^{(w)}\bigr)\)};

\node (c) [right=1.2cm of b]
{\(\hat Q_t^{(w)}\)};

\node (d) [right=of c]
{\(\widehat{\mathbf M}_{\bullet}^{(w)}\)};

\node (e) [right=of d]
{\(\widehat d_{\bullet,\rho}^{(w)}\)};

\node (f) [below=of e]
{\(\widehat{\bm u}_r\)};

\draw[->, red] (a) -- node[above]{\scriptsize Contextual \\ \scriptsize coupling~(\ref{eq:pairwise_distribution_coupling})} (b);
\draw[->, blue] (b) -- node[below=0.15cm]{\scriptsize Component \\ \scriptsize transport~(\ref{eq:sense_transition_distribution})}  (c);
\draw[->, blue] (a) |- ([yshift=-1.0cm]c.south) -- (c.south);
\draw[->, orange] (c) -- (d); 
\draw[->, orange] (b) |- ([yshift=0.5cm]d.north) -- (d.north);
\draw[->, dashed, green!50!black] (d) -- node[left]{\scriptsize when using \\ \scriptsize aggregate mean operator} (f);
\draw[->,green!50!black] (d) -- (e);
\draw[->, green!50!black] (f) -- (e);

\end{tikzpicture}
\caption{Construction pipeline of the estimated semantic geometry.
The pipeline maps estimated sense distributions to geometric representations.
\(\bullet\in\{\mathrm{full},\mathrm{mean},\mathrm{dev}\},
\rho\in\{\mathrm{tr},1,\ldots,d\}\).}
\label{fig:geometry_construction_pipeline}
\end{figure}

\begin{figure}[t]
\centering

\begin{tikzpicture}[
    node distance=0.8cm,
    >=latex,
    every node/.style={align=center}
]

\node (a)
{\(\bigl(\widehat\pi_{t,k}^{(w)}, \widehat\mu_{t,k}^{(w)},
\widehat\Sigma_{t,k}^{(w)}\bigr)\)};

\node (b) [right=of a]
{\(\bigl(\widehat A_{t,k,l}^{(w)},
\widehat C_{t,k,l}^{(w)}\bigr)\)};

\node (c) [right=of b]
{\(\widehat Q_t^{(w)}\)};

\node (d) [right=of c]
{\(\widehat{\mathbf M}_{\bullet}^{(w)}\)};

\node (e) [right=of d]
{\(\widehat d_{\bullet,\rho}^{(w)}\)};

\node (f) [below=of e]
{\(\widehat{\bm u}_r\)};

\draw[->, red]
(a) -- node[above=0.2cm]{\scriptsize Lemma~\ref{lem:gaussian_coupling_perturbation}} (b);

\draw[->, blue]
(b) -- node[above=0.2cm]{\scriptsize Lemma~\ref{lem:sense_transport_perturbation}} (c);

\draw[->, orange]
(c) -- node[above=0.2cm]{\scriptsize Lemma~\ref{lem:gaussian_second_moment_operator_perturbation}} (d);

\draw[->, blue]
(a) |- ([yshift=-0.5cm]c.south) -- (c.south);

\draw[->,  dashed, orange]
(a) |- ([yshift=0.5cm]d.north) -- (d.north);

\draw[->, orange]
(b) |- ([yshift=0.5cm]d.north) -- (d.north);

\draw[->, dashed, green!50!black]
(d) -- (f);

\draw[->,green!50!black]
(d) -- (e);

\draw[->, green!50!black]
(f) -- (e);

% annotations
\node[draw, rounded corners, font=\scriptsize, below=0.8cm of a]
(ann_a) at ([xshift=-0.4cm]a.south)
{Lemma~\ref{lem:gmm_parameter_recovery}\\ error \(O_p(n^{-1/2})\)};

\node[draw, rounded corners, font=\scriptsize, below=0.8cm of d] (ann_d)
{Theorem~\ref{thm:distance_recovery}\\ error \(O_p(n^{-1/2})\)};

% annotation arrows
\draw[->, gray]
(ann_a.north) -- ([xshift=-0.4cm]a.south);

\draw[->, gray]
(ann_d.north) -- (d.south);
\draw[->, gray]
(ann_d.north) -- (e.south west);

\end{tikzpicture}
%%%% BEFORE 
%\caption{Error propagation and recovery guarantees for the estimated semantic geometry.
%\(\bullet\in\{\mathrm{full},\mathrm{mean},\mathrm{dev}\},
%\rho\in\{\mathrm{tr},1,\ldots,d\}\).}
\caption{Error propagation and recovery guarantees for the estimated semantic geometry. \(\bullet\in\{\mathrm{full},\mathrm{mean},\mathrm{dev}\}, \rho\in\{\mathrm{tr},1,\ldots,R^{(w)}\}\).}
\label{fig:geometry_error_propagation}
\end{figure}

\paragraph{Recovery of Gaussian mixture parameters.}
We assume throughout the theoretical analysis that the true number of mixture components is known. This assumption does not impose a substantive restriction. When a known finite upper bound $K_{\max}$ on the number of components is available, an estimate $\widehat K_t^{(w)}$ can be obtained by a penalized likelihood model-selection procedure.
Under the regularity conditions in Definition~\ref{def:gmm_class}, the model-selection consistency established by \citet[Theorem~3]{huang2017model} implies that
\[
\Pr\!\left(
\widehat K_t^{(w)}=K_t^{(w)}
\right)
\to 1
\qquad(n\to\infty).
\]
Thus, the true component number is recovered with probability tending to one as the sample size increases. We therefore condition the subsequent analysis on the event of correct component-number recovery and focus on the estimation error of the Gaussian mixture parameters and its propagation through the subsequent stages.

%The recovery theorem treats the component counts \(K_t^{(w)}\) as known and obtains parametric recovery of the Gaussian parameters from the penalized estimator of \citet{chen2009inference}, after aligning component labels. Consistency of particular penalized model-selection procedures is available under additional conditions \citep{huang2017model}, but is not required for the theorem. The bounded BIC selection used in the lexical experiments is a separate empirical model-selection step. The recovery theorem does not establish correct component-number selection for this procedure. If the component count is misspecified, the decomposition remains exact for the fitted process, but its allocation between center movement and within-component variation can depend on that count.

%%%% BEFORE
%The bounded BIC selection used in the lexical experiments is a separate empirical model-selection step.

\paragraph{Gaussian mixture parameter estimation.}
Taking the true component numbers \(K_t^{(w)}\) as known, we next consider the estimation of the Gaussian mixture parameters. For a Gaussian mixture distribution
\[
\gamma
=
\sum_{k=1}^{K}\pi_k\nu_k,
\qquad
\nu_k=\mathcal N(m_k,\Sigma_k),
\]
define
\[
\theta_k
=
\left(
m_k^\top,
\sigma_k(i,j):
i=1,\ldots,d,\;
j=1,\ldots,i
\right)^\top,
\]
where $\sigma_k(i,j)$ denotes the $(i,j)$-th entry of $\Sigma_k$. The full Gaussian mixture parameter vector is
%%%% BEFORE 
%\[
%\theta
%=
%(\pi_1,\ldots,\pi_K,
%\theta_1^\top,\ldots,\theta_K^\top)^\top.
%\]
\[
\theta
=
(\pi_1,\ldots,\pi_K,
\theta_1^\top,\ldots,\theta_K^\top)^\top.
\]

%%%% BEFORE
%For asymptotic inference, use the nonredundant coordinates
%\[
%\vartheta
%=
%(\pi_1,\ldots,\pi_{K-1},
%\theta_1^\top,\ldots,\theta_K^\top)^\top,
%\qquad
%\pi_K=1-\sum_{k=1}^{K-1}\pi_k.
%\]
%For \(K=1\), there are no free mixing-weight coordinates. The full vector \(\theta\) is retained for the parameter-error bounds below.

The full vector \(\theta\) is retained for the componentwise parameter-error bounds below.

%For the known number of components, we estimate the parameters using the penalized maximum likelihood procedure of~\cite{chen2009inference}. Specifically, let
For the theoretical guarantee with fixed component counts, we use the penalized maximum likelihood estimator of \citet{chen2009inference}. To specify it, let
\[
p\ell(\theta)
=
\ell_n(\theta)+p_n(\theta),
\]
where $\ell_n(\theta)$ denotes the ordinary log-likelihood based on the observed sample, and
\[
p_n(\theta)
=
-\frac1n\sum_{k=1}^{K}
\left(
\operatorname{tr}(\Sigma_k^{-1})
+
\log|\Sigma_k|
\right).
\]
%\[
%p_n(\theta)
%=
%-a_n\sum_{k=1}^{K}
%\left(
%\operatorname{tr}(S_X\Sigma_k^{-1})
%+
%\log|\Sigma_k|
%\right),
%\qquad a_n=n^{-1},
%\]
%Here \(S_X\) is the sample covariance of the observations used to fit the mixture. The estimator is
Then, the estimator is
\[
\widehat\theta
\in
\arg\max_{\theta}p\ell(\theta).
\]

%Under the regularity conditions in Definition~\ref{def:gmm_class}, Theorem~2 of~\cite{chen2009inference} gives the asymptotic normality
%%%% BEFORE 
%Under the penalty and regularity conditions of \citet{chen2009inference}, their Theorem~2 gives the asymptotic normality
%\begin{equation}
%\sqrt n
%\left(
%\widehat\theta_t^{(w)}
%-
%\theta_t^{(w)}
%\right)
%\xrightarrow{d}
%\mathcal N\!\left(
%0,
%I(\theta_t^{(w)})^{-1}
%\right),
%\label{eq:gmm_parameter_%asymptotic_normality}
%\end{equation}
%where $I(\theta_t^{(w)})$ denotes the Fisher information matrix, which is positive definite under the stated regularity conditions.

%%%% BEFORE
%Under the penalty and regularity conditions of \citet{chen2009inference}, their Theorem~2 gives, after aligning component labels,
%\begin{equation}
%\sqrt n
%\left(
%\widehat\vartheta_t^{(w)}-\vartheta_t^{(w)}
%\right)
%\xrightarrow{d}
%\mathcal N\!\left(
%0,I_{\vartheta}(\vartheta_t^{(w)})^{-1}
%\right),
%\label{eq:gmm_parameter_asymptotic_normality}
%\end{equation}
%where \(I_{\vartheta}\) is the Fisher information matrix in these nonredundant coordinates and is positive definite under the stated regularity conditions. The full vector \(\theta_t^{(w)}\) is an affine function of \(\vartheta_t^{(w)}\), so it inherits the \(O_p(n^{-1/2})\) parameter-error rate.
%Therefore, for any $\varepsilon>0$, there exists $M<\infty$ such that
%\[
%\Pr\!\left(
%\sqrt n
%\left\|
%\widehat\theta_t^{(w)}-\theta_t^{(w)}
%\right\|
%\leq M
%\right)
%\geq 1-\varepsilon
%\]
%for all sufficiently large $n$. Equivalently,
%\[
%\left\|
%\widehat\theta_t^{(w)}-\theta_t^{(w)}
%\right\|
%=
%O_p(n^{-1/2}).
%\]

Under the regularity conditions in Definition~\ref{def:gmm_class}, Theorem~2 of~\cite{chen2009inference} gives the asymptotic normality in a nonredundant local parameterization. In particular, after aligning component labels, the full constrained parameter vector satisfies
\[
\left\|
\widehat\theta_t^{(w)}-\theta_t^{(w)}
\right\|
=
O_p(n^{-1/2}).
\]

To quantify the parameter estimation errors, define
\[
\begin{aligned}
E_{\mu}^{(w)}
&:=
\max_{t,k}
\left\|
\widehat\mu_{t,k}^{(w)}
-
\mu_{t,k}^{(w)}
\right\|_2,\\
E_{\Sigma}^{(w)}
&:=
\max_{t,k}
\left\|
\widehat\Sigma_{t,k}^{(w)}
-
\Sigma_{t,k}^{(w)}
\right\|_2,\\
E_{\pi}^{(w)}
&:=
\max_{t,k}
\left|
\widehat\pi_{t,k}^{(w)}
-
\pi_{t,k}^{(w)}
\right|.
\end{aligned}
\]

\begin{lemma}[Gaussian mixture parameter recovery]
\label{lem:gmm_parameter_recovery}
%%%% BEFORE
%Under the regularity conditions in Definition~\ref{def:gmm_class} and for known \(K\) penalized estimator above \citet{chen2009inference},
For known \(K\), under Definition~\ref{def:gmm_class} and the penalty and regularity conditions of \citet{chen2009inference}, the penalized estimator described above satisfies
\[
E_{\mu}^{(w)}
,
E_{\Sigma}^{(w)}
,
E_{\pi}^{(w)}
=
O_p(n^{-1/2}).
\]
\end{lemma}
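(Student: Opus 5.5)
The argument reduces the lemma to the per-period asymptotic normality of the penalized estimator, followed by an elementary passage from the parameter vector to its blocks and a union over finitely many indices. Fix \(w\) and a period \(t\). On the event that the component count is correctly recovered (or simply with \(K_t^{(w)}\) known), Definition~\ref{def:gmm_class} supplies the regularity hypotheses needed by \citet{chen2009inference}: positive weights, \(\Sigma_{t,k}^{(w)}\succ0\), and pairwise distinct component parameters, which ensure identifiability up to label permutation and a nonsingular Fisher information in the nonredundant coordinates \(\vartheta=(\pi_1,\ldots,\pi_{K-1},\theta_1^\top,\ldots,\theta_K^\top)^\top\). Their Theorem~2 then gives \(\sqrt{n_t}(\widehat\vartheta_t^{(w)}-\vartheta_t^{(w)})\xrightarrow{d}\mathcal N(0,I_\vartheta^{-1})\) once the labels are aligned by the permutation closest to the truth. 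By Prokhorov's theorem (or directly from weak convergence to a tight limit), this yields \(\|\widehat\vartheta_t^{(w)}-\vartheta_t^{(w)}\|=O_p(n_t^{-1/2})\).

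Next I will pass from \(\vartheta\) to the full vector \(\theta\) and to its blocks. The full vector is an affine image of \(\vartheta\) via \(\pi_K=1-\sum_{k<K}\pi_k\), so \(|\widehat\pi_{t,K}-\pi_{t,K}|\le\sum_{k<K}|\widehat\pi_{t,k}-\pi_{t,k}|\). Each mean error \(\|\widehat\mu_{t,k}-\mu_{t,k}\|_2\) is bounded by \(\|\widehat\theta_t-\theta_t\|\). For the covariance, the operator norm is dominated by the Frobenius norm, which is at most \(\sqrt2\) times the Euclidean norm of the lower-triangular entries. Every block error is therefore bounded by a constant multiple of \(\|\widehat\vartheta_t^{(w)}-\vartheta_t^{(w)}\|\), where the constant depends only on \(d\) and \(K\).

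Finally, I take maxima over \(t\in[T]\) and \(k\in[K_t^{(w)}]\). There are finitely many such indices, and a finite maximum of \(O_p\) terms is \(O_p\) by a union bound: choose \(M_t\) for each \(t\) with failure probability below \(\varepsilon/T\). Since \(n_t\ge n^{(w)}\), each term is \(O_p((n^{(w)})^{-1/2})\). The periods are fitted independently, but independence is not even needed for this step.

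The main obstacle is the label alignment. Asymptotic normality holds only modulo permutation, so one must verify that the aligned estimator is well defined and consistent. I would address this first by invoking the consistency of the penalized MLE (also established in \citet{chen2009inference}), which holds in the quotient topology. Because the true component parameters are pairwise distinct, a fixed positive separation then makes the closest-permutation alignment unique with probability tending to one. On that event the aligned estimator coincides with the locally normal root, and the rates above follow.
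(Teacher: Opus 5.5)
Your proposal is correct and follows essentially the same route as the paper. You take the aligned full-vector rate $\|\widehat\theta_t^{(w)}-\theta_t^{(w)}\|=O_p(n^{-1/2})$ from Theorem~2 of \citet{chen2009inference}, note that this squared error dominates the squared weight and mean errors plus $\tfrac12\sum_k\|\widehat\Sigma_{t,k}^{(w)}-\Sigma_{t,k}^{(w)}\|_2^2$ (exactly your $\sqrt2$ factor), and take a finite maximum over periods; your extra steps (Prokhorov tightness, the affine passage from nonredundant coordinates, the union bound with $n_t\ge n^{(w)}$, and the consistency-based label alignment) just make explicit what the paper takes as given.
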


\begin{proof}
By the definition of $\theta_t^{(w)}$ and the symmetry of the covariance
matrices,
\[
\begin{aligned}
\left\|
\widehat\theta_t^{(w)}-\theta_t^{(w)}
\right\|^2
\geq{}&
\sum_{k=1}^{K_t^{(w)}}
\left|
\widehat\pi_{t,k}^{(w)}
-
\pi_{t,k}^{(w)}
\right|^2
\\
&+
\sum_{k=1}^{K_t^{(w)}}
\left\|
\widehat\mu_{t,k}^{(w)}
-
\mu_{t,k}^{(w)}
\right\|_2^2
\\
&+
\frac{1}{2}
\sum_{k=1}^{K_t^{(w)}}
\left\|
\widehat\Sigma_{t,k}^{(w)}
-
\Sigma_{t,k}^{(w)}
\right\|_2^2.
\end{aligned}
\]
Thus, for each $t$,
\[
\max_k
\left|
\widehat\pi_{t,k}^{(w)}
-
\pi_{t,k}^{(w)}
\right|
,
\max_k
\left\|
\widehat\mu_{t,k}^{(w)}
-
\mu_{t,k}^{(w)}
\right\|_2
,
\max_k
\left\|
\widehat\Sigma_{t,k}^{(w)}
-
\Sigma_{t,k}^{(w)}
\right\|_2
=
O_p(n^{-1/2}).
\]
Since the number of periods is fixed, taking the maximum over $t$ gives
\[
E_{\pi}^{(w)}
,
E_{\mu}^{(w)}
,
E_{\Sigma}^{(w)}
=
O_p(n^{-1/2}).
\]
\end{proof}

\paragraph{Perturbation of Gaussian coupling maps and costs.}

Define the perturbation errors of the Gaussian coupling maps and transport
costs by
\[
E_A^{(w)}
:=
\max_{t,k,\ell}
\left\|
\widehat A_{t,k,\ell}^{(w)}
-
A_{t,k,\ell}^{(w)}
\right\|_2,
\]
and
\[
E_C^{(w)}
:=
\max_{t,k,\ell}
\left|
\widehat C_{t,k,\ell}^{(w)}
-
C_{t,k,\ell}^{(w)}
\right|.
\]

\begin{lemma}[Perturbation of Gaussian coupling maps and costs]
\label{lem:gaussian_coupling_perturbation}
Under the regularity conditions in Definition~\ref{def:gmm_class},
as $n\to\infty$ with \(E_\mu^{(w)}, E_\Sigma^{(w)}\to0\),
\[
E_A^{(w)}
=
O\!\left(E_{\Sigma}^{(w)}\right),
\]
and
\[
E_C^{(w)}
=
O\!\left(
E_{\mu}^{(w)}
+
E_{\Sigma}^{(w)}
\right).
\]
\end{lemma}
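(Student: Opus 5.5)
The plan is to treat both $A_{t,k,\ell}$ and $C_{t,k,\ell}$ as compositions of smooth matrix functions of $(\Sigma_{t,k},\Sigma_{t+1,\ell})$ (and, for $C$, of the means). We then linearize around the true parameters, using the fact that every true covariance is positive definite (Definition~\ref{def:gmm_class}). Since the number of periods and components is fixed, it suffices to prove the bounds for a single triple $(t,k,\ell)$ and take a maximum over finitely many. Write $\Sigma=\Sigma_{t,k}$, $\Sigma'=\Sigma_{t+1,\ell}$, and let $\lambda_0>0$ be a lower bound on the smallest eigenvalue of all true covariances. Because $E_\Sigma\to0$, for $n$ large every estimated covariance has smallest eigenvalue at least $\lambda_0/2$ and operator norm at most $2\Lambda_0$, where $\Lambda_0$ is the largest true eigenvalue. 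All subsequent constants depend only on $\lambda_0,\Lambda_0,d$.

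The first ingredient is a Lipschitz bound for the matrix square root on the cone $\{S\succeq (\lambda_0/2) I\}$:
\[
\|S^{1/2}-\tilde S^{1/2}\|_2\le \frac{\|S-\tilde S\|_2}{\lambda_{\min}(S)^{1/2}+\lambda_{\min}(\tilde S)^{1/2}}.
\]
This follows from the Sylvester equation $S^{1/2}X+XS^{1/2}=S-\tilde S$ with $X = S^{1/2}-\tilde S^{1/2}$, after adding and subtracting $\tilde S^{1/2}X$ appropriately, or from the integral representation of the square root. The inverse square root then satisfies
\[
\|S^{-1/2}-\tilde S^{-1/2}\|_2=\|S^{-1/2}(\tilde S^{1/2}-S^{1/2})\tilde S^{-1/2}\|_2,
\]
which is again $O(\|S-\tilde S\|_2)$. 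Next, the middle matrix $B=(\Sigma^{1/2}\Sigma'\Sigma^{1/2})$ is perturbed by $O(E_\Sigma)$, via a telescoping product with bounded factors. Its smallest eigenvalue is at least $\lambda_{\min}(\Sigma)\lambda_{\min}(\Sigma')\ge\lambda_0^2/4$, so $B^{1/2}$ is also $O(E_\Sigma)$-Lipschitz. Finally, $A=\Sigma^{-1/2}B^{1/2}\Sigma^{-1/2}$ is a product of three factors, each bounded and each perturbed by $O(E_\Sigma)$. Telescoping $\hat X\hat Y\hat W-XYW$ gives $\|\hat A-A\|_2=O(E_\Sigma)$, with no dependence on the means.

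For the cost, use the closed form \eqref{eq:gaussian_sense_transition_cost}. The mean term satisfies
\[
\bigl|\|\hat\mu-\hat\mu'\|^2-\|\mu-\mu'\|^2\bigr|\le(\|\hat\mu-\hat\mu'\|+\|\mu-\mu'\|)\cdot 2E_\mu=O(E_\mu),
\]
since the true means are fixed and the estimated ones lie within $E_\mu\to0$ of them. The trace term is
\[
\operatorname{Tr}(\Sigma+\Sigma'-2B^{1/2}),
\]
and $|\operatorname{Tr}(D)|\le d\|D\|_2$. The bound on $\|\hat B^{1/2}-B^{1/2}\|_2$ from the previous paragraph therefore gives $O(E_\Sigma)$. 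Summing the two terms yields $E_C=O(E_\mu+E_\Sigma)$.

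The main obstacle is the square-root Lipschitz step, because it needs uniform lower eigenvalue bounds on the perturbed matrices. This is why we first establish, for $n$ large, that estimates stay in the compact set of matrices with spectrum in $[\lambda_0/2,2\Lambda_0]$ (Weyl's inequality). The $O(\cdot)$ statements are then deterministic on that event, whose probability tends to one, which is the sense in which the lemma is combined with Lemma~\ref{lem:gmm_parameter_recovery}.
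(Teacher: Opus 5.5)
Your proposal is correct and follows essentially the same route as the paper: successive perturbation bounds for products, square roots and inverse square roots applied to $A=\Sigma^{-1/2}B^{1/2}\Sigma^{-1/2}$, then Lipschitz control of the mean term and $|\operatorname{Tr}D|\le d\|D\|_2$ for the cost. Your explicit Lipschitz constants and Weyl-based eigenvalue floor make rigorous what the paper handles with first-order bounds plus $o(E)$ remainders, but one small fix is needed: the Sylvester identity should read $S^{1/2}X+X\tilde S^{1/2}=S-\tilde S$ (your version with $XS^{1/2}$ is off by $X^2$), and testing this identity against a unit eigenvector of $X$ gives your square-root bound immediately.
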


\begin{proof}
We use the following perturbation bounds. For matrices $X,Y$ and
perturbations $\Delta X,\Delta Y$ satisfying
\[
\|\Delta X\|_2,\|\Delta Y\|_2\leq E,
\]
we have
\[
\|(X+\Delta X)(Y+\Delta Y)-XY\|_2
\leq
2\max\{\|X\|_2,\|Y\|_2\}E+o(E).
\]
For $X\succ0$ and $\|\Delta X\|_2\leq E$,
\[
\begin{aligned}
\|(X+\Delta X)^{-1}-X^{-1}\|_2
&\leq
\frac{1}{\lambda_{\min}(X)^2}E+o(E),\\
\|(X+\Delta X)^{1/2}-X^{1/2}\|_2
&\leq
\frac{1}{2\lambda_{\min}(X)^{1/2}}E+o(E),\\
\|(X+\Delta X)^{-1/2}-X^{-1/2}\|_2
&\leq
\frac{1}{2\lambda_{\min}(X)^{3/2}}E+o(E).
\end{aligned}
\]

Under Definition~\ref{def:gmm_class},
\[
\lambda_{\min}(\Sigma_{t,k}^{(w)})>0.
\]
Applying the above perturbation bounds successively to
\[
A_{t,k,\ell}^{(w)}
=
\left(\Sigma_{t,k}^{(w)}\right)^{-1/2}
\Bigl[
\left(\Sigma_{t,k}^{(w)}\right)^{1/2}
\Sigma_{t+1,\ell}^{(w)}
\left(\Sigma_{t,k}^{(w)}\right)^{1/2}
\Bigr]^{1/2}
\left(\Sigma_{t,k}^{(w)}\right)^{-1/2}.
\]
gives
\[
\left\|
\widehat A_{t,k,\ell}^{(w)}
-
A_{t,k,\ell}^{(w)}
\right\|_2
=
O\!\left(
\left\|
\widehat\Sigma_{t,k}^{(w)}
-
\Sigma_{t,k}^{(w)}
\right\|_2
+
\left\|
\widehat\Sigma_{t+1,\ell}^{(w)}
-
\Sigma_{t+1,\ell}^{(w)}
\right\|_2
\right).
\]
Since $T$ and $K_t^{(w)}$ are fixed, taking the maximum over $t,k,\ell$ preserves the order, and
hence
\[
E_A^{(w)}
=
O\!\left(E_{\Sigma}^{(w)}\right).
\]

Likewise, consider the transport cost
\[
\begin{aligned}
C_{t,k,\ell}^{(w)}
={}&
\left\|
\mu_{t,k}^{(w)}-\mu_{t+1,\ell}^{(w)}
\right\|^2
\\
&+
\operatorname{Tr}\!\left(
\Sigma_{t,k}^{(w)}
+
\Sigma_{t+1,\ell}^{(w)}
-
2
\left[
%(\Sigma_{t,k}^{(w)})^{1/2}
%\Sigma_{s,\ell}^{(w)}
%(\Sigma_{t,k}^{(w)})^{1/2}
(\Sigma_{t,k}^{(w)})^{1/2}
\Sigma_{t+1,\ell}^{(w)}
(\Sigma_{t,k}^{(w)})^{1/2}
\right]^{1/2}
\right).
\end{aligned}
\]
The squared mean term is Lipschitz in the mean parameters on the bounded
parameter space, while the covariance terms are controlled by the above
matrix perturbation bounds. In particular, since $d$ is fixed,
the trace terms can be bounded in terms of the corresponding spectral
norm perturbations. Consequently,
\[
\begin{aligned}
\left|
\widehat C_{t,k,\ell}^{(w)}
-
C_{t,k,\ell}^{(w)}
\right|
=
O\!\Big(
&
\left\|
\widehat\mu_{t,k}^{(w)}
-
\mu_{t,k}^{(w)}
\right\|_2
+
\left\|
\widehat\mu_{t+1,\ell}^{(w)}
-
\mu_{t+1,\ell}^{(w)}
\right\|_2
\\
&+
\left\|
\widehat\Sigma_{t,k}^{(w)}
-
\Sigma_{t,k}^{(w)}
\right\|_2
+
\left\|
\widehat\Sigma_{t+1,\ell}^{(w)}
-
\Sigma_{t+1,\ell}^{(w)}
\right\|_2
\Big).
\end{aligned}
\]
Again, since $T$ and $K_t^{(w)}$ are fixed, taking the maximum over $t,k,\ell$ preserves the order. Thus,
\[
E_C^{(w)}
=
O\!\left(
E_{\mu}^{(w)}
+
E_{\Sigma}^{(w)}
\right).
\]
\end{proof}

\paragraph{Perturbation of the sense transport plans.}

We next consider the perturbation of the optimal sense transport plans
induced by estimation errors in the sense prevalences and the transport
costs. Define
\[
E_Q^{(w)}
:=
\max_t
\left\|
\widehat Q_t^{(w)}-Q_t^{(w)}
\right\|_{\mathrm F}.
\]

The perturbation $E_\pi^{(w)}$ corresponds to the perturbation of the
right-hand side of the equality constraints in
\eqref{eq:sense_transition_distribution}, while $E_C^{(w)}$ corresponds
to the perturbation of its objective coefficients.

The following lemma is based on a standard basis-stability argument for linear programs. The cost perturbation $E_C^{(w)}$ does not appear in the
leading term of the bound, but it is essential for ensuring that the support of the optimal transport plan $Q_t^{(w)}$ remains unchanged under small perturbations. Once the support is fixed, the perturbation of $Q_t^{(w)}$ is determined solely by the perturbation of the marginal prevalences.

\begin{lemma}[Perturbation of the sense transport plans]
\label{lem:sense_transport_perturbation}
%%%% BEFORE
%Suppose that, the optimal sense transport plan $Q_t^{(w)}$ is unique and satisfies
Suppose that the optimal sense transport plan $Q_t^{(w)}$ is unique and satisfies
\[
\left|\operatorname{supp}\left(Q_t^{(w)}\right)\right|
=
K_t^{(w)}+K_{t+1}^{(w)}-1.
\]
Then, as $n\to\infty$ with \(E_C^{(w)}, E_\pi^{(w)}\to0\),
\begin{equation}
E_Q^{(w)}
=
O\!\left(E_\pi^{(w)}\right).
\label{eq:sense_transport_perturbation}
\end{equation}
\end{lemma}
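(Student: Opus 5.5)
The plan is a basis-stability argument for the transportation linear program~\eqref{eq:sense_transition_distribution}. Fix an adjacent pair \((t,t+1)\) and write \(m=K_t^{(w)}\), \(n'=K_{t+1}^{(w)}\). The equality constraints of the transportation polytope have rank \(m+n'-1\), since one row/column sum constraint is redundant. A vertex whose support has size exactly \(m+n'-1\) is therefore a nondegenerate basic feasible solution. Its support \(B:=\operatorname{supp}(Q_t^{(w)})\) forms a spanning tree of the bipartite graph on \([m]\sqcup[n']\), and the basis matrix \(A_B\) (the columns of the constraint matrix indexed by \(B\), with one redundant row removed) is invertible. Uniqueness of the minimizer together with nondegeneracy implies strict complementary slackness. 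There exist dual potentials \((\alpha,\beta)\) with reduced costs \(\bar c_{k\ell}=C_{t,k,\ell}-\alpha_k-\beta_\ell\) that vanish on \(B\) and are strictly positive off \(B\). Indeed, a zero reduced cost off \(B\) for a nondegenerate basis would permit a pivot to a distinct optimal vertex, since nondegeneracy gives a positive step length, contradicting uniqueness.

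The first step is \emph{primal feasibility of the perturbed basis}. Define \(\tilde Q\) by \(\tilde Q_B=A_B^{-1}b(\widehat\pi_t,\widehat\pi_{t+1})\) and \(\tilde Q_{B^c}=0\). This is well defined because \(\widehat\pi_t\) and \(\widehat\pi_{t+1}\) both sum to one, so the redundant constraint stays consistent. By linearity, \(\|\tilde Q-Q_t^{(w)}\|_F\le\|A_B^{-1}\|\,\|b(\widehat\pi)-b(\pi)\|=O(E_\pi^{(w)})\), with a constant depending only on the fixed tree and on \(m,n'\). Since \(\min_{(k,\ell)\in B}Q_t^{(w)}(k,\ell)>0\) by nondegeneracy, \(\tilde Q\ge0\) once \(E_\pi^{(w)}\) is small.

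The second step is \emph{dual feasibility under cost perturbation}. Solve \(\hat\alpha_k+\hat\beta_\ell=\widehat C_{t,k,\ell}\) on the same tree \(B\) (normalizing \(\hat\alpha_1=0\), say). The potentials depend linearly on the basic costs, so \(|\hat\alpha-\alpha|+|\hat\beta-\beta|=O(E_C^{(w)})\), and the perturbed reduced costs off \(B\) move by \(O(E_C^{(w)})\). Since \(\min_{B^c}\bar c_{k\ell}>0\), they remain strictly positive when \(E_C^{(w)}\) is small. Primal and dual feasibility then certify that \(\tilde Q\) is optimal for the perturbed problem. Strict positivity of the off-support reduced costs also makes it the unique optimum, so \(\widehat Q_t^{(w)}=\tilde Q\).

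Combining the two steps, on an event whose probability tends to one (in the stochastic version, where \(E_C^{(w)},E_\pi^{(w)}=o_p(1)\)) we get \(\|\widehat Q_t^{(w)}-Q_t^{(w)}\|_F=O(E_\pi^{(w)})\). Because \(T\) and the \(K_t^{(w)}\) are fixed, taking the maximum over \(t\) gives~\eqref{eq:sense_transport_perturbation}. The cost error affects only the event on which the basis is preserved, not the leading term, which matches the remark before the lemma.

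The main obstacle is the second step, specifically showing that uniqueness plus the support-size condition yields \emph{strictly} positive reduced costs off \(B\). I would first try the pivot argument sketched above. If that proves delicate, the fallback is a direct characterization: the optimal face of the polytope is the intersection with \(\{Q:Q_{k\ell}=0 \text{ whenever } \bar c_{k\ell}>0\}\) for some strictly complementary dual, which exists by Goldman--Tucker. Uniqueness forces this face to be the single vertex \(Q_t^{(w)}\). Nondegeneracy then shows that every off-\(B\) cell has \(\bar c_{k\ell}>0\), since otherwise that cell could be entered at positive level without leaving the face.
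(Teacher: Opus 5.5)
Your proposal is correct and follows the same basis-stability argument as the paper: the unique, nondegenerate optimal basis has strictly positive nonbasic reduced costs, so it stays optimal under small cost and marginal perturbations, giving $\widehat q_{t,B}=B^{-1}\widehat b_t$ and an error of order $O(E_\pi^{(w)})$. Your version also justifies steps the paper only asserts. You show strict positivity of the reduced costs via the cycle/pivot argument, that the dropped redundant constraint stays consistent, and that the perturbed optimum is unique, which makes it slightly more complete.
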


\begin{proof}
For each consecutive period pair $(t,t+1)$, the equality constraints in
\eqref{eq:sense_transition_distribution} have rank
\[
K_t^{(w)}+K_{t+1}^{(w)}-1,
\]
since one of the marginal constraints is redundant. Let
$b_t^{(w)}$ denote the right-hand side obtained by removing one redundant
marginal constraint and stacking the remaining marginal probabilities.

Let $A_t^{(w)}$ denote the corresponding constraint matrix, and let
$q_t^{(w)}$ denote the vector obtained by stacking all entries of
$Q_t^{(w)}$. Then the equality constraints can be written as
\[
A_t^{(w)}q_t^{(w)}=b_t^{(w)}.
\]
By the support condition,
\[
\left|\operatorname{supp}\left(Q_t^{(w)}\right)\right|
=
K_t^{(w)}+K_{t+1}^{(w)}-1,
\]
the columns of $A_t^{(w)}$ corresponding to the nonzero entries of
$Q_t^{(w)}$ form a nonsingular square submatrix
\[
B_t^{(w)}
\in
\mathbb R^{(K_t^{(w)}+K_{t+1}^{(w)}-1)
\times
(K_t^{(w)}+K_{t+1}^{(w)}-1)}.
\]
Thus, the nonzero entries of $q_t^{(w)}$ are given by
\[
q_{t,B}^{(w)}
=
\left(B_t^{(w)}\right)^{-1}b_t^{(w)}.
\]

The uniqueness of $Q_t^{(w)}$ together with the support condition implies
that the corresponding basic feasible solution is nondegenerate and
that all nonbasic reduced costs are strictly positive. Therefore, for
sufficiently small $E_C^{(w)}$ and $E_\pi^{(w)}$, the same basis
$B_t^{(w)}$ remains optimal. Consequently, the basic entries of
$\widehat q_t^{(w)}$ are given by
\[
\widehat q_{t,B}^{(w)}
=
\left(B_t^{(w)}\right)^{-1}\widehat b_t^{(w)},
\]
and hence
\[
\widehat q_{t,B}^{(w)}-q_{t,B}^{(w)}
=
\left(B_t^{(w)}\right)^{-1}
\left(
\widehat b_t^{(w)}-b_t^{(w)}
\right).
\]

Since the nonbasic variables remain zero,
\[
\begin{aligned}
\left\|
\widehat Q_t^{(w)}-Q_t^{(w)}
\right\|_{\mathrm F}
&=
\left\|
\widehat q_{t,B}^{(w)}-q_{t,B}^{(w)}
\right\|_2 \\
&=
\left\|
\left(B_t^{(w)}\right)^{-1}
\left(
\widehat b_t^{(w)}-b_t^{(w)}
\right)
\right\|_2\\
&\leq
\left\|
\left(B_t^{(w)}\right)^{-1}
\right\|_2
\left\|
\widehat b_t^{(w)}-b_t^{(w)}
\right\|_2\\
&=
O\!\left(E_\pi^{(w)}\right).
\end{aligned}
\]
Taking the maximum over $t$ yields
\[
E_Q^{(w)}
=
O\!\left(E_\pi^{(w)}\right).
\]
\end{proof}

\paragraph{Perturbation of the induced second-moment operators.}

Finally, we propagate the preceding estimation errors to the induced
second-moment operators. For \(\bullet\in\{\mathrm{full},\mathrm{mean},\mathrm{dev}\}\), define
\[
E_{M,\bullet}^{(w)}
:=
\max_{t,s}
\left\|
\widehat{\mathbf M}_{\bullet}^{(w)}(t,s)
-
\mathbf M_{\bullet}^{(w)}(t,s)
\right\|_2.
\]

\begin{lemma}[Gaussian second-moment operator perturbation]
\label{lem:gaussian_second_moment_operator_perturbation}
As $n\to\infty$ with 
\[
E_Q^{(w)},\,
E_{\mu}^{(w)},\,
E_{\Sigma}^{(w)},\,
E_A^{(w)}
\to 0,
\]
we have
\[
E_{M,\mathrm{mean}}^{(w)}
=
O\!\left(
E_Q^{(w)}
+
E_{\mu}^{(w)}
\right),
\]
\[
E_{M,\mathrm{dev}}^{(w)}
=
O\!\left(
E_Q^{(w)}
+
E_{\Sigma}^{(w)}
+
E_A^{(w)}
\right),
\]
and
\[
E_{M,\mathrm{full}}^{(w)}
=
O\!\left(
E_Q^{(w)}
+
E_{\mu}^{(w)}
+
E_{\Sigma}^{(w)}
+
E_A^{(w)}
\right).
\]
\end{lemma}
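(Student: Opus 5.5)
The plan is to write each operator as a finite sum of smooth functions of the fitted parameters, using Theorem~\ref{thm:exact_sense_transition_attribution} and Proposition~\ref{prop:explicit_sense_components}. Then we bound each factor's perturbation and combine them by a product (telescoping) rule. Since $T$, $K_t^{(w)}$ and $d$ are fixed, every sum is over a fixed finite index set. All population quantities, namely $\|\mu_{t,k}\|$, $\|\Sigma_{t,k}\|_2$, $\|A_{t,k,\ell}\|_2$ and $q\le 1$, are bounded constants. For the mean operator,
\[
\mathbf M_{\mathrm{mean}}(t,s)=\sum_{k,\ell}q(t,s;k,\ell)\,(\mu_{t,k}-\mu_{s,\ell})(\mu_{t,k}-\mu_{s,\ell})^\top .
\]
The rank-one factor differs from its estimate by $O(E_\mu)$. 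It then suffices to show $|\widehat q(t,s;k,\ell)-q(t,s;k,\ell)|=O(E_Q)$ uniformly in $(t,s,k,\ell)$.

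For this we use Proposition~\ref{prop:dynamic_programming_transition}, which gives $q(t,s;\cdot,\cdot)=\operatorname{diag}(\pi_t)P_t\cdots P_{s-1}$ with $P_t=\operatorname{diag}(\pi_t)^{-1}Q_t$. Since $\pi_{t,k}>0$ is bounded away from zero, $\widehat P_t-P_t=O(E_Q+E_\pi)$. The error $E_\pi$ is itself controlled by $E_Q$, because $\pi_t$ and $\widehat\pi_t$ are the row sums of $Q_t$ and $\widehat Q_t$. Telescoping the product of at most $T-1$ stochastic matrices, each of norm at most $\sqrt{K}$, gives $O(E_Q)$. Note that the plans must be the adjacent plans of the estimated process, which share marginals with $\widehat\pi$; this is what makes the reduction $E_\pi\lesssim E_Q$ legitimate. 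Combining the two bounds gives $E_{M,\mathrm{mean}}=O(E_Q+E_\mu)$.

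The deviation operator is the main obstacle. The weights $r_{t,s}(j\mid k,\ell)$ and the conditional expectation defining $\Xi$ involve division by $q$, so zero-mass pairs are delicate. To avoid conditioning, we would instead write $q\,\mathbf C_{\mathrm{dev}}=q(\Sigma_{t,k}+\Sigma_{s,\ell})-q\,\Xi-q\,\Xi^\top$. We then use the unnormalized quantity
\[
q(t,s;k,\ell)\,\Xi(t,s;k,\ell)=\sum_{\text{paths } z_{t:s}:\,z_t=k,\,z_s=\ell}\pi_{t,k}\prod_{u=t}^{s-1}P_u(z_u,z_{u+1})\;\Sigma_{t,k}\bigl(A_{s-1,z_{s-1},z_s}\cdots A_{t,z_t,z_{t+1}}\bigr)^\top .
\]
This is a polynomial in the entries of $\pi$, $P$, $\Sigma$ and $A$, with a fixed finite number of terms. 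It is therefore locally Lipschitz, and its perturbation is $O(E_Q+E_\Sigma+E_A)$ with no division involved. The same bound holds for the term $q(\Sigma_{t,k}+\Sigma_{s,\ell})$.

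One subtlety in the unnormalized form concerns pairs with $q=0$ but $\widehat q>0$. For such pairs the estimated weighted term $\widehat q\,\widehat{\mathbf C}$ is itself bounded by $\widehat q\cdot O(1)=O(E_Q)$, so no separate argument is needed. Alternatively, the support stability from the basis argument of Lemma~\ref{lem:sense_transport_perturbation} shows these pairs do not arise with probability tending to one.

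Finally, $\mathbf M_{\mathrm{full}}=\mathbf M_{\mathrm{mean}}+\mathbf M_{\mathrm{dev}}$ holds by Theorem~\ref{thm:exact_mean_deviation_decomposition}. This identity applies equally to the estimated process, since the theorem holds for any fitted CUSP process. The triangle inequality then gives the full bound. Taking the maximum over the finitely many pairs $(t,s)$ preserves all orders.
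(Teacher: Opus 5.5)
Your proposal is correct and follows the paper's route. Both arguments write each operator as a finite transition-weighted sum via Theorem~\ref{thm:exact_sense_transition_attribution} and the explicit Gaussian terms, bound the weights and per-pair terms by Lipschitz arguments, and obtain the full operator from the exact mean--deviation split. Your reduction $E_\pi=O(E_Q)$ (from the row and column sums of $\widehat Q_t$) and your unnormalized path-sum for $q\,\Xi$ usefully make explicit two points the paper's sketch leaves implicit: why no $E_\pi$ term appears in the bound, and how to avoid dividing by $q$ inside the conditional expectations.
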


\begin{proof}
By \eqref{eq:sense_transition_operator_decomposition}, the mean and deviation operators are finite sums of terms weighted by the sense-transition probabilities. For the mean component, $\mathbf C_{\mathrm{mean}}^{(w)}(t,s;k,\ell)$ is determined by the corresponding sense means. Therefore, we obtain
\[
E_{M,\mathrm{mean}}^{(w)}
=
O\!\left(
E_Q^{(w)}
+
E_\mu^{(w)}
\right).
\]

For the deviation component, the covariance terms contribute \(O(E_\Sigma^{(w)})\). Moreover, the cross-covariance terms involve \(\Sigma_{t,k}^{(w)}\) multiplied by conditional expectations of finite products of the coupling maps \(A^{(w)}\). Since the number of periods is fixed, the perturbation of these products is \(O(E_A^{(w)})\). Combining these perturbations with the perturbation of the transition probabilities gives
\[
E_{M,\mathrm{dev}}^{(w)}
=
O\!\left(
E_Q^{(w)}
+
E_\Sigma^{(w)}
+
E_A^{(w)}
\right).
\]

Finally, by \eqref{eq:mean_dev_operator_decomposition},
\[
E_{M,\mathrm{full}}^{(w)}
=
O\!\left(
E_Q^{(w)}
+
E_\mu^{(w)}
+
E_\Sigma^{(w)}
+
E_A^{(w)}
\right).
\]
\end{proof}

\paragraph{Proof of distance statistical recovery.}

The preceding results establish the estimation rate for the Gaussian mixture parameters and propagate it through the coupling construction, the sense transport plans, and the induced second-moment operators.
Combining these results with the second-moment operator-to-distance perturbation bound of \cite{ezoe2026multiscale} yields the desired distance recovery rates.

\begin{proof}[Proof of Theorem~\ref{thm:distance_recovery}]
Under Assumptions~\ref{ass:coupling_construction_uniqueness} and
\ref{ass:gaussian_estimation_regularity}, combining
Lemma~\ref{lem:gmm_parameter_recovery},
Lemma~\ref{lem:gaussian_coupling_perturbation},
Lemma~\ref{lem:sense_transport_perturbation}, and
Lemma~\ref{lem:gaussian_second_moment_operator_perturbation} yields
\[
E_{M,\bullet}^{(w)}
=
O_p\!\left((n^{(w)})^{-1/2}\right).
\]

We next relate the second-moment operator perturbation to the induced
distance perturbation. The same argument as in
\cite{ezoe2026multiscale} applies, without the need for an orthogonal
rotation between the population and estimated second-moment operators.

%For the shared mean basis, under Assumption~\ref{assump:separated_eigenvalues},
For the word-local mean-path basis, under Assumption~\ref{assump:separated_eigenvalues},
the same argument as in Proposition~C.4 of
\cite{ezoe2026multiscale} yields
\[
E_{u,r}^{(w)}
:=
\min_{\epsilon\in\{-1,1\}}
\left\|
\widehat{\bm u}_r^{(w)}
-
\epsilon\bm u_r^{(w)}
\right\|_2
\le
\frac{2^{3/2}T}{\delta}E_{M,\mathrm{mean}}^{(w)}
\]
for every \(r\in[R^{(w)}]\). Hence,
\[
E_{u,r}^{(w)}
=
O_p\!\left((n^{(w)})^{-1/2}\right),
\]
since \(T/\delta=O(1)\).

Moreover, the same argument as in Proposition~C.7 of
\cite{ezoe2026multiscale} yields
\[
\max_{t,s\in[T]}
\left|
\widehat d_{\bullet,\mathrm{tr}}^{(w)}(t,s)^2
-
d_{\bullet,\mathrm{tr}}^{(w)}(t,s)^2
\right|
\le
d E_{M,\bullet}^{(w)},
\]
and, for every \(r\in[R^{(w)}]\),
\[
\max_{t,s\in[T]}
\left|
\widehat d_{\bullet,r}^{(w)}(t,s)^2
-
d_{\bullet,r}^{(w)}(t,s)^2
\right|
\le
E_{M,\bullet}^{(w)}
+
2E_{u,r}^{(w)}
\max_{t,s\in[T]}
\|\mathbf M_{\bullet}^{(w)}(t,s)\|.
\]
Since the population quantities are fixed, combining these
bounds with
\[
E_{M,\bullet}^{(w)}
=
O_p\!\left((n^{(w)})^{-1/2}\right)
\]
gives
\[
\max_{t,s\in[T]}
\left|
\widehat d_{\bullet,\rho}^{(w)}(t,s)^2
-
d_{\bullet,\rho}^{(w)}(t,s)^2
\right|
=
O_p\!\left((n^{(w)})^{-1/2}\right)
\]
for \(\rho\in\{\mathrm{tr},1,\ldots,R^{(w)}\}\), which proves the theorem.
\end{proof}

%%%%APPENDIX:EXPERIMENT%%%%
\section{Supplementary experiments}
\label{app:experiments}

The supplementary experiments follow the progression of Section~\ref{sec:experiments}. The synthetic study tests finite-sample recovery of the Gaussian estimator. DWUG provides standard LSC validation and examines the mean--deviation decomposition. Janus isolates multi-period profile recovery and compositional coherence. The Court analysis documents the natural text corpus and shows how numerical transition and mode readouts lead to inspectable passages. Each subsection supplies construction details, robustness results, or audit evidence for a claim made in the main text.

\subsection{Synthetic mixture process and supplementary recovery}
\label{app:synthetic_details}

This experiment tests finite-sample recovery under the Gaussian conditions of Theorem~\ref{thm:distance_recovery}. It is a controlled check of the estimator rather than an additional application domain.

\noindent\textbf{Population and estimation.} We fix a three-period process in \(d=2\) with two components per period. Prevalences and centers are
\[
\begin{aligned}
\pi_1&=(.55,.45),& \mu_1&=\begin{pmatrix}-2.4&0\\2.2&0\end{pmatrix},\\
\pi_2&=(.40,.60),& \mu_2&=\begin{pmatrix}-1.9&.35\\2.7&-.25\end{pmatrix},\\
\pi_3&=(.28,.72),& \mu_3&=\begin{pmatrix}-1.3&.70\\3.1&.45\end{pmatrix}.
\end{aligned}
\]
The full covariance matrices are
\[
\begin{array}{c|cc}
t & \Sigma_{t,1} & \Sigma_{t,2}\\
\hline
1 & \left(\begin{smallmatrix}.30&0\\0&.30\end{smallmatrix}\right) & \left(\begin{smallmatrix}.35&0\\0&.25\end{smallmatrix}\right)\\
2 & \left(\begin{smallmatrix}1.60&.20\\.20&.40\end{smallmatrix}\right) & \left(\begin{smallmatrix}.45&-.15\\-.15&1.80\end{smallmatrix}\right)\\
3 & \left(\begin{smallmatrix}3.20&.35\\.35&.50\end{smallmatrix}\right) & \left(\begin{smallmatrix}.55&.20\\.20&3.50\end{smallmatrix}\right).
\end{array}
\]
Thus the population contains prevalence shifts, center displacement, and anisotropic within component change. Mean and deviation account for \(0.873\) and \(0.127\) of adjacent full path energy.

For each \(n\in\{100,200,400,800,1600\}\), we draw \(n\) independent usages per period and repeat the experiment \(50\) times. We fit a full-covariance GMM independently in every period with \(K=2\) fixed, covariance regularization \(10^{-6}\), five initializations, and no access to sampled component labels. Fixing \(K\) matches Assumption~\ref{ass:gaussian_estimation_regularity}. Because mixture labels are arbitrary, fitted components are aligned to the population components only when calculating errors.

\noindent\textbf{Errors and supplementary results.} We use
\[
E_\pi=\max_{t,k}|\widehat\pi_{t,k}-\pi_{t,k}|,\quad
E_\mu=\max_{t,k}\|\widehat\mu_{t,k}-\mu_{t,k}\|_2,\quad
E_\Sigma=\max_{t,k}\|\widehat\Sigma_{t,k}-\Sigma_{t,k}\|_2.
\]
For \(\bullet\in\{\mathrm{full},\mathrm{mean},\mathrm{dev}\}\), operator error is
\[
E_{M,\bullet}=\max_{t,s}\|\widehat{\mathbf M}_\bullet(t,s)-\mathbf M_\bullet(t,s)\|_2.
\]
Squared-distance error is the maximum absolute error over period pairs for either the trace or one of the two word-local modes:
\[
E_{d,\bullet}=\max_{t,s}\left|
\widehat d_{\bullet,\rho}(t,s)^2
-
d_{\bullet,\rho}(t,s)^2
\right|.
\]
The fitted mode distances use eigenvectors estimated from the fitted mean-path operator. Curves report means and normal approximation \(95\%\) confidence intervals across repetitions. Table~\ref{tab:app_synthetic_slopes} estimates slopes by ordinary least squares on the five log mean errors.

\begin{table*}[!htbp]
\centering
\caption{Synthetic finite-sample recovery. The last column reports mean error at \(n=1600\).}
\label{tab:app_synthetic_slopes}
\small
\setlength{\tabcolsep}{5pt}
\begin{tabular}{@{}lcc@{}}
\toprule
Quantity & Log--log slope [95\% CI] & Error at \(n=1600\)\\
\midrule
Prevalence \(E_\pi\) & \(-0.529\) [\(-0.599,-0.458\)] & \(0.017\)\\
Center \(E_\mu\) & \(-0.437\) [\(-0.565,-0.310\)] & \(0.135\)\\
Covariance \(E_\Sigma\) & \(-0.445\) [\(-0.542,-0.348\)] & \(0.337\)\\
Full operator \(E_{M,\mathrm{full}}\) & \(-0.532\) [\(-0.595,-0.469\)] & \(0.570\)\\
Mean operator \(E_{M,\mathrm{mean}}\) & \(-0.534\) [\(-0.606,-0.463\)] & \(0.581\)\\
Deviation operator \(E_{M,\mathrm{dev}}\) & \(-0.514\) [\(-0.654,-0.375\)] & \(0.101\)\\
Full trace distance & \(-0.532\) [\(-0.588,-0.476\)] & \(0.580\)\\
Full mode-1 distance & \(-0.540\) [\(-0.606,-0.473\)] & \(0.542\)\\
Full mode-2 distance & \(-0.561\) [\(-0.716,-0.406\)] & \(0.089\)\\
Mean trace distance & \(-0.542\) [\(-0.615,-0.468\)] & \(0.558\)\\
Deviation trace distance & \(-0.514\) [\(-0.687,-0.341\)] & \(0.099\)\\
\bottomrule
\end{tabular}
\end{table*}

The operator, trace-distance, and mode-distance intervals all contain the \(n^{-1/2}\) reference rate. Parameter errors also decrease with \(n\), with the center and covariance curves somewhat shallower over this finite range. Figure~\ref{fig:app_synthetic_distance_components} separates mean- and deviation-process distances.

\begin{figure*}[t]
\centering
\includegraphics[width=0.42\textwidth]{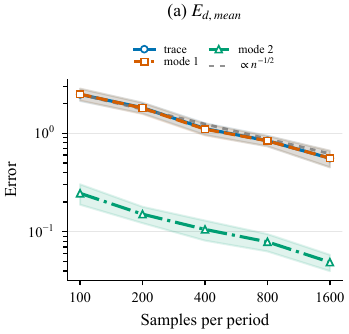}\hfill
\includegraphics[width=0.42\textwidth]{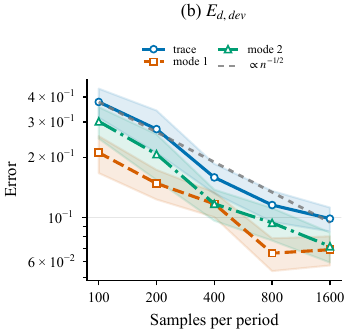}
\caption{Squared-distance recovery for the (a) mean and (b) deviation processes. Trace and mode distances use the corresponding operator with the estimated word-local basis. Bands are \(95\%\) confidence intervals over \(50\) repetitions.}
\label{fig:app_synthetic_distance_components}
\end{figure*}

\noindent\textbf{Pure mechanism checks.} We test the decomposition with three populations in which only centers, covariances, or prevalences change. Center movement produces mean energy. Changing prevalences between fixed components also transports mass between their centers and therefore produces mean energy. The one-component covariance process produces deviation energy. With \(n=1600\) usages per period and \(20\) repetitions, Table~\ref{tab:app_synthetic_mechanisms} shows that CUSP assigns \(99.89\%\), \(98.95\%\), and \(99.95\%\) of path energy to the respective planted mechanisms. The remaining \(1.05\%\) mean allocation in the covariance-only process is finite-sample estimation error.

%%%% BEFORE
%Table~\ref{tab:app_synthetic_mechanisms} shows that CUSP assigns \(99.9\%\), \(99.0\%\), and \(100.0\%\) of path energy to the respective planted mechanisms. The remaining \(1.0\%\) mean allocation in the covariance-only process is finite-sample estimation error.

\begin{table}[!htbp]
\centering
\caption{CUSP allocation of path energy in pure-mechanism populations, in percent. Each row changes only the stated population quantity. Entries are means \(\pm\) \(95\%\) confidence-interval half-widths over \(20\) repetitions. The theoretically correct term is bold.}
\label{tab:app_synthetic_mechanisms}
\small
\setlength{\tabcolsep}{4pt}
\begin{tabular}{lcc}
\toprule
Population change & Mean term & Deviation term \\
\midrule
Centers move & \textbf{99.89 \(\pm\) 0.02} & 0.11 \(\pm\) 0.02 \\
Covariances change & 1.05 \(\pm\) 0.40 & \textbf{98.95 \(\pm\) 0.40} \\
Prevalences change & \textbf{99.95 \(\pm\) 0.01} & 0.05 \(\pm\) 0.01 \\
\bottomrule
\end{tabular}
\end{table}

\subsection{DWUG implementation and decomposition}
\label{app:dwug_details}

The English and German DWUG evaluations \citep{schlechtweg2021dwug} use $9{,}107$ and $9{,}125$ XL-LEXEME usage embeddings \citep{cassotti2023xllexeme}, respectively. English contains $46$ targets with $61$--$100$ usages per target--period. German contains $50$ targets with $39$--$100$. For every target and period, BIC selects $K\in\{1,\ldots,5\}$ and diagonal or full covariance, jointly with covariance regularization in $\{10^{-6},10^{-5},10^{-4},10^{-3}\}$. Components with fitted weight below $0.01$ are excluded, and all fits use seed $0$. The DWUG vectors retain \(1{,}024\) coordinates after removal of the leading direction; no additional low-dimensional PCA projection is applied. Since the embedding dimension exceeds the number of usages per target--period, unregularized empirical full covariances are rank deficient. Positive covariance regularization makes the fitted Gaussian covariances nonsingular, but does not remove the statistical uncertainty of estimating them in this regime. The Gaussian recovery theorem assumes a correctly specified, fixed-dimensional model and the stated estimation conditions. In the lexical experiments, GMMs are regularized approximations: the decomposition remains exact for the fitted process, but the theorem does not guarantee recovery for normalized lexical data.

%%%% BEFORE
%Components with fitted weight below $0.01$ are excluded, and all fits use seed $0$.

%%%% BEFORE
%The primary $k=1$ representation removes the leading pooled panel direction before $\ell_2$ normalization.

For each language, we pool all target--period embeddings, fit one centering and principal-direction transform, and hold it fixed across targets. The primary $k=1$ representation removes the leading pooled panel direction before $\ell_2$ normalization. Across the lexical experiments, PCA fitting uses random seed $0$, and preprocessing fitting and transformation use single-threaded BLAS execution to control numerical variability. All methods in Table~\ref{tab:dwug_results} use these same preprocessed embeddings. CUSP selects its mixture configuration by unsupervised BIC; baseline parameters follow the fixed configurations supplied with the code. These are different parameter-selection rules, not matched hyperparameter-search budgets. The $k=0$ and $k=3$ representations are sensitivity checks. Confidence intervals in Table~\ref{tab:dwug_results} use $3{,}000$ paired bootstrap samples of target words, and Table~\ref{tab:app_dwug_decomposition} reports all three preprocessing settings.

\begin{table*}[!htbp]
\centering
\caption{Spearman correlation with DWUG graded-change judgments for the full, mean, and deviation traces. Each transform is fitted once to the corresponding language panel.}
\label{tab:app_dwug_decomposition}
\small
\setlength{\tabcolsep}{4pt}
\begin{tabular}{@{}lccccccccc@{}}
\toprule
& \multicolumn{3}{c}{$k=0$} & \multicolumn{3}{c}{$k=1$} & \multicolumn{3}{c}{$k=3$} \\
\cmidrule(lr){2-4}\cmidrule(lr){5-7}\cmidrule(l){8-10}
Language & Full & Mean & Dev. & Full & Mean & Dev. & Full & Mean & Dev. \\
\midrule
English & $.772$ & $.762$ & $.673$ & $.746$ & $.742$ & $.681$ & $.762$ & $.757$ & $.706$ \\
German & $.857$ & $.857$ & $.722$ & $.815$ & $.809$ & $.674$ & $.804$ & $.798$ & $.660$ \\
\bottomrule
\end{tabular}
\end{table*}

The mean trace remains close to the full trace under every preprocessing choice. The deviation trace is weaker but remains associated with the human score, and is comparatively large for heterogeneous words such as English \emph{bar} and German \emph{Rezeption}. The decomposition therefore identifies within component reorganization that the scalar full trace would otherwise conceal.

\subsection{Janus construction and supplementary results}
\label{app:janus_details}

\citet{cassotti2025sense} generated the released Janus records with Llama~3 conditioned on a lemma, an Oxford English Dictionary sense definition, and a year. The release contains $131{,}811$ sentences, $13{,}762$ sense--year records, and $2{,}768$ lemmas. We select $48$ lemmas with four released sense pools and ten sentences per pool, yielding $1{,}920$ fixed sentences. These populate six periods of $40$ usages per lemma by sampling with replacement. The resulting $11{,}520$ usage slots are not independent new sentences. Repeated contexts limit source diversity and can affect covariance estimates despite regularization. Sampling with replacement does not itself violate independence, but the enlarged set of usage slots should not be interpreted as an equally large collection of distinct historical observations. Resampling preserves any biases or irregularities in the released pools. The experiment tests recovery of planted distributional changes and compositional coherence, not historical fidelity or recovery of the released sense identities.

%%%% BEFORE
%The resulting $11{,}520$ usage slots are not independent new sentences.

Stable schedules keep the four pool prevalences at $(.25,.25,.25,.25)$. In gradual schedules, the old and new pool prevalences follow $(.90,.72,.54,.36,.18,0)$ and its reverse. Abrupt schedules use $(.90,.90,.90,0,0,0)$ and its reverse. The two background pools have nominal prevalence $.05$ in changed schedules, with finite counts allocated by largest remainder. Sixteen lemmas are assigned to each schedule. Released sense identifiers determine only these planted prevalences and are discarded before fitting. CUSP receives neither sense identifiers nor cross-period usage links.

We fit one pooled panel $k=1$ transform to the original $1{,}920$ contexts and freeze it before constructing schedules. A separate diagonal GMM is then fitted to every lemma--period panel, with BIC choosing $K\in\{1,\ldots,5\}$ and covariance regularization in $\{10^{-5},10^{-4}\}$ subject to minimum component weight $.01$ and at least three samples per component.

To test temporal coherence, pairwise OT independently optimizes every period-pair plan \citep{montariol2021transport,kishino2025quantifying}. For each consecutive triple, we compare its direct plan $q_{t,t+2}$ with the Markov composition of $q_{t,t+1}$ and $q_{t+1,t+2}$ and average the resulting $\ell_1$ disagreement. CUSP defines the non-adjacent plan by this composition, so its error is numerical zero by construction. Table~\ref{tab:app_janus_results} separates schedule recovery from the composition check.

\begin{table}[!htbp]
\centering
\caption{Janus recovery with unsupervised period local GMMs and pooled panel $k=1$ preprocessing. Composition error is mean $\ell_1$ disagreement.}
\label{tab:app_janus_results}
\small
\setlength{\tabcolsep}{4pt}
\begin{tabular}{@{}lclcc@{}}
\toprule
Schedule & $n$ & Schedule recovery & Pairwise OT & CUSP \\
\midrule
Stable & $16$ & Path length $0$ & $9.8\!\times\!10^{-18}$ & $1.0\!\times\!10^{-17}$ \\
Gradual & $16$ & Profile TV $.044$ & $.253$ & $1.8\!\times\!10^{-17}$ \\
Abrupt & $16$ & Peak accuracy $1.000$ & $6.4\!\times\!10^{-18}$ & $1.1\!\times\!10^{-17}$ \\
Overall & $48$ & $\rho=.954$, AUROC $1.000$ & $.084$ & $1.3\!\times\!10^{-17}$ \\
\bottomrule
\end{tabular}
\end{table}

The pairwise inconsistency is concentrated in gradual replacement, where mass is redistributed over several adjacent steps. Stable schedules and the single-boundary abrupt schedules already compose to numerical precision. Janus therefore does not establish uniform superiority over pairwise OT. It shows accurate recovery of the planted temporal profiles and verifies the coherence added by CUSP when change unfolds over multiple periods.

\subsection{Court corpus construction and fitting}
\label{app:court_data}

The Court panel is built from the May 6, 2024 snapshot of CourtListener's bulk opinion data, maintained by Free Law Project \citep{free_law_project2024courtlistener}, and includes opinions filed from 1950 through 2024. Excerpt identifiers are the \texttt{id} values of individual CourtListener opinion records in that snapshot. The target vocabulary contains $60$ unigram legal lemmas selected from the tables of contents of West Academic law textbooks.\footnote{\url{https://www.westacademic.com/}} The vocabulary was fixed before model fitting and ranking, and the complete list is included with the released code. The analysis uses eight decade bins from $1950$ through $2020$. The final bin contains opinions from \(2020\) through the May 2024 snapshot, while the preceding bins cover ten years each. Equalizing usage counts does not equalize calendar coverage. The reported distances compare pooled period distributions rather than annualized rates of semantic change, so the shorter final bin should be interpreted with this limitation.

%%% BEFORE
%The final bin contains opinions from \(2020\) through \(2024\), while the preceding bins cover ten years each.

%%%% BEFORE
%The same prompt and decision rule are applied to every lemma and decade.
Construction proceeds in five stages. First, we locate target-lemma matches and retain a window of approximately $200$ words on each side. Up to $5{,}000$ candidate opinions are considered for each lemma--decade. At most $20$ occurrences are scanned for an opinion--lemma pair, and at most six passages from that pair can be retained. Second, a fixed LLM prompt receives the marked occurrence and its surrounding window. It retains the occurrence only when the court is defining, applying, limiting, or otherwise construing the lemma as an object of legal reasoning. Captions, repeated procedural text, proper nouns, ordinary uses, and incidental mentions are discarded. The same prompt and decision rule are applied to every lemma and decade. Consequently, the analysis concerns the selected legal-reasoning usages, not changes in a word's prevalence between ordinary discourse and legal reasoning or its unrestricted general-language history. The exact prompts are included with the released code. The collection run used \texttt{gpt-5.6-luna} for both this retention decision and the subsequent verbatim extraction. It targeted up to $240$ distinct retained opinions per lemma--decade, with a minimum target of $120$. Third, a separate prompt copies one contiguous passage verbatim that contains the marked occurrence and completes the relevant reasoning. It does not summarize or rewrite the opinion. Fourth, XL-LEXEME embeds the marked target token in that extracted passage. Finally, if $N_{w,t}$ is the retained count for lemma $w$ in decade $t$, we sample $n_w=\min_tN_{w,t}$ passages from every decade with seed $0$. Each lemma therefore has equal counts across its eight periods, while $n_w$ may differ between lemmas.

The procedure yields $322{,}222$ passages before equalization, and the balanced panel contains $258{,}480$. The full balanced panel is used once to fit the Court centering and leading principal direction. The same fitted $k=1$ transform is then applied to every lemma and decade. For each lemma--decade, BIC selects a diagonal Gaussian mixture with $K\in\{1,\ldots,5\}$ and minimum fitted weight $.02$. Of the $480$ fits, $471$ select $K=5$, six select $K=4$, two select $K=3$, and one selects $K=2$. Because BIC often selects the upper bound, the component cap determines the resolution of the fitted representation. The resulting components describe recurring usage structure and are not claimed to recover a unique inventory of linguistic senses. Alternative caps may change component assignments and the allocation between mean and deviation terms, while the decomposition remains exact for each fitted process.

Across panel fitted ABTT settings \(k=0,\ldots,4\), a sensitivity analysis using a common \(z\geq1.1\) screen places \emph{privacy} among the top five lemmas by enrichment and the top two by secondary-mode temporal separation. Its largest change peaks in either \(1950{\rightarrow}1960\) or \(1960{\rightarrow}1970\). Thus its prominence and multi-period character persist across these preprocessing choices, although the decade of its largest peak varies.

%This concentration at the search boundary makes the choice of cap a substantive modeling limitation. Components should be interpreted at the resolution of the bounded Gaussian approximation, rather than as an estimate of the true number of linguistic senses. Changing the cap may change component assignments and the mean--deviation allocation, even though the decomposition remains exact for each fitted process.

%%%% BEFORE
%Of the $480$ fits, $471$ select $K=5$, six select $K=4$, two select $K=3$, and one selects $K=2$.

The released materials include the exact retention and extraction prompts, fixed vocabulary, configuration files, frozen preprocessing and model outputs, and scripts for reproducing the reported geometry, rankings, passage retrieval, and figures. Because the complete Court panel is large, the accompanying code package provides the frozen transform fitted on the full panel, processed inputs for the two main-text lemmas, and numerical outputs for all 60 lemmas. The released scripts reproduce the reported Court figures and rankings and rerun the main-text analyses. This subsection documents the corpus construction and fitting procedure for rebuilding the full panel from the cited CourtListener source. The code and released results are available at \url{https://github.com/hisanor013/cusp}. The complete balanced Court text and embedding panels will be made publicly available in a separate data release. Table~\ref{tab:court_data_counts} summarizes the resulting corpus and balanced panel.

%%%% BEFORE
%The raw CourtListener snapshot, intermediate retention outputs, and data-collection orchestration are not included in the supplementary materials. Reconstructing the balanced panel from raw opinions therefore additionally requires the same source snapshot and LLM decoding stack.

\begin{table}[!htbp]
\centering
\caption{Court corpus and balanced analysis-panel sizes. Per-decade counts are calculated after equalization within each lemma.}
\label{tab:court_data_counts}
\small
\setlength{\tabcolsep}{5pt}
\begin{tabular}{lr}
\toprule
Quantity & Count \\
\midrule
Distinct court opinions represented & $100{,}473$ \\
Target lemmas & $60$ \\
Retained passages before equalization & $322{,}222$ \\
Passages in the balanced analysis panel & $258{,}480$ \\
Median passages per lemma--decade & $531$ \\
%%%% BEFORE
%Range per lemma--decade & $294$--$911$ \\
Range of equalized counts across lemmas & $294$--$911$ \\
Passages per decade for \emph{privacy} & $651$ \\
Lemmas meeting the standardized-change threshold & $53$ \\
\bottomrule
\end{tabular}
\end{table}

\subsection{Court rankings and passage analysis}
\label{app:court_audits}

This subsection follows the Court analysis in the order in which every result is selected. The within-lemma standardized profile first identifies an adjacent decade that is unusually large relative to the lemma's own history. At that peak, $E_{\max}$ identifies the component movement whose share of change is largest relative to its transported mass. The word-local analysis then asks whether the leading modes peak at different transitions. Representative source and target passages make these numerically selected component movements inspectable.

All numerical selections precede passage inspection. After the analysis fixes a component movement, each usage is assigned to the fitted component with the highest posterior probability. Representative passages within the selected source and target components are then ranked independently using a fixed score based on proximity to the component center and a surface form check favoring exact matches and forms beginning with the target lemma. The retrieved passages provide representative source and target usages for the attributed movement, and CourtListener identifiers link every excerpt to its underlying opinion. Named decisions situate the retrieved language historically.

\noindent\textbf{Peak transition attribution by maximum enrichment.} At a lemma's selected peak, combine the mean and deviation contributions as
\[
a_{k\ell}=q_{k\ell}\operatorname{tr}\!\left(\mathbf C_{\mathrm{mean}}(k,\ell)+\mathbf C_{\mathrm{dev}}(k,\ell)\right),
\qquad \alpha_{k\ell}=a_{k\ell}/d_t^2.
\]
We define
\[
E_{\max}^{(w)}=\max_{q_{k\ell}>0}\frac{\alpha_{k\ell}}{q_{k\ell}},
\]
which identifies the component movement whose contribution share is most disproportionate to its transported-mass share. Since $a_{k\ell}=q_{k\ell}\operatorname{tr}(\mathbf C_{\mathrm{mean}}+\mathbf C_{\mathrm{dev}})$, the factor $q_{k\ell}$ cancels in the ratio. Thus $E_{\max}$ measures displacement contribution per unit of transported mass, normalized by the total squared change; there is no uncancelled inverse-mass factor. Numerically, the implementation excludes pairs with $q_{k\ell}\leq10^{-15}$ and floors the total squared change at $10^{-15}$. This is a per-unit-mass attribution statistic, not a significance measure, and rare component pairs can still have uncertain estimated displacements.  Because mixtures are fitted separately by period, $k$ and $\ell$ are period local labels. Equal or unequal indices have no intrinsic semantic interpretation. Table~\ref{tab:app_court_enrichment} reports the ten highest values. The complete ranking of all $53$ lemmas passing this threshold and the corresponding figure are included in the supplementary material.

%%%% BEFORE
%Thus $E_{\max}$ measures displacement contribution per unit of transported mass, normalized by the total squared change.

\begin{table*}[!htbp]
\centering
\caption{Ten highest maximum-enrichment transitions under panel-fitted $k=1$. Contribution $\alpha$ is the selected pair's share of variation at the selected peak.}
\label{tab:app_court_enrichment}
\small
\setlength{\tabcolsep}{4pt}
\begin{tabular}{@{}clcccc@{}}
\toprule
Rank & Lemma & Peak transition ($z$) & Pair & $\alpha$ & $E_{\max}$ \\
\midrule
1 & privacy & $1960{\to}1970$ ($1.43$) & $2{\to}3$ & $.371$ & $15.10$ \\
2 & search & $1980{\to}1990$ ($1.55$) & $3{\to}4$ & $.690$ & $13.67$ \\
3 & standing & $1960{\to}1970$ ($1.63$) & $3{\to}4$ & $.366$ & $13.23$ \\
4 & warrant & $1950{\to}1960$ ($2.26$) & $1{\to}0$ & $.335$ & $11.11$ \\
5 & trespass & $2000{\to}2010$ ($1.74$) & $1{\to}4$ & $.390$ & $10.51$ \\
6 & exercise & $1960{\to}1970$ ($1.36$) & $0{\to}1$ & $.587$ & $10.33$ \\
7 & effect & $1970{\to}1980$ ($1.53$) & $0{\to}2$ & $.239$ & $10.17$ \\
8 & due & $1960{\to}1970$ ($1.22$) & $3{\to}0$ & $.158$ & $9.25$ \\
9 & official & $1950{\to}1960$ ($2.06$) & $2{\to}3$ & $.119$ & $7.60$ \\
10 & liberty & $1970{\to}1980$ ($1.50$) & $2{\to}0$ & $.169$ & $7.16$ \\
\bottomrule
\end{tabular}
\end{table*}

The following paragraphs examine the five highest ranked lemmas in Table~\ref{tab:app_court_enrichment}. This selection is determined by the numerical ranking rather than by the readability of the retrieved passages. The complete ranking and component assignments are included in the supplementary material.

\textbf{Privacy.} The source says that ``Professor Prosser classified these types of cases into four categories'' (opinion~\texttt{1179312}), while the target says that warrantless inspection poses ``only a minimal threat to justifiable expectations of privacy'' (opinion~\texttt{1602980}). The pair moves from Prosser-style tort classification \citep{prosser1960privacy} toward post-\emph{Katz} expectation-of-privacy language \citep{katz1967unitedstates}. The preceding decade's leading pair moves from privacy as ``a direct wrong of a personal character'' (opinion~\texttt{2604478}) toward the same Prosser component, making tort articulation, systematic classification, and expectation-of-privacy language successive rather than one retrospectively chosen contrast.

\textbf{Search.} The source applies a workers' compensation rule requiring a ``conscientious work search'' and asks whether an unsuccessful search reflects inability to work (opinion~\texttt{1139619}). Targets describe a ``pat-down search'' and a ``search of a jacket'' (opinion~\texttt{197788}). The transition is from \emph{search} as a legally required effort to obtain work to physical inspection. The source is retained because work search is itself part of the legal disability test, not incidental job-seeking narration.

\textbf{Standing.} A source states that ``Standing trees are a part of the realty'' (opinion~\texttt{1331324}). Targets ask whether a plaintiff has the ``requisite standing to maintain the action'' and describe standing as a federal-court question under Article~III (opinions~\texttt{1513512}, \texttt{357908}). The pair separates the adjectival property sense from the jurisdictional doctrine.

\textbf{Warrant.} Sources state that facts ``may warrant an inference of negligence'' or ``warrant an allowance of punitive damages'' (opinions~\texttt{2083592}, \texttt{1437656}). Targets discuss ``probable cause for the issuance of a search warrant'' and a ``search warrant'' (opinions~\texttt{2216781}, \texttt{2167941}). Both pairs move from the verb licensing a conclusion to the noun denoting a judicial instrument.

\textbf{Trespass.} Sources plead a common-law trespass claim or state that ``a cognizable claim for trespass occurs when personal property \ldots{} is injured or taken'' (opinions~\texttt{2421595}, \texttt{1557045}). Targets describe a ``trespass-to-try-title action'' or an ``action in trespass to try title'' (opinions~\texttt{4293854}, \texttt{1889942}). The retrieved language supports redistribution between distinct legal constructions of \emph{trespass}, without assigning it to a landmark decision.

\noindent\textbf{Temporal separation of word-local modes.} Let $e_r$ be mode $r$'s share of word-local mean-path energy. Among the five modes with the largest path-energy shares, order the two leading shares as $e_{(1)}\geq e_{(2)}$ and denote their peak decades by $\tau_{(1)}$ and $\tau_{(2)}$. For lemmas passing the standardized-change threshold, define
\[
S_2=e_{(2)}\log(1+E_{\max})
\]
when $\tau_{(1)}\neq\tau_{(2)}$. If the two leading modes peak together, the profile is classified as synchronized and receives no $S_2$ rank. Multiplying the second mode's energy share by a monotone transform of $E_{\max}$ favors histories with both a substantial second direction and a clearly attributable movement. The score is fixed only to prioritize passage analyses of temporally separated histories. It does not estimate the model, test significance, or determine whether change occurred. Table~\ref{tab:app_court_secondary_mode} reports the ten highest values.

\begin{table}[!htbp]
\centering
\caption{Ten highest secondary-mode temporal-separation scores. $e_{(1)}$ and $e_{(2)}$ are the two leading shares among the five highest-energy mean-path modes.}
\label{tab:app_court_secondary_mode}
\small
\setlength{\tabcolsep}{4pt}
\begin{tabular}{@{}clcccc@{}}
\toprule
Rank & Lemma & $S_2$ & $e_{(1)}$ & $e_{(2)}$ & Mode peaks \\
\midrule
1 & privacy & $.837$ & $.391$ & $.301$ & $1960{\to}1970$ / $1950{\to}1960$ \\
2 & standing & $.788$ & $.406$ & $.297$ & $1960{\to}1970$ / $1970{\to}1980$ \\
3 & effect & $.721$ & $.452$ & $.299$ & $1990{\to}2000$ / $1970{\to}1980$ \\
4 & trespass & $.707$ & $.359$ & $.289$ & $1970{\to}1980$ / $1990{\to}2000$ \\
5 & taking & $.690$ & $.387$ & $.360$ & $2010{\to}2020$ / $2000{\to}2010$ \\
6 & due & $.683$ & $.371$ & $.294$ & $1980{\to}1990$ / $1990{\to}2000$ \\
7 & search & $.655$ & $.481$ & $.244$ & $1980{\to}1990$ / $1990{\to}2000$ \\
8 & cause & $.610$ & $.387$ & $.318$ & $1950{\to}1960$ / $1970{\to}1980$ \\
9 & content & $.581$ & $.372$ & $.308$ & $1960{\to}1970$ / $2000{\to}2010$ \\
10 & burden & $.567$ & $.328$ & $.308$ & $1970{\to}1980$ / $1990{\to}2000$ \\
\bottomrule
\end{tabular}
\end{table}

Privacy, standing, trespass, and search also appear in the enrichment ranking, where their selected transitions are grounded by the passage analyses above. The high positions of \emph{effect} and \emph{taking} are consistent with broad, polysemous histories. We retain their quantitative positions without assigning them a single landmark-case narrative. Complete values for all $41$ separated profiles are included in the supplementary material.

\textbf{Privacy modes.} Mode~2 ($30\%$) peaks at $1950{\rightarrow}1960$. Its leading pair moves from a right of privacy ``to be subordinated'' to the public interest (opinion~\texttt{1424064}) toward an ``actionable invasion'' of privacy, and the action ``sounds in tort'' (opinion~\texttt{1741636}). Another target locates eavesdropping privacy in the Fourth Amendment (opinion~\texttt{255989}). Modes~3 and~4 ($15\%$ and $11\%$) peak at $1960{\rightarrow}1970$ and are led by the Prosser-to-expectations transition. Mode~4 also connects \textit{Warden v. Hayden}'s statement \citep{warden1967hayden} that the Fourth Amendment protects the right of privacy, ``rather than any interest in the property seized'' (opinion~\texttt{1982603}), to ``no legitimate expectation of privacy'' in a cargo hold (opinion~\texttt{1378708}). Mode~1 ($39\%$) peaks at $1960{\rightarrow}1970$ and remains strong at $1970{\rightarrow}1980$. At the later step, one pair moves from the ``rights of privacy of practitioners of the healing arts and of their patients'' (opinion~\texttt{2597129}) toward statutory balancing of ``personal privacy'' against the ``public interest'' (opinion~\texttt{1677943}). Another connects family privacy under \emph{Griswold} and \emph{Roe} (opinion~\texttt{2142114}) to ``no legitimate expectation of privacy'' in plain view (opinion~\texttt{2070690}) \citep{griswold1965connecticut,roe1973wade}. The modes therefore recover staggered and partly concurrent tort, constitutional, informational, and search-privacy usages rather than one undifferentiated event.

\textbf{Expectation modes.} The three leading modes carry $48\%$, $26\%$, and $18\%$ of mean-path energy and all peak at $1960{\rightarrow}1970$. Their deterministically selected passages move from ``an expectation of pay or compensation'' and contributions made ``without expectation of services or direct benefits'' (opinions~\texttt{2226138}, \texttt{1636296}) toward ``a reasonable expectation of privacy infringed by the search and seizure'' and an area ``protected by an expectation of privacy'' (opinions~\texttt{1365600}, \texttt{1311660}). The contrast with \emph{privacy} therefore comes from a fixed temporal criterion rather than retrospective case selection.

\end{document}